\theoremstyle{plain}
\newtheorem{theorem}{Theorem}[section]
\newtheorem{corollary}[theorem]{Corollary}
\theoremstyle{definition}
\newtheorem{definition}[theorem]{Definition}
\theoremstyle{remark}
\def\Tr{{\rm Tr}}
\def\w{\underline{w}}
\newcommand{\E}{\mathbb{E}}
\newcommand{\pr}{\mathbb{P}}
\newcommand{\bR}{\mathbb{R}}
\newcommand{\cN}{\mathcal{N}}
\newtheorem{lemma}{Lemma}
\def\dde{{\rm d}}
\DeclareMathOperator{\Prox}{Prox}
\DeclareMathOperator{\R}{\mathbb{R}}
\DeclareMathOperator{\Cov}{\text{Cov}}
\DeclarePairedDelimiter\norm{\lVert}{\rVert}
\DeclareMathOperator*{\argmin}{argmin} 
\DeclareMathOperator*{\argmax}{argmax} 
\DeclareMathOperator{\erfc}{\text{erfc}}
\newcommand{\bS}{{\boldsymbol{S}}}
\newcommand{\bbR}{{\boldsymbol{R}}}
\newcommand{\bhS}{{\hat{\boldsymbol{S}}}}
\newcommand{\bU}{{\boldsymbol{U}}}
\newcommand{\bhQ}{{\hat{\boldsymbol{Q}}}}
\newcommand{\bhV}{{\hat{\boldsymbol{V}}}}
\newcommand{\bQ}{{\boldsymbol{Q}}}
\newcommand{\bV}{{\boldsymbol{V}}}
\newcommand{\bh}{{\boldsymbol{h}}}
\newcommand{\be}{{\boldsymbol{e}}}
\newcommand{\bu}{{\boldsymbol{u}}}
\newcommand{\bv}{{\boldsymbol{v}}}
\newcommand{\bZ}{{\boldsymbol{Z}}}
\newcommand{\bX}{{\boldsymbol{X}}}
\newcommand{\bY}{{\boldsymbol{Y}}}
\newcommand{\bT}{{\boldsymbol{T}}}
\newcommand{\bphi}{{\boldsymbol{\phi}}}
\newcommand{\bx}{{\boldsymbol{x}}}
\newcommand{\by}{{\boldsymbol{y}}}
\newcommand{\bA}{{\boldsymbol{A}}}
\newcommand{\bI}{{\boldsymbol{I}}}
\def\inp{d}
\def\nsamp{n}
\def\C{\bm{C}}
\def\i{\nu}
\def\w{\bm{w}}
\def\X{\bm{X}}
\def\y{\bm{y}}
\def\Y{\bm{Y}}
\def\V{\bm{V}}
\def\Vi{{\cal V}}
\def\Vib{\bar{{\cal V}}}
\def\nc{k}
\def\dd{\text{d}}
\def\prox{\text{prox}}
\def\gamm{\bm{\gamma}}
\def\Lamb{\bm{\Lambda}}
\def\Zout{{\cal Z}_{\text{out}}}
\def\Z0{{\cal Z}_0}
\def\fout{\bm{f}_{\text{out}}}
\def\fouti{\bm{f}_{\text{out},\i}}
\def\om{\bm{\omega}}
\appto\appendix{\counterwithin{equation}{section}}
\title{Learning curves for \\ the multi-class teacher-student perceptron}
\author[1,$\ast$]{Elisabetta Cornacchia}
\author[2,$\ast$]{Francesca Mignacco}
\author[3,4,$\ast$]{Rodrigo Veiga}
\author[5]{C\'edric Gerbelot}
\author[3]{Bruno Loureiro}
\author[6]{Lenka Zdeborov\'a}
\affil[1]{\small Ecole Polytechnique F\'{e}d\'{e}rale de Lausanne (EPFL). Mathematical Data Science (MDS) lab. \newline CH-1015 Lausanne, Switzerland.}
\affil[2]{\small Universit\'{e} Paris-Saclay, CNRS, CEA, Institut de physique th\'{e}orique, 91191, Gif-sur-Yvette, France.}
\affil[3]{\small Ecole Polytechnique F\'{e}d\'{e}rale de Lausanne (EPFL). 
Information, Learning and Physics (IdePHICS) lab. \newline CH-1015 Lausanne, Switzerland.}
\affil[4]{\small Universidade de S\~{a}o Paulo. Instituto de F\'{i}sica. S\~{a}o Paulo, Brazil.}
\affil[5]{\small Laboratoire de Physique de l’Ecole Normale Sup\'{e}rieure, Universit\'{e} PSL, CNRS, Paris,
France.}
\affil[6]{\small Ecole Polytechnique F\'{e}d\'{e}rale de Lausanne (EPFL).
Statistical Physics of Computation (SPOC) lab. \newline CH-1015 Lausanne, Switzerland.}
\affil[$\ast$]{Equal contribution.}
\date{}
\begin{document}

\maketitle

\begin{abstract}
   One of the most classical results in high-dimensional learning theory provides a closed-form expression for the generalisation error of binary classification with the single-layer teacher-student perceptron on i.i.d. Gaussian inputs.
   Both Bayes-optimal estimation and empirical risk minimisation (ERM) were extensively analysed for this setting. At the same time, a considerable part of modern machine learning practice concerns multi-class classification. Yet, an analogous analysis for the corresponding multi-class teacher-student perceptron was missing.
   In this manuscript we fill this gap by deriving and evaluating asymptotic expressions for both the Bayes-optimal and ERM generalisation errors in the high-dimensional regime.
   For Gaussian teacher weights, we investigate the performance of ERM with both cross-entropy and square losses, and explore the role of ridge regularisation in approaching Bayes-optimality.
   In particular, we observe that regularised cross-entropy minimisation yields close-to-optimal accuracy. 
   Instead, for a binary teacher we show that a first-order phase transition arises in the Bayes-optimal performance.
\end{abstract}

\tableofcontents

\section{Introduction}
\label{sec:intro}
Starting with the seminal work of Gardner and Derrida \cite{gardner1989three} the teacher-student perceptron is a broadly adopted and studied model for high-dimensional supervised binary (i.e., two classes) classification. In this model the input data are Gaussian independent identically distributed (i.i.d.) and a single-layer teacher neural network with randomly chosen i.i.d. weights from some distribution generates the labels. A student neural network then uses the input data and labels to \emph{learn} the teacher function. The corresponding generalisation error as a function of the number of samples per dimension $\alpha=n/d$ was first derived using the replica method from statistical physics in the limit $n,d \to \infty$ for a range of teacher weights distributions (Gaussian and Rademacher being the most commonly considered) and for a range of estimators, e.g., Bayes-optimal or empirical risk minimisation (ERM) with common losses, see reviews \cite{seung1992statistical,watkin1993statistical,engel2001statistical} and references therein. Notably, the phase transition in the optimal generalisation error of the teacher-student perceptron with Rademacher teacher weights \cite{gyorgyi1990first,sompolinsky1990learning} is possibly one of the earliest examples of the so-called \emph{statistical-to-computational trade-offs} that are currently broadly studied in high-dimensional statistics and inference. More recently, these works on the teacher-student perceptron have been put on rigorous ground in \cite{barbier2019optimal} for the Bayes-optimal estimation, and in \cite{aubin2020generalization} for ERM with convex losses. 

Modern machine learning classification tasks most often involve more than two classes, e.g., 10 for classification on MNIST or CIFAR10, or even 1000 in ImageNet. Multi-class classification is hence more commonly considered in practice. To the best of our knowledge, the analysis of the high-dimensional teacher-student perceptron has not been generalised to the multi-class setting yet. Closely related settings, such as high-dimensional multi-class classification with Gaussian mixture data \cite{loureiro2021learning,wang2021benign,9414099,thrampoulidis2020theoretical} were recently reported, while -- as far as we know -- the teacher-student setting is still missing. In this paper we fill this gap. We define the multi-class teacher-student perceptron model and provide the following {\bf main contributions}:

\begin{description}
\item[C1]  We derive, prove and evaluate an asymptotic closed-form expression for the generalisation error of the Bayes-optimal estimator in high-dimensions. In the case of Rademacher teacher weights we unveil a first-order phase transition in the learning curve, in analogy with the two-classes case. 
\item[C2] Similarly, we derive, prove and evaluate an asymptotic closed-form expression for the generalisation performance of ridge-regularised ERM with convex losses. In particular, we discuss and compare two widely used loss functions: the square and cross-entropy losses.
\item[C3] We compare optimally regularised cross-entropy classification to the Bayesian classifier, and conclude that for three classes the two are extremely close, in analogy with what was observed for two classes \cite{aubin2020generalization}. 
\end{description}

The main technical difficulty of analysing the teacher-student perceptron with $k>2$ classes is that the corresponding closed-form formulas are given in terms of a set of coupled self-consistent equations on $(k-1) \times (k-1)$ dimensional \emph{matrix variables} (a.k.a. \emph{order parameters}), involving $(k-1)$-dimensional integrals. This poses some challenges in both the mathematical proof and the numerical evaluation of their solution. In this work, we overcome these difficulties by leveraging on recent works with similar matrix structure, notably the committee machine \cite{aubin2019committee,barbier2021overlap} and the supervised $k$-cluster Gaussian mixture classification \cite{loureiro2021learning}. The heuristic replica method allows to derive a generic set of equations covering both the Bayes-optimal case and the ERM cases. The rigorous proof for the Bayes-optimal case is given in \cite{aubin2019committee,barbier2021overlap} based on an interpolation argument. Here we provide the proof for the asymptotic performance of the estimator obtained with convex ERM, employing a similar proof strategy as in \cite{loureiro2021learning}, which leverages on the rigorous analysis of matrix-valued approximate message passing iterations~\cite{javanmard2013state}.

\paragraph{Reproducibility ---} The code used for producing the figures in this manuscript are publicly available on \href{https://github.com/rodsveiga/mc_perceptron}{GitHub}.
 
\paragraph{The data model ---}
We consider a multi-class classification problem where the training data $\bm{X}=\left(\bm{x}_1,\ldots,\bm{x}_n\right)^\top\in\mathbb{R}^{n\times d}$ are composed of $n$ $d-$dimensional i.i.d. standard Gaussian samples, where $x_{\mu i}\sim\mathcal{N}\left( x_{\mu i } \rvert 0,1 \right)$, $\forall i\in\{1,\ldots, d\}$, $\forall\mu\in\{1,\ldots ,n\}$. The corresponding labels are $\bm{Y}=(\bm{y}_1,\ldots,\bm{y}_n)^\top\in\{0,1\}^{n\times k}$, each representing the one-hot encoding of one of $k$ possible classes. In particular, we assume the labels are generated by a \emph{teacher} matrix $\bm{W}^*=(\bm{w}_1^*,\ldots\bm{w}_k^*)\in\mathbb{R}^{d\times k}$ as 
\begin{equation}
y_{\mu l}=\begin{cases} 1 & {\rm if\;}
l =\argmax_{h \in \{ 1,.., k \} } \left( {\bm{w}^*_h }^\top \bm{x}_{\mu }\right)\\
0 & {\rm otherwise}
\end{cases},\label{eq:model}
\end{equation}
$\forall\mu\in\{1,\ldots n\}$. Figure \ref{fig:ml_perc_k} displays schematically the system. 
\begin{figure}[ht!]
\centering
\centerline{\includegraphics[width=0.6\textwidth]{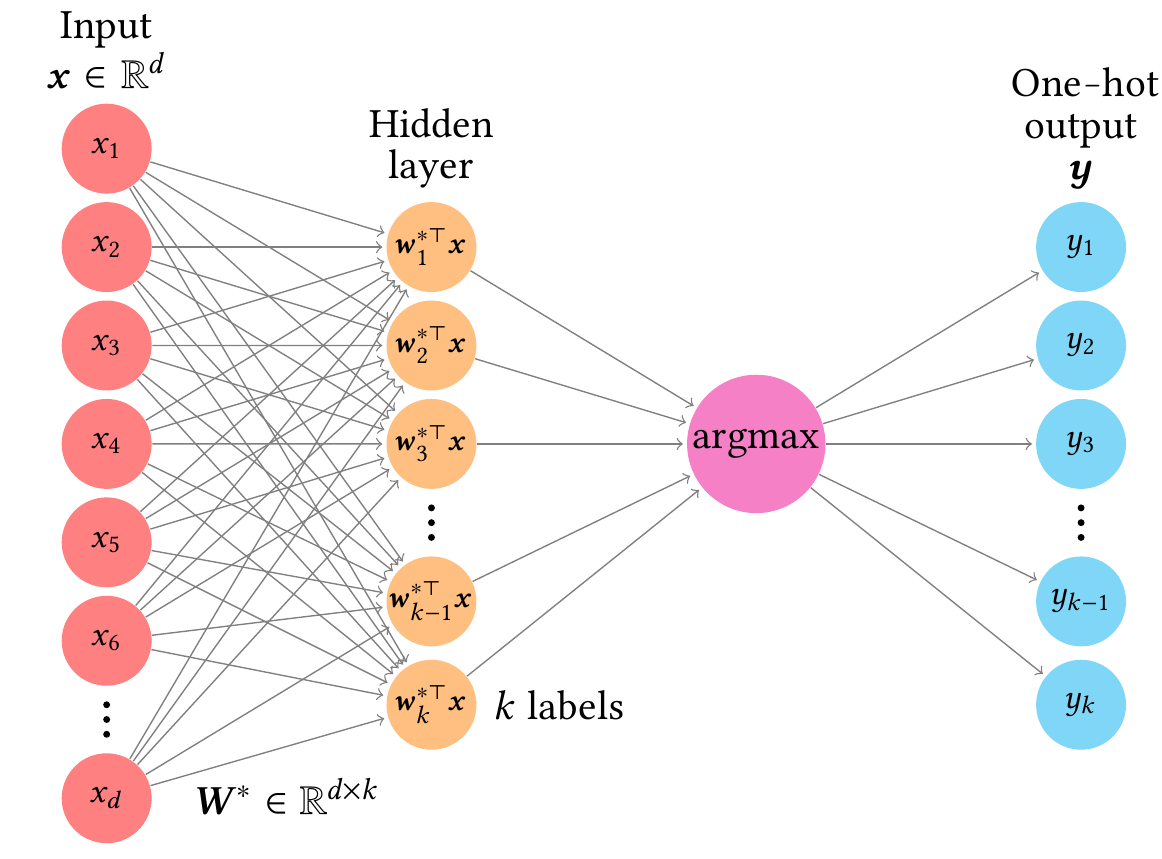}}
\caption{{\bf Multi-class perceptron:} Schematic representation of the multi-class classification problem defined in Eq.~\eqref{eq:model}.}
\label{fig:ml_perc_k}
\end{figure}
In the following, we will denote the output channel as $\phi_{\rm out} (\bm{v}) := \rm{e}_{\argmax_l(\{v_l\}_{l\in[k]})}\in\{0,1\}^{k}$, where $\rm{e}_h$ is the standard one-hot vector with $h$-th site equal to $1$ and all other entries equal to zero. The teacher matrix $\bm{W}^*$ is drawn with i.i.d. entries either from a standard Gaussian $w^*_{il} \sim \mathcal{N}( w^*_{il} \rvert 0,1 )$ or a Rademacher distribution $w^*_{il}=\pm 1$ with equal probability. Note that for $k = 2$ this problem corresponds to the well-studied perceptron problem with binary labels \cite{gardner1989three,engel2001statistical}. 

In what follows, we will be interested in the problem of \emph{learning} the teacher target function in the high-dimensional setting, where $n,d\rightarrow\infty$ at a fixed rate, or \emph{sample complexity}, $\alpha=n/d$, under two estimation procedures: empirical risk minimisation (ERM) and Bayes-optimal estimation.

\paragraph{Empirical risk minimisation (ERM) ---} In the first case, the statistician (or student) is given only the training data $(\bm{X}, \bm{Y})$, and her goal is to learn the teacher weights $\bm{W}^*$ with a multi-class perceptron model $\hat{\bm{y}}(\bm{x}) = \phi_{\rm{out}}(\bm{W}^{\top}\bm{x})$ by minimising a regularised empirical risk over the training set:
\begin{subequations}
 \label{eq:optimization_ERM}
\begin{equation} 
    \bm{\hat W}=\argmin_{\bm{W}\in\mathbb{R}^{d\times k}}\left[\mathcal{L}\left(\bm{W};\bm{X},\bm{Y}\right)+r_\lambda\left(\bm{W}\right)\right]\;,
\end{equation} 
\begin{equation}
    \mathcal{L}\left(\bm{W};\bm{X},\bm{Y}\right)=\sum_{\mu=1}^n\ell\left(\bm{W}^\top\bm{x}_\mu, \bm{y}_\mu\right) \;.
\end{equation}
\end{subequations}

The loss function $\ell$ accounts for the performance of the weight vector $\bm{W}$ over a single training point. Two widely used loss functions for multi-class classification are the cross-entropy loss $\ell(\bm{z},\bm{y})=-\sum_{l=1}^k y_l\cdot $ $\ln\left(e^{z_l}/\sum_{l=1}^k e^{z_l}\right)$ and the square loss $\ell(\bm{z},\bm{y})=(\bm{z}-\bm{y})^\top(\bm{z}-\bm{y})/2$. We will focus on the ridge regularisation function $r_\lambda(\bm{w})=\lambda\,\Vert\bm{W}\Vert_{F}^2/2$, where $\Vert \cdot \Vert_F$ indicates the Frobenius norm.

 \paragraph{Bayes-optimal estimator ---} In the second case, known as the \emph{Bayes-optimal} setting, the student has access not only to the training data but also to prior knowledge on the teacher weights distribution $P_{w}^*$ and on the model generating the inputs and the labels \eqref{eq:model}. In the teacher-student setting under consideration, where labels are generated by a noiseless channel, the Bayes-optimal estimator for the label $\bm{y}_{\rm new}$ of a previously unseen point $\bm{x}_{\rm new}$ can be computed directly from the Bayes-optimal estimator $\bm{\hat W}_{\rm BO}$ of the teacher weights as $\bm{\hat y}_{\rm new}=\phi_{\rm out}(\bm{\hat W}_{\rm BO}^\top\,\bm{x}_{\rm new})$. The matrix $\bm{\hat W}_{\rm BO}$ is the minimiser of the mean-squared error with respect to the ground-truth $\bm{W}^*$, i.e.,
\begin{equation}
\begin{split}
\label{eq:BO_estimator}
\bm{\hat W}_{\rm BO}&=\argmin_{\bm{W}}\E_{\bm{S},\bm{W}^*}\|\bm{W}-\bm{W}^* \|^2_F\\&=\E_{\bm{S},\bm{W}^*,\bm{W}|(\bm{S},\bm{W}^*)}\left[\bm{W}\right].
\end{split}
\end{equation}
Note that computing explicitly the Bayesian estimator requires computing the posterior distribution, which in general is unfeasible in high-dimensions. However, as we shall see, its performance can be characterised exactly in such limit. A key quantity in our derivation is the \emph{free entropy density}:
\begin{equation}
      \Phi=\lim_{d\rightarrow\infty}\frac 1d\E_{\bm{X},\bm{W}^*}\ln Z_d,\label{eq:free_entropy_def}
  \end{equation}
  where $Z_d$ is the \emph{partition function}, i.e., the normalisation factor of the posterior distribution over the weights
  \begin{equation}
  \label{eq:posterior}
      P\left(\bm{W}|\bm{X},\bm{Y}\right)=\frac{1}{Z_d}\prod_{l=1}^k P_w^*\left(\bm{w}_l\right) \prod_{\mu=1}^n \delta\left(\bm{y}_\mu -\phi_{\rm out}( \bm{W}^\top\bm{x}_\mu)\right).
  \end{equation}
  In the Bayes-optimal setting, the free entropy density is closely related to the mutual information density between the labels and the weights, see \cite{barbier2019optimal} for an explicit discussion of this connection.
  
\paragraph{Generalisation error ---} 
The performance of different optimisation strategies is measured through the \emph{generalisation error}, i.e. the expected error on a fresh sample. As it is commonly done for classification, in this work we will be interested in the misclassification rate (a.k.a. $0/1$ error):
\begin{equation}
    \varepsilon_{\rm gen}(\alpha)=\mathbb{E}_{\bm{x}_{\rm new},\bm{X},\bm{W}^*}\mathds{1}\left[\bm{\hat y}\left(\bm{\hat W}(\alpha)\right)\neq \bm{y}_{\rm new}\right],\label{eq:generalization_error_general}
\end{equation}
where $\bm{x}_{\rm new}$ is a previously unseen data point and $\bm{y}_{\rm new}$ is the corresponding label, generated by the teacher as in Eq.~\eqref{eq:model}. Similarly, the estimator $\bm{\hat y}$ is generated by the weight vector $\bm{\hat W}$, which in turn depends on the training set. We compare the performance obtained with ERM to the one achieved by the Bayes-optimal estimator from Eq.~\eqref{eq:BO_estimator}. Note that Eq.~\eqref{eq:generalization_error_general} for the Bayes-optimal error can be rewritten as
\begin{equation}
\begin{split}
    \varepsilon_{\rm gen}^{\text{Bayes}}&= \frac 12  \E_{\bm{S},\bm{x},\bm{W}^*}  \| \phi_{\rm out}({\bm{W}^*}^\top \bm{x}) - \phi_{\rm out}(\langle {\bm{W}}^\top \bm{x} \rangle) \|_2^2 \\
    &=1-\E_{\bm{S},\bm{x},\bm{W}^*} \left[ \phi_{\rm out}({\bm{W}^*}^\top \bm{x})^{\top}\phi_{\rm out}( {\bm{\hat W}_{\rm BO}}^\top \,\bm{x} )\right],
    \end{split}\label{eq:BO_error}
    \end{equation}
  where we have renamed for brevity $\bm{x}=\bm{x}_{\rm new}$, and $\langle \cdot\rangle = \E_{\bm{W}|(\bm{S},\bm{W}^*)}$, and we have used that $\|\phi_{\rm out}(\cdot)\|^2_2\equiv 1$. Since the distribution of $\bm{x}_{\rm new}$ is rotationally invariant, the averaged quantity $\E_{\bm{S},\bm{x}_{\rm new},\bm{W}^*} \Big[ \phi_{\rm out}({\bm{W}^*}^\top \bm{x}_{\rm new})^{\top} \cdot $ $ \phi_{\rm out}( {\bm{\hat W}_{\rm BO}}^\top \,\bm{x}_{\rm new} )\Big]$ only depends on the correlation between $\bm {W}^*$ and $\bm{\hat W}_{\rm BO}$, which as we will discuss later concentrates to the maximiser of the free entropy in Eq.~\eqref{eq:free_entropy_def} in the high-dimensional limit. 

\section{Main theoretical results}
From the definition of the generalisation error in \eqref{eq:generalization_error_general} and of the teacher model \eqref{eq:model}, it is easy to see that crucially the generalisation error only depends on the statistics of the $k$-dimensional quantities $({\bm{W}^{*}}^\top\bm{x}_{\rm new}, \hat{\bm{W}}^\top\bm{x}_{\rm new})\in\mathbb{R}^{k}\times \mathbb{R}^{k}$ (a.k.a. \emph{local fields}) -- both for Bayes-optimal estimation and ERM. Therefore, characterising the sufficient statistics of the local fields is equivalent to characterising the generalisation error. Our key theoretical result is that in the high-dimensional limit considered here the local fields are jointly Gaussian, and therefore the generalisation error only depends on the correlation $\bm{\bar m}_d$ between the teacher $\bm{ W}^*$ and the estimator $\bm{\hat W}$, and the covariances $\bm{Q}^*_d$ and $\bm{\bar q}_d$ of the teacher and the estimator respectively (a.k.a. the \emph{overlaps}):
\begin{subequations}
\label{eq:mqv}
 \begin{equation}
     \bm{\bar m}_d \equiv \frac{1}{d}\bm{\hat W}^\top \bm{W}^* \;,
 \end{equation}
  \begin{equation}
     \bm{\bar q}_d \equiv \frac{1}{d}\bm{\hat W}^\top \bm{\hat W}  \;,
 \end{equation}
  \begin{equation}
     \bm{Q}^*_d  \equiv \frac{1}{d} {\bm{W}^*}^\top \bm{W}^* \;.
 \end{equation}
\end{subequations}
Note that we keep the subscript $d$ to emphasize that these definitions are still in finite dimension and to distinguish them from the corresponding overlaps in the high-dimensional limit. As we will show next, these low-dimensional sufficient statistics can be computed explicitly by solving a set of coupled $(k-1)\times(k-1)$ self-consistent equations. 
\subsection{Performance of empirical risk minimization}
Our result holds under the following assumptions, in addition to the Gaussian hypothesis on the design matrix $\bm{X}$.
\vspace{1em}\\
\noindent 
\textbf{Assumptions:}
\begin{itemize}
\item[\textbf{(A1)}] the functions $\mathcal{L},r_{\lambda}$ are proper, closed, lower-semicontinuous, convex functions. The loss function $\mathcal{L}$ is differentiable and pseudo-Lipschitz of order 2 in both its arguments. We assume additionally that the regularisation $r_\lambda$ is strongly convex, differentiable and pseudo-Lipschitz of order 2;

\item[\textbf{(A2)}] the dimensions $n,d$ grow linearly according to the finite ratio $\alpha = n/d$;

\item[\textbf{(A3)}] the lines of the ground truth matrix $\mathbf{W^*} \in \mathbb{R}^{d \times k}$ are sampled i.i.d. from a sub-Gaussian probability distribution in $\mathbb{R}^{k}$.
\end{itemize}
\begin{theorem} \label{thm:ERM_main}
Let $\bm{\xi}\sim \cN_k(\bm{0},\bm{I}_k)$. Under (A1)-(A3), for any pair of pseudo-Lipschitz functions $\psi_{1} : \mathbb{R}^{d \times k} \to \mathbb{R}, \psi_{2} : \mathbb{R}^{n \times k} \to \mathbb{R}$ of order 2, the estimator $\hat{\bm{W}}$ and $\bm{\hat Z} = \frac{1}{\sqrt{d}} \mathbf{X}\bm{\hat W}$ satisfy the following:
\begin{subequations}
\begin{equation}
    \psi_{1}(\hat{\bm{W}}) \overset{P}{\longrightarrow} \mathbb{E}_{\bm \xi}\left[\mathcal{Z}^*_w(\bm{\hat m}\bm{\hat q}^{-1/2}\bm{\xi},\bm{\hat m}^T\bm{\hat q}^{-1}\bm{\hat m}) \psi_{1}\left(\bm{f}_w(\bm{\hat q}^{1/2}\bm{\xi},\bm{\hat V})\right)\right] \;,
\end{equation}
\begin{equation}
    \psi_2 (\bm{\hat Z}) \overset{P}{\longrightarrow} \mathbb{E}_{\bm{\xi}}\left[\int_{\mathbb{R}^{k}}\dd\bm{y}~\mathcal{Z}_{\rm out}^* (\bm{y},\bm{m}\bm{q}^{-1/2}\bm{\xi},\bm{Q}^*-\bm{m}^\top \bm{q}^{-1}\bm{m}) \psi_{2}\left(\bm{f}_{\rm out}(\bm{y},\bm{q}^{1/2}\bm{\xi},\bm{V})\right)\right] \;,
\end{equation}
\end{subequations}
where $\overset{P}{\to}$ denotes convergence in probability as $n,d \to \infty$
and the parameters $({\bm m},{\bm q},{\bm V})$ are the solution (assumed to be unique) of the following set of self-consistent equations  (where we introduced the auxiliary parameters $(\bm{ \hat m},\bm{ \hat q},\bm{ \hat V})$ :
\begin{align}
\label{eq:SE}
&\begin{cases}
    \bm{m}=&\;\mathbb{E}_{\bm \xi}\left[\mathcal{Z}^*_w (\bm{\hat m}\bm{\hat q}^{-1/2}\bm{\xi},\bm{\hat m}^T\bm{\hat q}^{-1}\bm{\hat m})\bm{f}^*_w(\bm{\hat m}\bm{\hat q}^{-1/2}\bm{\xi},\bm{\hat m}^T\bm{\hat q}^{-1}\bm{\hat m})\,\bm{f}_w(\bm{\hat q}^{1/2}\bm{\xi},\bm{\hat V})^\top\right],\\
    \bm{q}=&\;\mathbb{E}_{\bm \xi}\left[\mathcal{Z}^*_w(\bm{\hat m}\bm{\hat q}^{-1/2}\bm{\xi},\bm{\hat m}^T\bm{\hat q}^{-1}\bm{\hat m})\bm{f}_w(\bm{\hat q}^{1/2}\bm{\xi},\bm{\hat V})\bm{f}_w(\bm{\hat q}^{1/2}\bm{\xi},\bm{\hat V})^\top\right],\\
    \bm{V}=&\;\mathbb{E}_{\bm \xi}\left[\mathcal{Z}^*_w(\bm{\hat m}\bm{\hat q}^{-1/2}\bm{\xi},\bm{\hat m}^T\bm{\hat q}^{-1}\bm{\hat m})\partial_{\bm{\gamma}}\bm{f}_w(\bm{\hat q}^{1/2}\bm{\xi},\bm{\hat V})\right],
\end{cases}\notag\\
&\begin{cases}
\bm{\hat m} =&\;\alpha\,\mathbb{E}_{\bm{\xi}}\left[\int_{\mathbb{R}^{k}}\dd\bm{y}~\mathcal{Z}_{\rm out}^* (\bm{y},\bm{m}\bm{q}^{-1/2}\bm{\xi},\bm{Q}^*-\bm{m}^\top \bm{q}^{-1}\bm{m})\bm{f}_{\rm out}^*(\bm{y},\bm{m}\bm{q}^{-1/2}\bm{\xi},\bm{Q}^*-\bm{m}^\top \bm{q}^{-1}\bm{m})\,\bm{f}_{\rm out}(\bm{y},\bm{q}^{1/2}\bm{\xi},\bm{V})^\top \right],\\
\bm{ \hat q}=&\;\alpha\,\mathbb{E}_{\bm{\xi}}\left[\int_{\mathbb{R}^{k}}\dd\bm{y}~\mathcal{Z}_{\rm out}^* (\bm{y},\bm{m}\bm{q}^{-1/2}\bm{\xi},\bm{Q}^*-\bm{m}^\top \bm{q}^{-1}\bm{m})\bm{f}_{\rm out}(\bm{y},\bm{q}^{1/2}\bm{\xi},\bm{V})\,\bm{f}_{\rm out}(\bm{y},\bm{q}^{1/2}\bm{\xi},\bm{V})^\top \right],\\
\bm{\hat V} =&\;-\alpha\,\mathbb{E}_{\bm{\xi}}\left[\int_{\mathbb{R}^{k}}\dd\bm{y}~\mathcal{Z}_{\rm out}^* (\bm{y},\bm{m}\bm{q}^{-1/2}\bm{\xi},\bm{Q}^*-\bm{m}^\top \bm{q}^{-1}\bm{m})\partial_{\bm{\omega}} \bm{f}_{\rm out}(\bm{y},\bm{q}^{1/2}\bm{\xi},\bm{V})\right].
\end{cases}
\end{align}
We have made use of the following auxiliary functions:
\begin{subequations}
\label{eq:auxiliary_defs}
\begin{equation}
    \mathcal{Z}^{*}_w\left(\bm{\gamma},\bm{\Lambda}\right) =\E_{\bm{w}^*\sim P^*_w}\left[e^{-\frac 12 {\bm{w}^*}^\top\bm{\Lambda}{\bm{w}^*}+\bm{\gamma}^\top \bm{w}^*}\right]  \;,
\end{equation}
\begin{equation}
     \bm{f}^{*}_w\left(\bm{\gamma},\bm{\Lambda}\right) =\partial_{\bm{\gamma}}\ln \mathcal{Z}^{*}_w\left(\bm{\gamma},\bm{\Lambda}\right) \;,
\end{equation}
\begin{equation}
    \mathcal{Z}^{*}_{\rm out}\left(\bm{y},\bm{\omega},\bm{V}\right)=\E_{\bm{z}^*\sim \mathcal{N}(\bm{0},\bm{I}_{k})}\left[\delta\left(\bm{y}-\phi_{\text{out}}\left(\bm{V}^{1/2}\bm{z}^*+\bm{\omega}\right)\right)\right]  \;,
\end{equation}
\begin{equation}
    \bm{f}^{*}_{\rm out}\left(\bm{y},\bm{\omega},\bm{V}\right)= \partial_{\bm{\omega}}\ln \mathcal{Z}^{*}_{\rm out}\left(\bm{y},\bm{\omega},\bm{V}\right) \;,
\end{equation}
\begin{equation}
    f_{w}\left(\bm{\gamma},\bm{\Lambda}\right) = \prox_{\bm{\Lambda}^{-1}r_{\lambda}}\left(\bm{\Lambda}^{-1}\bm{\gamma}\right)  \;.
\end{equation}
\begin{equation}
    f_{\rm out}\left(\bm{y},\bm{\omega},\bm{V}\right)=\prox_{\bm{V}\ell(\cdot, \bm{y})}(\bm{\omega})  \;.
\end{equation}
\end{subequations}
where $P^*_{w}$ is the distribution (in $\mathbb{R}^{k}$) of the teacher weights and, for any function $f$
\begin{align}
    \prox_{\bm{\tau}f}(\bm{x}) = \underset{\bm{z}\in\mathbb{R}^{k}}{\argmin}\left[\frac{1}{2}(\bm{z}-\bm{x})^{\top}\bm{\tau}^{-1}(\bm{z}-\bm{x})+f(\bm{z})\right]
\end{align}
is a proximal operator (here defined with matrix parameters, see Appendix \ref{sec:req_back} for more detail). The simplified expressions of the auxiliary functions are provided in Appendix~\ref{app:denoising_fcts} and depend on the choices of the teacher weights distribution, the regularisation and the loss function. 
\end{theorem}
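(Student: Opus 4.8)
The plan is to follow the approximate message passing (AMP) route of \cite{loureiro2021learning}, adapted to the matrix-valued order parameters that appear here, and relying on the rigorous state evolution for matrix-valued AMP iterations of \cite{javanmard2013state}. The starting point is that \eqref{eq:optimization_ERM} is a generalised linear estimation problem with a $d\times k$ weight matrix: the design is the Gaussian matrix $\bm X/\sqrt d$, the signal on the student side is $\bm{\hat W}$, and the labels $\bm Y$ are a deterministic (argmax) function of the teacher pre-activations ${\bm W^*}^\top \bm x_\mu/\sqrt d$. First I would construct a generalised AMP iteration with matrix iterates --- estimates $\bm{\hat W}^t\in\mathbb{R}^{d\times k}$ and auxiliary field variables in $\mathbb{R}^{n\times k}$ --- whose Onsager-corrected updates use precisely the denoisers $\bm f_w=\prox_{\bm\Lambda^{-1}r_\lambda}$ and $\bm f_{\rm out}=\prox_{\bm V\ell(\cdot,\bm y)}$ of \eqref{eq:auxiliary_defs}, engineered so that \emph{any} fixed point of the iteration is a stationary point of the convex objective in \eqref{eq:optimization_ERM}. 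Strong convexity of $r_\lambda$ in (A1) then guarantees this stationary point is the unique minimiser $\bm{\hat W}$.

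\textbf{State evolution.} Next I would apply the matrix-valued state evolution theorem of \cite{javanmard2013state} (in the block/non-separable form used in \cite{loureiro2021learning}) to this iteration. It yields that, as $n,d\to\infty$ with $\alpha=n/d$ fixed, the empirical joint law of the rows of $(\bm W^*,\bm{\hat W}^t)$ and of the rows of $(\tfrac{1}{\sqrt d}\bm X\bm W^*,\tfrac{1}{\sqrt d}\bm X\bm{\hat W}^t,\bm Y)$ concentrate on explicit $k$-dimensional Gaussian laws whose covariances obey a deterministic recursion in the matrix parameters $(\bm m^t,\bm q^t,\bm V^t)$ and $(\bm{\hat m}^t,\bm{\hat q}^t,\bm{\hat V}^t)$. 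Evaluating pseudo-Lipschitz test functions against these laws reproduces, at finite $t$, the two displayed limits of the theorem with $(\bm m,\bm q,\bm V)$ replaced by the time-$t$ iterates; the reweighting by $\mathcal{Z}^*_w$ and $\mathcal{Z}^*_{\rm out}$ is exactly the Bayes change of measure induced on the teacher side by conditioning on the planted pre-activations (so that $\bm f^*_w$ is the posterior mean under $P^*_w$, whether Gaussian or Rademacher). The sub-Gaussian hypothesis (A3) supplies the moment control demanded by the state evolution, and (A2) is its scaling assumption.

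\textbf{Passing to the limit $t\to\infty$.} One then checks that the state evolution recursion converges to the solution of the self-consistent system \eqref{eq:SE}, which is assumed unique, and that correspondingly $\bm{\hat W}^t$ converges to the ERM minimiser. The latter is the delicate step, and I would handle it as in \cite{loureiro2021learning}: show the AMP trajectory is Cauchy via a stability estimate (using strong convexity, which renders the relevant fixed-point map a contraction on the order parameters after enough iterations), identify its limit with the KKT point of \eqref{eq:optimization_ERM}, and exchange the order of the limits $t\to\infty$ and $n,d\to\infty$ through a three-$\epsilon$ argument, bounding $\psi_1(\bm{\hat W})-\psi_1(\bm{\hat W}^t)$ by the pseudo-Lipschitz modulus times $\tfrac1{\sqrt d}\lVert\bm{\hat W}-\bm{\hat W}^t\rVert_F$, and likewise for $\psi_2$ applied to $\bm{\hat Z}=\tfrac1{\sqrt d}\bm X\bm{\hat W}$.

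\textbf{Main obstacle.} The principal difficulty relative to the scalar generalised linear case is the matrix bookkeeping: one must ensure $\bm q^t,\bm{\hat q}^t,\bm V^t,\bm{\hat V}^t$ remain strictly positive definite so that the square roots and inverses in the denoisers and in \eqref{eq:SE} are well defined all along the iteration, and track the non-commuting matrix products carefully through the Onsager correction terms. A secondary issue is the non-smoothness of $\phi_{\rm out}$ (the argmax channel): it enters only through $\mathcal{Z}^*_{\rm out}$, where the Gaussian integral over $\bm z^*$ smooths it, but one still has to verify the pseudo-Lipschitz regularity of $\bm f^*_{\rm out}$ required by \cite{javanmard2013state}, and possibly introduce a vanishing smoothing of $\phi_{\rm out}$ (together with a density argument for losses that are only pseudo-Lipschitz of order $2$) to stay within the hypotheses, removing it at the end by continuity.
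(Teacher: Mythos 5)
Your high-level strategy (matrix-valued AMP whose fixed point is the ERM stationary point, state evolution from \cite{javanmard2013state}, convergence of trajectories via strong convexity, then a three-$\epsilon$ exchange of the limits $t\to\infty$ and $n,d\to\infty$) is the same as the paper's, and the last two steps match the paper's Lemmas on converging trajectories and on the asymptotic characterisation of the minimiser. The genuine gap is in the reduction step. You propose to run the AMP directly on the original problem, with the teacher channel absorbed into the output denoiser, and to read off $\bm{m}$ from the state-evolution recursion. But the state-evolution theorem you invoke requires the Gaussian sensing matrix to be independent of the update functions, whereas here the labels $\bm{Y}=\phi_{\rm out}(\bm{X}\bm{W}^*/\sqrt{d})$ are built from the \emph{same} matrix $\bm{X}$ that drives the iteration; your brief remark about a ``Bayes change of measure induced by conditioning on the planted pre-activations'' gestures at the fix but does not supply it. The paper resolves this by conditioning on $\bm{X}\bm{W}^*$ and writing $\bm{X}=\bm{X}\mathbf{P}_{\bm{W}^{*}}+\tilde{\bm{X}}\mathbf{P}^{\perp}_{\bm{W}^{*}}$ with $\tilde{\bm{X}}$ an independent copy, introducing $\bm{m}$ and a Lagrange multiplier $\hat{\bm{m}}$ as explicit finite-dimensional variables (with a compactness lemma justifying strong duality), and running the AMP only on the inner problem in the orthogonal component $\bm{U}$ with the independent matrix $\tilde{\bm{X}}$, cf.\ Eqs.~\eqref{eq:AMP_target}--\eqref{student1}. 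In that construction the equations for $\bm{m}$ and $\hat{\bm{m}}$ in \eqref{eq:SE} do \emph{not} come out of the state-evolution recursion at all: they are the zero-gradient conditions of the outer saddle point, evaluated asymptotically through the inner characterisation. Without this (or an equivalent conditioning device), the direct application of \cite{javanmard2013state} in your step two is not justified.

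A secondary consequence of the paper's route is that your worry about the regularity of $\bm{f}^*_{\rm out}$ and the proposed vanishing smoothing of $\phi_{\rm out}$ are unnecessary: in the paper's AMP the non-linearities are Bregman resolvents of $\mathcal{L}$ and $r$ only (Lipschitz by convexity), the argmax channel enters solely through the fixed data $\bm{Y}$, and $\mathcal{Z}^*_{\rm out},\bm{f}^*_{\rm out}$ appear only at the very end when matching the six self-consistent equations to the replica notation. Your positive-definiteness concern for $\bm{q},\bm{\hat q},\bm{V},\bm{\hat V}$ is legitimate but is handled implicitly in the paper through the strong convexity of $r_\lambda$ and the properties of the D-resolvents; it is not the main obstacle.
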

\paragraph{Proof outline.} We now provide a short outline of the proof for the asymptotic performance of the estimator obtained with convex ERM. The idea, pioneered in \cite{bayati2011lasso}, is to express the estimator $\hat{\mathbf{W}}$ as the limit of a carefully chosen sequence whose iterates have an exact, rigorous asymptotic characterisation. Such a sequence can be designed using iterations of an approximate message passing (AMP) algorithm \cite{donoho2009message,zdeborova2016statistical} for which the statistics of the iterates are asymptotically characterised by a closed form, low dimensional iteration, the \emph{state evolution equations}. In our case we need a matrix-valued AMP iteration with separable update functions, rigorously covered in \cite{javanmard2013state,gerbelot2021graph}. One then needs to show that converging trajectories of the iteration can be systematically found, in order to properly define the limit of the sequence. Here this is made possible by the convexity of the problem. In the end, the performance of the estimator is shown to be characterised by the fixed point of the state evolution equations.
\subsection{Bayes-optimal performance}
\label{sec:bo}
The sufficient statistics describing the performance of the Bayes-optimal estimator \eqref{eq:BO_estimator} can also be derived in the high-dimensional limit, and are closely related to the free entropy density. Indeed, in Appendix \ref{sec:app:replica} we show that the Bayes-optimal estimator can be fully characterised by only one overlap matrix $\bm{q}\in\mathbb{R}^{k\times k}$ which is given by the solution of following extremisation problem:
\begin{subequations}
\label{eq:free_entropy}
\begin{equation}
     \Phi ={\rm extr}_{\bm{q},\bm{\hat q}}\left\{-\frac{1}{2}\Tr{\left[\bm{q}\bm{\hat q}\right]} + \Psi_w^*(\bm{\hat q})+\alpha \Psi^*_{\rm out}(\bm{q})\right\}  \;,
\end{equation}
\begin{equation}
    \Psi_w^*(\bm{\hat q})=\mathbb{E}_{\bm{\xi}}\left[\mathcal{Z}_{w}^*\left(\bm{\hat q}^{1/2}\bm{\xi},\bm{\hat q}\right)\ln\mathcal{Z}_{w}^*\left(\bm{\hat q}^{1/2}\bm{\xi},\bm{\hat q}\right)\right]  \;,
\end{equation}
\begin{equation}
    \Psi_{\rm out}^*(\bm{q})=\mathbb{E}_{\bm{\xi}}\left[\int_{\mathbb{R}^{k}}\dd\bm{y}~\mathcal{Z}_{\rm out}^*\left(\bm{y};\bm{q}^{1/2}\bm{\xi},\bm{Q}^*-\bm{q}\right)\ln\mathcal{Z}_{\rm out}^*\left(\bm{y};\bm{q}^{1/2}\bm{\xi},\bm{Q}^*-\bm{q}\right)\right]  \;,
\end{equation}
\end{subequations}
\noindent where $(\mathcal{Z}_{w}^{*}, \mathcal{Z}_{\text{out}}^{*})$ are the same auxiliary functions defined in Eqs.~\eqref{eq:auxiliary_defs}, and $\Phi$ is the free entropy density defined in Eq.~\eqref{eq:free_entropy_def}.
Looking for extremisers of the equation above leads to the following set of self-consistent equations:
\begin{align} 
\label{eq:SE_BO}
\begin{cases}
    \bm{q}=&\;\mathbb{E}_{\bm \xi}\left[\mathcal{Z}^{*}_{w}(\bm{\hat q}^{1/2}\bm{\xi},\bm{\hat q})\bm{f}^{*}_{w}(\bm{\hat q}^{1/2}\bm{\xi},\bm{\hat q})\bm{f}^{*}_{w}(\bm{\hat q}^{1/2}\bm{\xi},\bm{\hat q})^\top\right],\\
   \bm{ \hat q}=&\;\alpha\,\mathbb{E}_{\bm{\xi}}\left[\int_{\mathbb{R}^{k}}\dd\bm{y}~\mathcal{Z}^{*}_{\text{out}} (\bm{y};\bm{q}^{1/2}\bm{\xi},\bm{Q}^*-\bm{q})\bm{f}^{*}_{\text{out}}(\bm{y};\bm{q}^{1/2}\bm{\xi},\bm{Q}^*-\bm{q})\,\bm{f}_{\rm out}^*(\bm{y};\bm{q}^{1/2}\bm{\xi},\bm{Q}^*-\bm{q})^\top \right],
   \end{cases}
\end{align}
\noindent where, again, $(\bm{f}_{w}^{*}, \bm{f}_{\text{out}}^{*})$ are the same as defined in Eqs.~\eqref{eq:auxiliary_defs}. Note the similarity between the equations \eqref{eq:SE_BO} above and the ones characterising the sufficient statistics of ERM, Eq.~\eqref{eq:SE}. Indeed, the equations above can be obtained from the ones of ERM through the following mapping, known in the context of statistical physics as \emph{Nishimori conditions} \cite{Nishimori2001StatisticalPO}:
\begin{align}
\label{eq:nishimori}
    &\bm{f}_{w} \to \bm{f}_{w}^{*}, && \bm{f}_{\rm out} \to \bm{f}_{\rm out}^*,\\
    &\bm{m} \to \bm{q}, && \bm{\hat m}\to \bm{\hat q},\\
    &\bm{V} \to \bm{Q}^* -\bm{q}, &&  \bm{\hat V} \to \bm{\hat Q}^* +\bm{\hat q}.
\end{align}
Intuitively, the student's additional knowledge of the data generating process is translated by choosing the same set of auxiliary functions as the teacher. These conditions imply, on average, no statistical difference between the ground truth configuration and a configuration sampled uniformly at random from the posterior distribution. Therefore, in the Bayes-optimal setting there is no distinction between the teacher-student overlap and the student self-overlap. This connection is further discussed in Appendix~\ref{sec:app:replica}, where we show that both set of equations can be derived from a common framework.

Despite the close similarity between the two sets of self-consistent equations, note one key difference: the set of extrema in Eq.~\eqref{eq:SE_BO} is not necessarily a single point. This means that, differently from Eqs.~\eqref{eq:SE}, the fixed-point of the self-consistent equations \eqref{eq:SE_BO} might not be unique. In this case, it is important to stress that the overlap $\bm{q}$ corresponding to the Bayes-optimal estimator \eqref{eq:BO_estimator} is, by definition, the one with highest free entropy density. Therefore, the Bayes-optimal generalisation error is then evaluated by finding the fixed point of equations~\eqref{eq:SE_BO} that maximises the free entropy \eqref{eq:free_entropy}. 

A proof of this claim and of equations~\eqref{eq:SE_BO} and \eqref{eq:free_entropy} for the Bayes-optimal performance was done by~(\cite{aubin2019committee}, Theorem 3.1, and \cite{barbier2021overlap}) in the setting of the committee machine, by an interpolation method that shows the correctness of the replica prediction for the free-entropy of the system.  Their proof applies to teacher-student committee machines with bounded output channel, prior distribution with finite second moment and Gaussian i.i.d. inputs. Therefore, it applies to the multi-class perceptron of our setting, both with Gaussian and Rademacher prior. 

\subsection{Generalisation error}
The characterisation of the generalisation error in the high-dimensional limit is a direct consequence of Theorem~\ref{thm:ERM_main}.
\begin{corollary} \label{cor:gen_error_ERM}
In the high-dimensional limit the asymptotic generalisation error associated to the ERM estimator \eqref{eq:optimization_ERM} can be expressed only as a function of the parameters $(\bm{m},\bm{q})$ obtained by solving the self-consistent equations \eqref{eq:SE}:
\begin{equation} \label{eq:generalization_ERM}
    \varepsilon_{\rm gen}= \pr_{(\bm{\nu},\bm{\mu})\sim\mathcal{N}(\bm{0}, \bm{\Sigma})} \left( \phi_{\rm out}(\bm{\mu}) \neq \phi_{\rm out}(\bm{\nu}) \right),
\end{equation}
where $\bm{\Sigma} = \begin{bmatrix}\bm{Q}^* & \bm{m} \\ \bm{m} & \bm{q}\end{bmatrix}$.
\end{corollary}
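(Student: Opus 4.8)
We work under the assumptions (A1)--(A3) of Theorem~\ref{thm:ERM_main}, so that $\hat{\bm W}$ is uniquely defined and the conclusions of that theorem hold. The plan is to reduce the misclassification probability to a statement about the joint law of the teacher and student local fields on a fresh input, and then feed in the convergence of the overlaps supplied by Theorem~\ref{thm:ERM_main}. Fix the training data, hence $\hat{\bm W}$, and recall $\bm x_{\rm new}\sim\mathcal{N}(\bm 0,\bm I_d)$ is independent of $(\bm X,\bm W^*,\hat{\bm W})$. Put $\bm\nu_d:=d^{-1/2}{\bm W^*}^\top\bm x_{\rm new}$ and $\bm\mu_d:=d^{-1/2}\hat{\bm W}^\top\bm x_{\rm new}$ in $\mathbb{R}^k$. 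Since $\phi_{\rm out}$ is invariant under multiplication of its argument by a positive scalar, $\bm y_{\rm new}=\phi_{\rm out}(\bm\nu_d)$ and $\hat{\bm y}=\phi_{\rm out}(\bm\mu_d)$. Conditionally on the training data, $(\bm\nu_d,\bm\mu_d)$ is a linear image of a standard Gaussian vector, hence exactly $\mathcal{N}(\bm 0,\bm\Sigma_d)$ with $\bm\Sigma_d=\begin{bmatrix}\bm Q^*_d & \bar{\bm m}_d^\top\\\bar{\bm m}_d & \bar{\bm q}_d\end{bmatrix}$ in the notation of~\eqref{eq:mqv}. Defining $F(\bm\Sigma):=\pr_{(\bm\nu,\bm\mu)\sim\mathcal{N}(\bm 0,\bm\Sigma)}(\phi_{\rm out}(\bm\mu)\neq\phi_{\rm out}(\bm\nu))$, we then obtain the exact identity $\varepsilon_{\rm gen}(\alpha)=\mathbb{E}_{\bm X,\bm W^*}[F(\bm\Sigma_d)]$, and it remains to show $\mathbb{E}[F(\bm\Sigma_d)]\to F(\bm\Sigma)$ with $\bm\Sigma$ as in the statement.

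Next I would show $\bm\Sigma_d\overset{P}{\to}\bm\Sigma$. The teacher block $\bm Q^*_d=d^{-1}{\bm W^*}^\top\bm W^*\to\bm Q^*=\mathbb{E}_{\bm w^*\sim P^*_w}[\bm w^*{\bm w^*}^\top]$ by the law of large numbers under (A3) (and equals $\bm I_k$ for the Gaussian and Rademacher priors). For the self-overlap block, each entry $(\bar{\bm q}_d)_{ab}=d^{-1}\sum_{i=1}^d\hat W_{ia}\hat W_{ib}$ is $\psi_1(\hat{\bm W})$ for the separable choice $\psi_1(\bm W)=d^{-1}\sum_i W_{ia}W_{ib}$, which is pseudo-Lipschitz of order $2$; Theorem~\ref{thm:ERM_main} then gives $(\bar{\bm q}_d)_{ab}\overset{P}{\to}\mathbb{E}_{\bm\xi}[\mathcal{Z}^*_w(\cdot)\,(\bm f_w)_a(\bm f_w)_b]=q_{ab}$ by the second line of~\eqref{eq:SE}. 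The cross-overlap block $\bar{\bm m}_d=d^{-1}\hat{\bm W}^\top\bm W^*$ is obtained in the same way from a test function of the two rows $(\hat W_{i\cdot},W^*_{i\cdot})$, namely $(\bm u,\bm v)\mapsto u_a v_b$: the reweighting by $\mathcal{Z}^*_w$ appearing in Theorem~\ref{thm:ERM_main} is precisely the planted change of measure under which the row $\bm w^*$ has conditional mean $\bm f^*_w$, so that $(\bar{\bm m}_d)_{ab}\overset{P}{\to}\mathbb{E}_{\bm\xi}[\mathcal{Z}^*_w(\cdot)\,(\bm f_w)_a(\bm f^*_w)_b]=m_{ab}$, matching the first line of~\eqref{eq:SE}; this joint convergence of $(\hat{\bm W},\bm W^*)$ is tracked by the AMP state-evolution recursion underlying the proof of Theorem~\ref{thm:ERM_main}. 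Hence $\bm\Sigma_d\overset{P}{\to}\bm\Sigma$, using that the fixed point of~\eqref{eq:SE} is symmetric, $\bm m=\bm m^\top$, so that the limiting covariance is exactly the $\bm\Sigma$ of the statement.

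Finally I would verify that $F$ is continuous at $\bm\Sigma$ and conclude. Write $F(\bm\Sigma)=\mathbb{E}_{\bm\zeta\sim\mathcal{N}(\bm 0,\bm I_{2k})}[g(\bm\Sigma^{1/2}\bm\zeta)]$ with $g(\bm\nu,\bm\mu)=\mathds{1}[\argmax_l\mu_l\neq\argmax_l\nu_l]$. The function $g$ is bounded by $1$ and continuous off the finite union of hyperplanes $D=\bigcup_{i<j}\{\nu_i=\nu_j\}\cup\bigcup_{i<j}\{\mu_i=\mu_j\}$, and $D$ is $\mathcal{N}(\bm 0,\bm\Sigma)$-null whenever $\bm\Sigma\succ0$: indeed $\bm Q^*\succ0$ forces $\mathrm{Var}(\nu_i-\nu_j)>0$, and for ERM the Schur complement $\bm q-\bm m^\top(\bm Q^*)^{-1}\bm m\succ0$ forces $\bm q\succ0$, hence $\mathrm{Var}(\mu_i-\mu_j)>0$. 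Consequently, if $\bm\Sigma_d\to\bm\Sigma$ then $\bm\Sigma_d^{1/2}\bm\zeta\to\bm\Sigma^{1/2}\bm\zeta$ almost surely and $g$ is almost surely continuous at the limit, so dominated convergence gives $F(\bm\Sigma_d)\to F(\bm\Sigma)$, i.e.\ $F$ is continuous at $\bm\Sigma$. Combining this with $\bm\Sigma_d\overset{P}{\to}\bm\Sigma$ via the continuous mapping theorem yields $F(\bm\Sigma_d)\overset{P}{\to}F(\bm\Sigma)$, and since $0\le F(\bm\Sigma_d)\le1$, bounded convergence upgrades this to $\mathbb{E}[F(\bm\Sigma_d)]\to F(\bm\Sigma)$, which is~\eqref{eq:generalization_ERM}. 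The only mildly delicate points are the joint convergence of the cross-overlap $\bar{\bm m}_d$ — which requires invoking the state-evolution statement behind Theorem~\ref{thm:ERM_main} rather than only its blackbox form — and the non-degeneracy of $\bm\Sigma$ used for the continuity of $F$; the latter is generic and holds automatically for the priors and strongly convex regularisers considered here.
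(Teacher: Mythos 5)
Your proposal is correct, and it is in fact more careful than the argument the paper gives. The paper disposes of the corollary in one sentence, asserting that $\psi(\bm z)=\mathds{1}\left(\phi_{\rm out}(\bm z)\neq\phi_{\rm out}(\bm v)\right)$ is pseudo-Lipschitz and therefore covered directly by Theorem~\ref{thm:ERM_main}; taken literally this is shaky, since the indicator is discontinuous across the decision boundaries, and the theorem's second statement concerns the training-set fields $\hat{\bm Z}=\bm X\hat{\bm W}/\sqrt{d}$ rather than a fresh sample. You instead route the argument through (i) the exact conditional Gaussianity of the fresh-sample local fields with covariance $\bm\Sigma_d$ built from the finite-$d$ overlaps of Eq.~\eqref{eq:mqv}, (ii) convergence $\bm\Sigma_d\overset{P}{\to}\bm\Sigma$ obtained by testing Theorem~\ref{thm:ERM_main} on quadratic (hence genuinely pseudo-Lipschitz of order 2) observables, and (iii) continuity of the Gaussian misclassification probability in the covariance, proved via almost-sure continuity of the argmax indicator off a Gaussian-null union of hyperplanes plus dominated convergence. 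This buys rigour where the paper's sketch is loose, at the cost of two caveats you correctly flag: the cross-overlap $\bar{\bm m}_d$ involves a joint functional of $(\hat{\bm W},\bm W^*)$, which the stated $\psi_1$-form of the theorem does not literally cover and must be read off the underlying state-evolution characterisation (the $\mathcal{Z}^*_w$-tilt playing the role of the planted conditional law of $\bm w^*$), and the continuity step needs the nondegeneracy of $\bm\Sigma$ (positivity of $\mathrm{Var}(\nu_i-\nu_j)$ and $\mathrm{Var}(\mu_i-\mu_j)$), which you justify only generically; both points are equally implicit in the paper's treatment, so neither constitutes a gap relative to it. You also silently fix the paper's notational slip by noting $\bm m=\bm m^\top$ at the fixed point so that the off-diagonal blocks of $\bm\Sigma$ can both be written as $\bm m$.
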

The proof of Corollary~\ref{cor:gen_error_ERM} is straightforward and follows by noticing that for any $\bm v$, the function $\psi(\bm{z}) = \mathds{1}(\phi_{\rm out}(\bm{z}) \neq \phi_{\rm out}(\bm{v}))$ is pseudo-Lipschitz.

As one can expect from the discussion in Section \ref{sec:bo}, the Bayes-Optimal generalisation error is obtained by a similar, but simpler expression depending only on the overlap $\bm{q}$, obtained by extremising~\eqref{eq:free_entropy}:
\begin{equation} \label{eq:generalization_Bayes}
    \varepsilon_{\rm gen}= \pr_{\bm{\xi}\sim \cN(\bm{0}, \bm{I}_k)} \left( \phi_{\rm out}(\bm{q}^{1/2}\bm{\xi} ) \neq \phi_{\rm out} ({\bm{Q}^*}^{1/2}\bm{\xi}) \right).
\end{equation}
\section{Approximate message-passing algorithm}
\label{sec:amp}
\begin{algorithm}[ht!]
   \caption{Approximate message passing}
   \label{alg:amp}
\begin{algorithmic}
   \State {\bfseries Input:} data matrix $ \X \in \R^{\nsamp\times\inp}$ and label matrix $ \Y \in \R^{\nsamp\times k} $
   \State  Initialise $ \hat{\w}^{0}_j  \in \R^{k} $ and $\hat{\C}_{j}^0 \; , \; \fouti^0 \in \R^{k \times k}  $
    \State  for $j = 1,..., \inp$  and $\i = 1, ..., \nsamp$ at $t=0$
   \Repeat
   \State \underline{Channel updates}
    \vspace{0.05cm}
   \State Mean $\bm{\om}_\i \in \R^{k} $ and variance $\V_\i \in \R^{k\times k}$ 
   \vspace{0.15cm}
   \State $ \V^{t}_\i  = \frac{1}{\inp} \sum_{j=1}^\inp x_{j\i}^2  \hat{\C}_{j}^t   $
    \vspace{0.05cm}
   \State $ \om_{\i}^t = \frac{1}{\sqrt{\inp}}   \sum_{l=1}^\inp \hat{\w}_{j}^t 
   - (  \V^{t}_\i )^\top  \fouti^{t-1}   $
   \vspace{0.15cm}
   \State Denoisers $  \fouti \in \R^{k} $ and  $ \partial_{\om}  \fouti \in \R^{k \times k} $  
   \vspace{0.15cm}
   \State $\fouti^t \leftarrow \fout \left( \y_\i , \om_{\i}^t , \V^{t}_\i  \right)      $
    \vspace{0.05cm}
   \State $  \partial_{\om}  \fouti^t \leftarrow  \partial_{\om} \fout  \left( \y_\i , \om_{\i}^t , \V^{t}_\i  \right)     $
   \vspace{0.25cm}
    \State \underline{Prior updates}
     \vspace{0.05cm}
   \State Mean $ \gamm_j \in \R^{k} $ and variance $\Lamb_j \in \R^{k\times k}$ 
   \vspace{0.15cm}
     \State $ \Lamb^{t}_j  = -  \frac{1}{\inp} \sum_{\i=1}^\nsamp  x_{j \i }^2 \partial_{\om} 
     \fouti^t  $
       \vspace{0.05cm}
   \State $ \gamm^{t}_j  =   \frac{1}{\sqrt{\inp}} \sum_{\i=1}^\nsamp  x_{j \i } \fouti^t  
    + \Lamb_{j}^t \hat{\w}^{t}_j   $
      \vspace{0.15cm}
      \State Posterior estimators $ \hat{\w}_j \in \R^{k} $ and $ \hat{\C}_j \in \R^{k \times k} $
    \vspace{0.15cm}
   \State $\hat{\w}_{j}^t  = f_{\bm w}\left(  \gamm^{t}_j , \Lamb_{j}^t   \right)    $ 
   \vspace{0.05cm}
   \State $\hat{\C}_{j}^t  = \partial_{\gamm} f_{\bm{w}}  \left( \gamm^{t}_j , \Lamb_{j}^t    \right)    $ 
     \vspace{0.25cm}
   \State $t \leftarrow t + 1$
    \vspace{0.15cm}
   \Until Convergence on $\hat{\w}_j$ and $\hat{\C}_j$
    \State {\bfseries Output:}  $ \{ \hat{\w}_j \}_{j=1}^{\inp} $ and $\{ \hat{\C}_j \}_{j=1}^{\inp}$.
\end{algorithmic}
\end{algorithm}
In order to illustrate our theoretical results for the performance of the Bayesian \eqref{eq:BO_estimator} and ERM \eqref{eq:optimization_ERM} estimators, we would like to compare our asymptotic expressions for the generalisation error with finite instance simulations. On one hand, the regularised empirical risk defined in \eqref{eq:optimization_ERM} is strongly convex, and therefore it can be readily minimised with any descent-based algorithm such as gradient descent or stochastic gradient descent. Indeed, in the ERM simulations that follow we employ out-of-the-box multi-class solvers from Scikit learn \cite{scikit} to assess our theoretical result from Theorem \ref{thm:ERM_main}. On the other hand, explicitly computing the Bayesian estimator \eqref{eq:BO_estimator} requires sampling from the posterior, an operation which is prohibitively costly in high-dimensions. Instead, in this manuscript we employ an \emph{Approximate Message Passing} (AMP) algorithm to efficiently approximate the posterior marginals. AMP has several interesting properties which make it a popular tool in the study of random problems. First, it is proven to be optimal among a class of random estimation problems \cite{celentano2020estimation}, and for this reason it is widely used as a benchmark to assess algorithmic complexity. Second, it admits a set of scalar \emph{state evolution equations} allowing to track its performance in high-dimensions \cite{javanmard2013state}. 

For the Bayes-optimal estimation problem considered here, AMP is summarised by the pseudo-code in Algorithm \ref{alg:amp}. It follows the well-known AMP algorithm for generalised linear estimation \cite{donoho2009message, rangan2011generalized}, which takes advantage of the high-dimensional limit $d \to\infty$ by approximating the posterior distribution \eqref{eq:posterior} by a multivariate Gaussian distribution through a belief propagation procedure expanded in powers of $ d^{-1}$. The difference is that the estimators $ \hat{\bm{w}}_j  $ are $k$-dimensional vectors and their variances  $ \hat{\bm{C}}_j  $ are $k\times k$ dimensional matrices, with $j=1,\ldots, d$. Both channel and prior updated functions, $\bm{f}_\text{out}$ and $\bm{f}_{\bm{w}}$, respectively, are defined in Appendix \ref{app:denoising_fcts}. For a detailed derivation of the algorithm, see \cite{aubin2020}. 

Several versions of this $k$-fold AMP and the associated state evolution appeared in previous works, e.g. \cite{aubin2019committee}. It can be shown that the state evolution equations associated to Algorithm \ref{alg:amp} for Bayes-optimal estimation coincide exactly with the self-consistent equations \eqref{eq:SE_BO} presented in Section \ref{sec:bo} starting from an \emph{uninformed initialisation} $\bm{q}_0  \approx \bm{0}$ \cite{aubin2019committee}. This interesting property implies that when the extremisation problem in Eq.~\eqref{eq:free_entropy} has only one extremiser, AMP provides an exact approximation to the Bayes-optimal estimator in the high-dimensional limit. Instead, when there are more than one maxima in Eq.~\eqref{eq:free_entropy}, AMP will converge to an estimator with overlap $\bm{q}$ closest to the uninformed initial condition. If this is not the global maxima, this corresponds to a situation where AMP differs from the Bayes-optimal estimator. Since AMP provides a bound on the performance of first-order algorithms, this situation is an example of an \emph{algorithmic hard phase}, where it is conjectured that the statistical optimal performance cannot be achieved by algorithms running in time $\sim O(d^2)$.

We have implemented Algorithm \ref{alg:amp} for $k=3$ using the mapping presented in Appendix \ref{appendix:mapping}, which makes the estimators $(k-1)$-dimensional vectors and their variances $(k-1)\times (k-1)$ dimensional matrices. The detailed expressions for the computation of the denoising functions, as well as the integrals to be numerically evaluated are presented in Appendix \ref{app:amp}. 
\begin{figure*}[ht!]
\centering
\begin{subfigure}[t]{.45\textwidth}
  \centering
  \includegraphics[width=\linewidth]{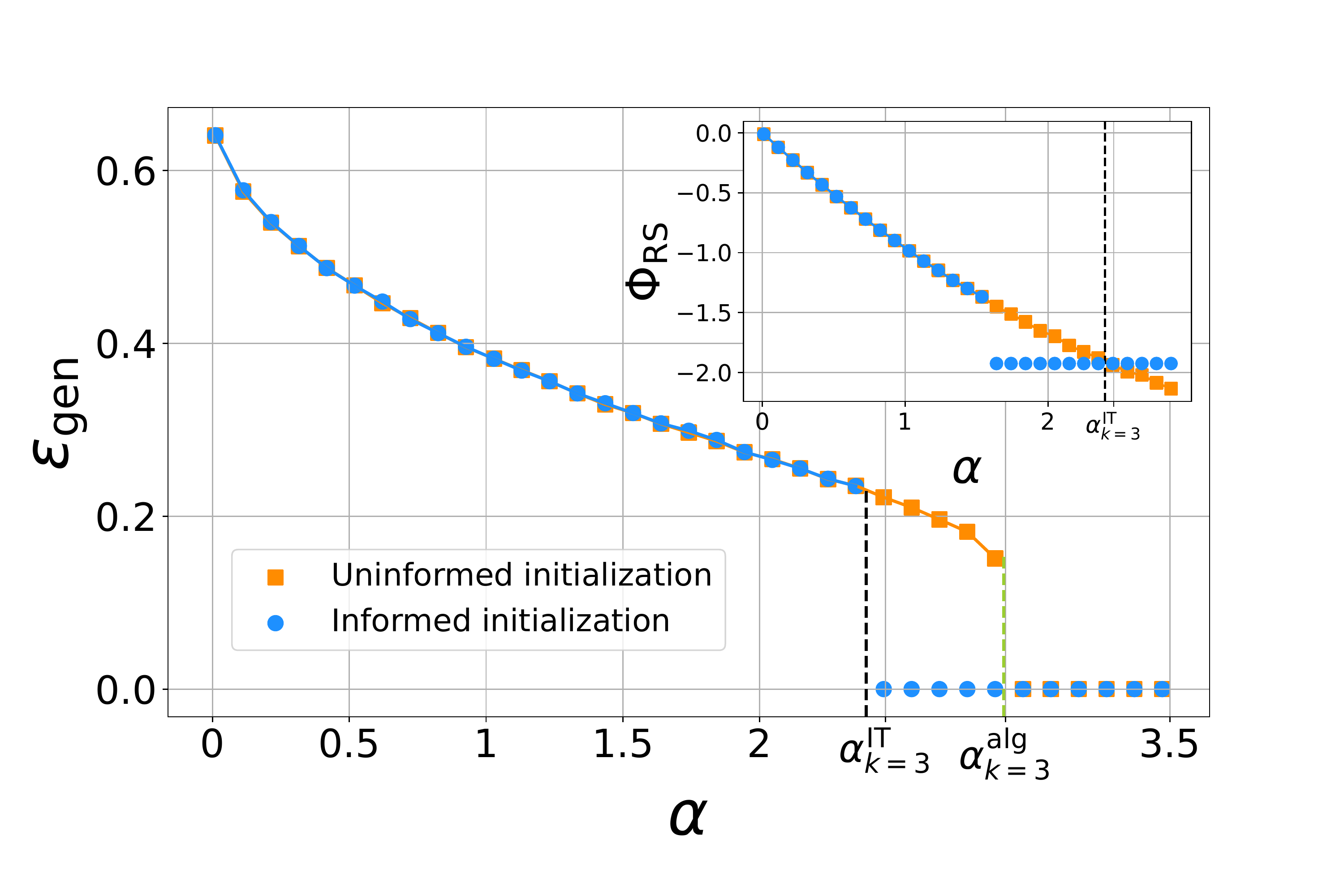}
  \caption{Generalisation error $\varepsilon_{\rm gen}$ as a function of the sample complexity $\alpha$ evaluated using Eqs.~(\ref{eq:generalization_Bayes}). The orange points correspond to the error that would be asymptotically reached by the randomly initialised AMP algorithm. The blue points mark the Bayes-optimal generalisation error. The inset depicts the corresponding free entropies as a function of $\alpha$, their crossing locating the information-theoretic transition to perfect generalisation at $\alpha^{\rm IT}_{k=3} = 2.45$. AMP reaches perfect generalisation starting from $\alpha^{\rm alg}_{k=3} = 2.89$. }
  \label{fig:k=3binary_error}
\end{subfigure}%
\hspace{0.5cm}
\begin{subfigure}[t]{.45\textwidth}
  \centering
  \includegraphics[width=\linewidth]{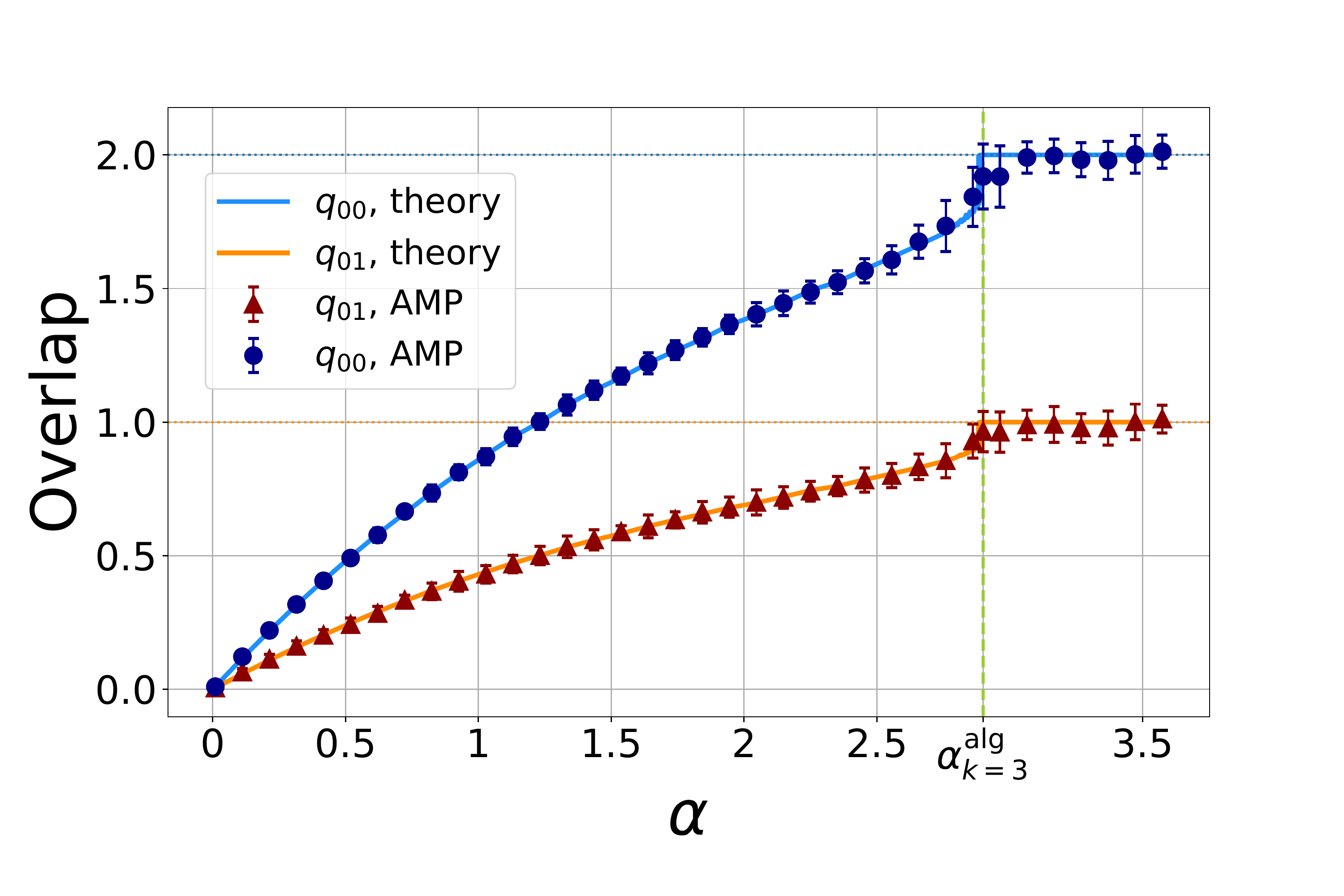}
  \caption{Diagonal ($q_{00}$) and anti-diagonal ($q_{01}$) entries of the self-overlap $2\times 2$ matrix $\bm{q}$ as a function of the sample complexity $\alpha$ in the Bayes-optimal setting. The full lines mark the fixed points of the saddle point equations of Eq.~\eqref{eq:SE_BO}, while the symbols represent
  the result obtained from the AMP algorithm described in Section~\ref{sec:amp} averaged over 20 runs. The error bars represent the respective standard deviations. }
  \label{fig:k=3binary_overlap}
\end{subfigure}
\caption{Bayes-optimal performance in the case of Rademacher teacher weights with $k=3$ classes.}
\label{fig:k=3binary}
\end{figure*}
\begin{figure}[t!]
\centering
\includegraphics[scale=.35]{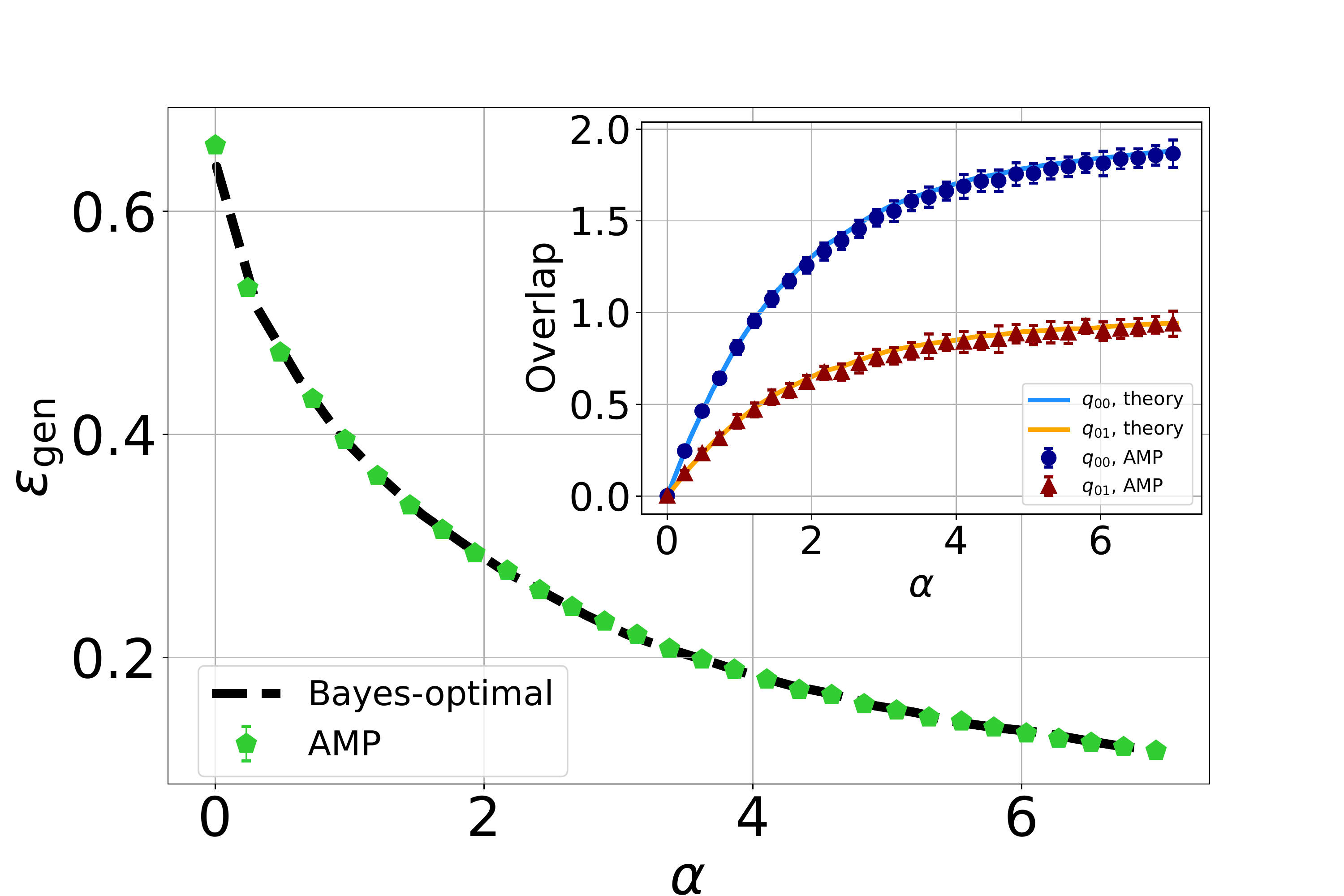}
\caption{{\bf AMP for Gaussian teacher prior:} Generalisation error $\varepsilon_{\rm gen}$ as a function of the sample complexity $\alpha$. The performance of AMP (averaged over 20 runs) computed using the formula in Eq.~\eqref{eq:generalization_Bayes}, is marked by the green symbols (error bars are smaller than the symbols) . The dashed black line indicates the Bayes optimal error. The inset displays the diagonal ($q_{00}$) and anti-diagonal ($q_{01}$) entries of the self-overlap $2\times 2$ matrix as a function of the sample complexity $\alpha$ in the Bayes-optimal setting. The full lines mark the fixed points of the saddle point equations of Eq.~\eqref{eq:SE_BO}, while the symbols represent the result obtained from the AMP algorithm described in Section~\ref{sec:amp} averaged over 20 runs. The error bars represent the respective standard deviations.}
\label{fig:gaussAMP}
\end{figure}
\begin{figure*}[t!]
\centering
\begin{subfigure}[t]{.45\textwidth}
\includegraphics[width=\linewidth]{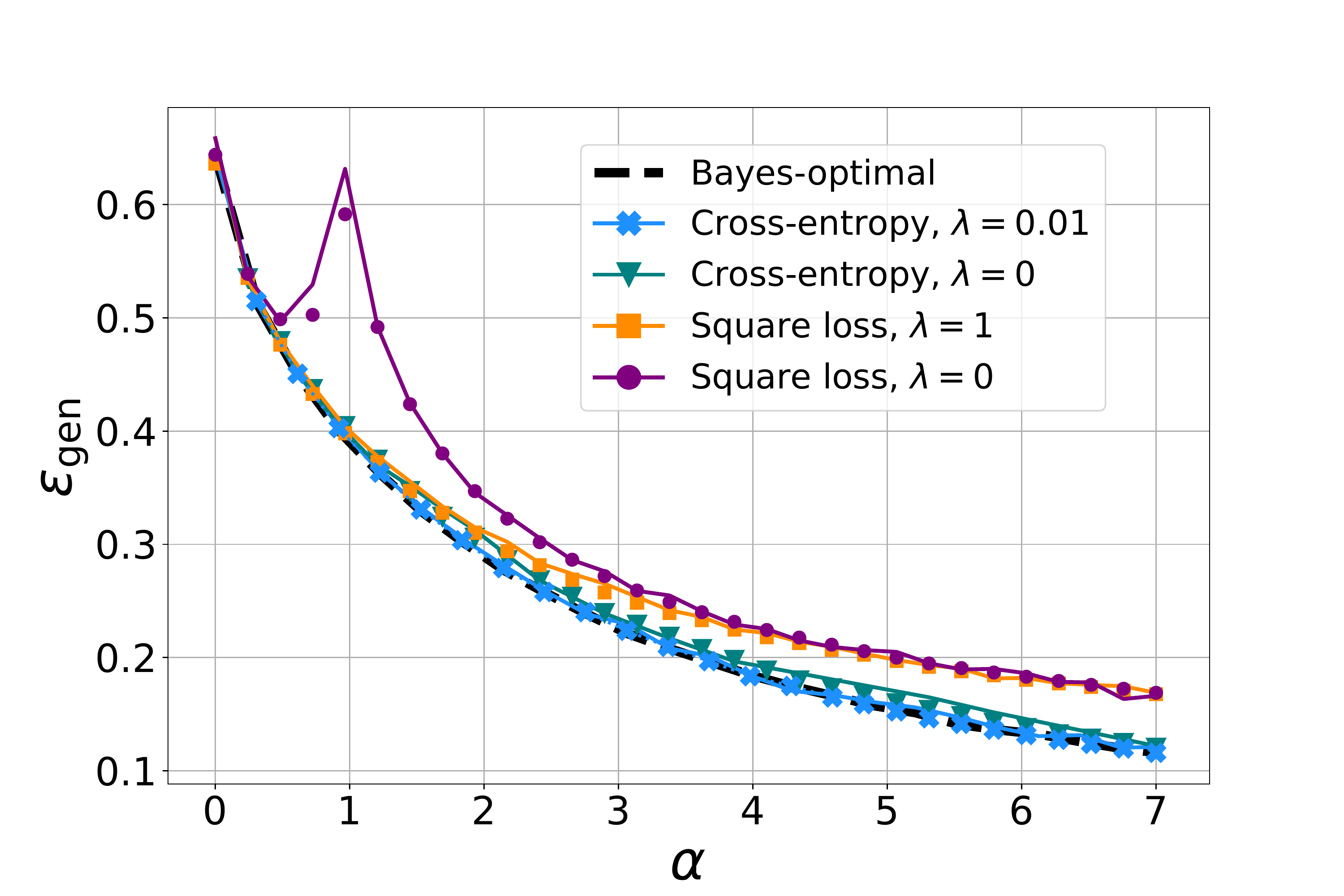}
\caption{Generalisation error $\varepsilon_{\rm gen}$ as a function of the sample complexity $\alpha$. The black dashed line depicts the Bayes-optimal performance. The full lines display the performance of ERM with cross-entropy loss (blue) and square loss (orange), each computed at optimised ridge regularisation ($\lambda=0.01$ and $\lambda=1$ respectively, see panels (c) and (d)) from the fixed points of Eq.~\eqref{eq:SE}. The symbols mark the results from numerical simulations at dimension $d=1000$, averaged over $250$ instances. We also plot the performance of simulations at zero regularisation and theory at $\lambda\rightarrow 0^+$, for both cross-entropy (teal) and square loss (purple).
  \label{fig:ref_gauss1}}
\end{subfigure}%
\hspace{0.5cm}
\begin{subfigure}[t]{.45\textwidth}
\includegraphics[width=\linewidth]{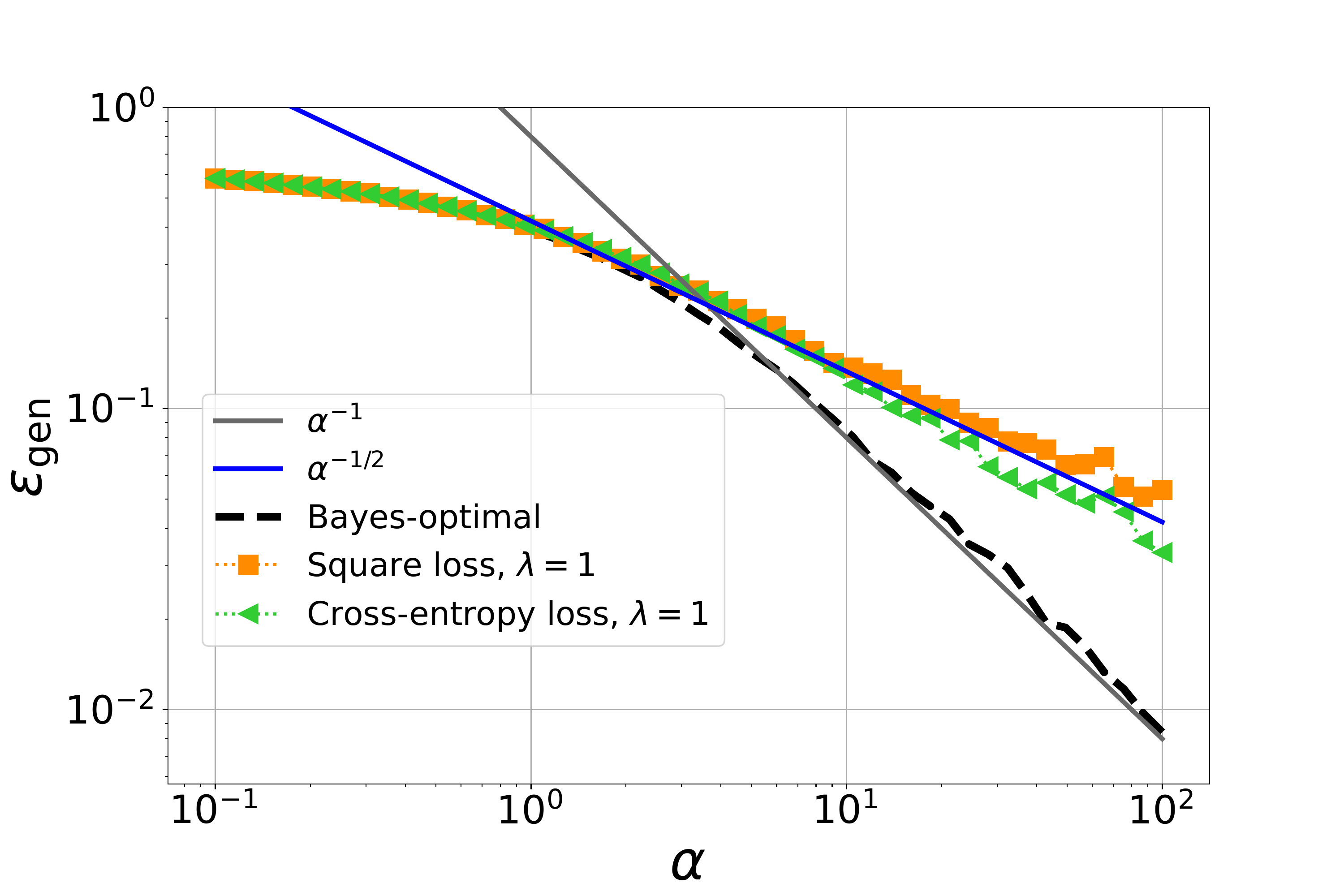}
\caption{{\bf Large$-\alpha$ behaviour:} Generalisation error $\varepsilon_{\rm gen}$ as a function of the sample complexity $\alpha$. We plot our theoretical predictions at large $\alpha$, in log-log scale for visibility purposes. The black dashed line marks the Bayes-optimal error. The symbols represent the error obtained by ERM at fixed regularisation strength $\lambda=1$. }
\label{fig:ref_gauss2}
\end{subfigure}
\begin{subfigure}[t]{.45\textwidth}
\includegraphics[width=\linewidth]{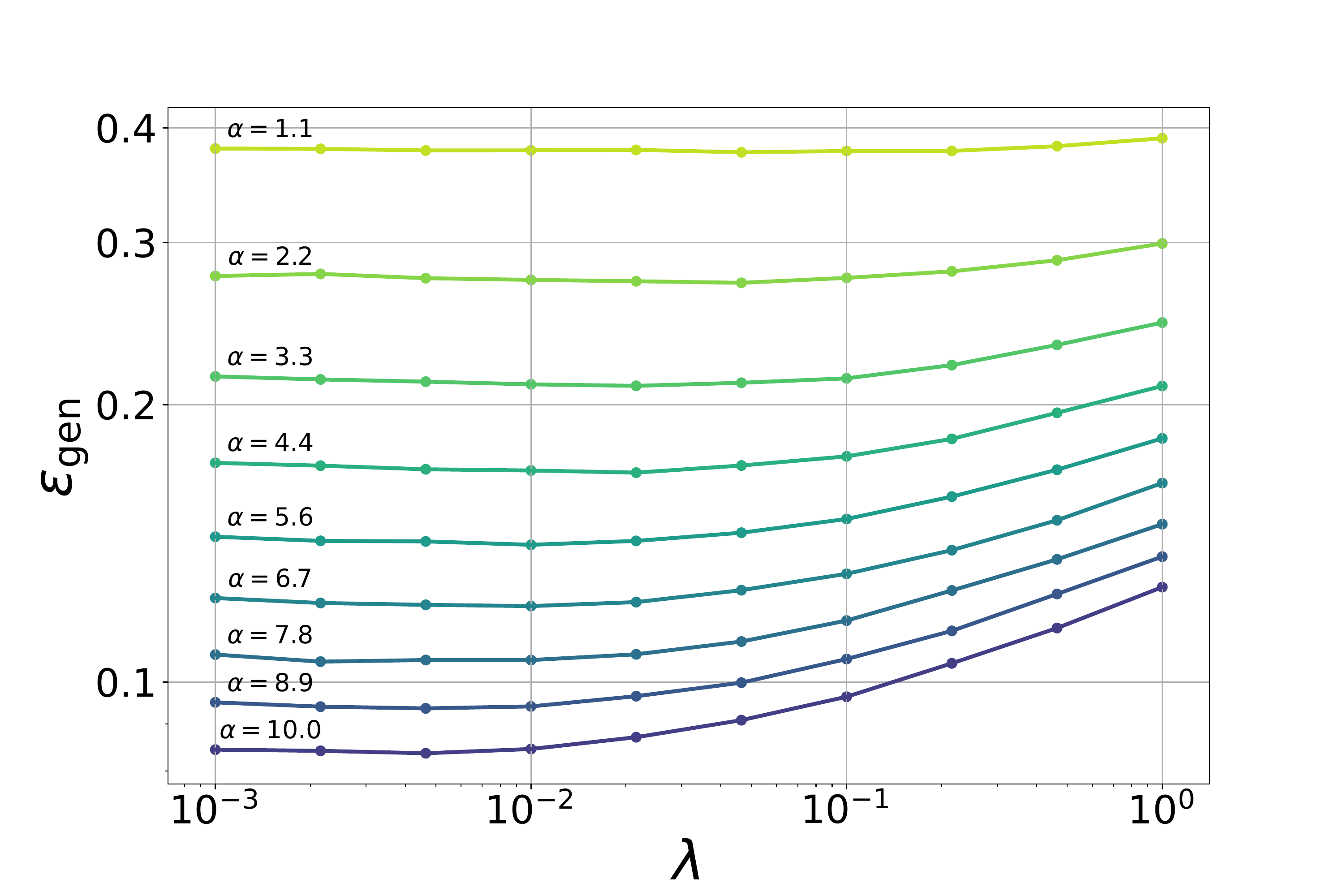}
    \caption{{\bf Cross-entropy loss:} Generalisation error $\varepsilon_{\rm gen}$ as a function of the regularisation strength $\lambda$, at fixed sample complexity $\alpha$. Different values of $\alpha$ are depicted with different colours. The curves are the result of numerical simulations performed at dimension $d=1000$, averaged over $250$ instances. We conclude that for these values of $\alpha$ the optimal $\lambda$ is close to $0.01$. 
  \label{fig:ref_gauss3}}
\end{subfigure}\hspace{0.5cm}
\begin{subfigure}[t]{.45\textwidth}
\includegraphics[width=\linewidth]{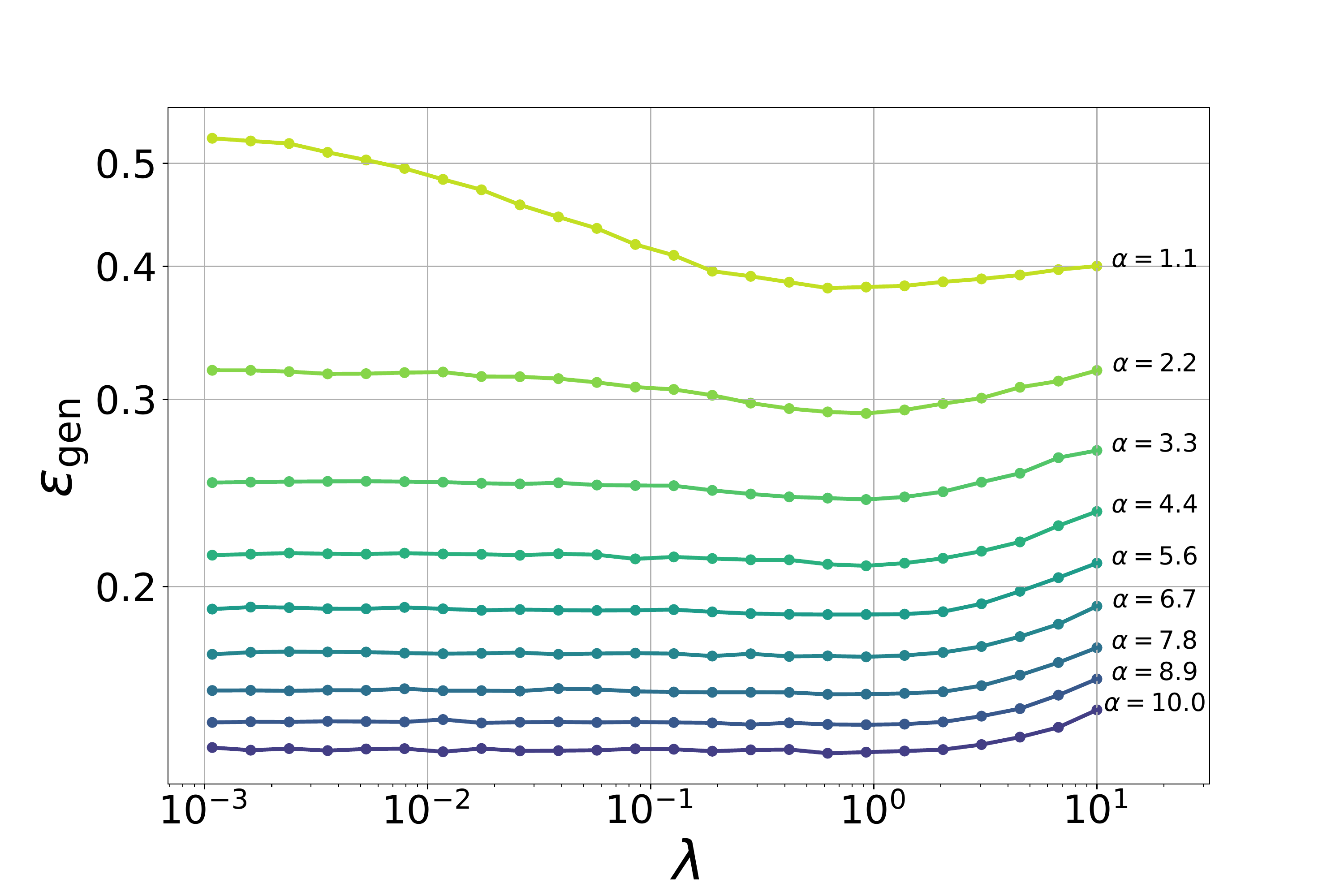}
 \caption{{\bf Square loss:} Generalisation error $\varepsilon_{\rm gen}$ as a function of the regularisation strength $\lambda$, at fixed sample complexity $\alpha$. Different values of $\alpha$ are depicted with different colours. The curves are the result of numerical simulations performed at dimension $d=1000$, averaged over $250$ instances. We conclude that for these values of $\alpha$ the optimal $\lambda$ is close to $1$. 
  \label{fig:ref_gauss4}}
\end{subfigure}
\caption{Bayes-optimal and ERM performances in the case of Gaussian teacher weights.}
\label{fig:gaussian_prior}
\end{figure*}
\section{Results for \texorpdfstring{$k=3$}{k=3} classes}
In this section we discuss the consequences of our theoretical results for the particular case of $k=3$ classes and compare them with numerical simulations. We investigate the dependence of the generalisation error on the sample complexity~$\alpha$. First, we consider the case of Rademacher teacher weights and show that a first-order phase transition arises in the Bayes-optimal performance. Then, we turn to the case of Gaussian teacher weights and explore the role of the regularisation strength $\lambda$ in approaching the Bayes-optimal performance with ERM. 
\paragraph{Bayes-optimal performance for Rademacher teacher ---} 
The main difference between Gaussian and Rademacher teacher is that in the second case perfect generalisation is achievable at finite sample complexity, in line with the results known for the two-classes case of~\cite{gyorgyi1990first,sompolinsky1990learning,seung1992statistical}. To compute the optimal information-theoretical performance, we have evaluated the global extremum of the replica free entropy. To that extent, we have run the replica saddle point iterations Eqs.~(\ref{eq:SE_BO}) with both uninformed and informed initialisations and computed the free entropy \eqref{eq:free_entropy} of the fixed points (if distinct) reached by the two initialisations. In Fig.~\ref{fig:k=3binary_error} we report the generalisation error corresponding to the fixed points reached by the two initialisations, along with their corresponding free entropy in the inset. We found that indeed, for Rademacher teacher weights, the generalisation error decreases continuously for $\alpha \leq \alpha_{\rm IT}^{(k=3)} \approx {2.45}$, and then jumps to zero for all $\alpha > \alpha_{\rm IT}^{(k=3)}$. From the statistical physics point of view, this discontinuous transition in the generalisation error corresponds to a \emph{first-order phase transition} associated to the discontinuous appearance of a second extrema in the free energy potential corresponding to perfect learning. As we have previously discussed, the state evolution of the AMP Algorithm \ref{alg:amp} is equivalent to doing gradient descent on the free energy potential \eqref{eq:free_entropy} starting from an uninformed random initialisation. Therefore, the appearance of a second extremum away from zero implies that AMP is not able to achieve the Bayes-optimal statistical performance. Since AMP is conjectured to be optimal among first-order methods \cite{celentano2020estimation}, this result is an example of a fundamental \emph{statistical-to-algorithmic gap} in this problem. For $\alpha > \alpha_{\rm algo}^{(k=3)} = 2.89$, we observe that the uninformed minima disappears, and we can check that this coincides with the sample complexity for which AMP is able to achieve zero generalisation error from random initialisation. This marks the algorithmic threshold, i.e. the sample complexity beyond which perfect generalisation is reachable algorithmically efficiently.  Our findings thus suggest the existence of an algorithmic \emph{hard phase} for $\alpha^{(k=3)}_{\rm{IT}}<\alpha<\alpha^{(k=3)}_{\rm{algo}}$, where the theoretically optimal performance is not reachable by efficient algorithms. 

We note here the comparison with the canonical perceptron with Rademacher teacher weights and two classes, where the same thresholds are well known to be $\alpha_{\rm IT}^{(k=2)}=1.249$, $\alpha_{\rm algo}^{(k=2)}=1.493$ \cite{gyorgyi1990first,sompolinsky1990learning,barbier2019optimal}. Naturally, these values are roughly twice smaller than the ones for $k=3$ since for $k$ classes the teacher has $k-1$ independent $d$-dimensional binary elements that need to be recovered in order 
to reach perfect generalisation. Comparing more precisely the values for $k=3$ and also their difference, all are slightly smaller than the double of the values for $k=2$. 

\paragraph{Bayes-optimal performance for Gaussian teacher ---} Fig.~\ref{fig:gaussAMP} and \ref{fig:gaussian_prior} summarise our results for the case of Gaussian teacher weights. The Bayes-optimal generalisation error, computed from Eq.~\eqref{eq:BO_error}, is depicted by the dashed black line in both figures and is a smooth, monotonically-decreasing function of the sample complexity. It is interesting to note that, for Gaussian teacher weights, the Bayes-optimal AMP algorithm -- described in Sec.~\ref{sec:amp} and marked by the green symbols in Fig.~\ref{fig:gaussAMP} -- achieves the Bayes-optimal performance. This is highly non-trivial: computing the Bayesian estimator usually requires sampling from the posterior distribution of the weights given the data, and therefore can be prohibitively costly in the high-dimensional regime considered here. For Gaussian weights AMP provides an exact approximation of the posterior marginals in quadratic time in the input dimension. 

\paragraph{Approaching Bayes-optimality with ERM ---} Instead, how does ERM compare to the Bayesian estimator? Note that the empirical risk in Eq.~\eqref{eq:optimization_ERM} is convex, and therefore, at variance with the posterior estimation, this problem can be readily simulated using descent-based algorithms such as stochastic gradient descent. The generalisation error obtained by ERM is plotted in Fig.~\ref{fig:ref_gauss1} as a function of the sample complexity. The full lines depict our theoretical predictions for the learning curves while the symbols mark the results from numerical simulations performed at finite dimension $d=1000$ (more details on the numerics are provided in Appendix~\ref{app:numerics}). We find excellent agreement between the two. For both cross-entropy and square losses, we show the performance achieved without regularisation ($\lambda=0$) and with naively-optimised $\lambda$, obtained by cross-validation in Fig.~\ref{fig:gaussian_prior} (c), (d).  Interestingly, we find that the optimally-regularised cross-entropy loss achieves a close-to-optimal performance, while the square loss maintains a finite gap with respect to the Bayes-optimal error even at fine-tuned regularisation strength. Similar results were obtained for the two-classes teacher student perceptron \cite{aubin2020generalization}.
The fact that regularised cross-entropy minimisation is so close to optimal is remarkable and worth further investigation in a more general setting. 

\paragraph{Large--$\alpha$ behaviour ---} Fig.~\ref{fig:ref_gauss2} considers again a Gaussian teacher prior and explores the behaviour of the generalisation error at large sample complexity. The Bayes-optimal performance is depicted in black and decays as $1/\alpha$ in the large$-\alpha$ regime. On the other hand, the performance obtained by ERM at fixed $\lambda$ displays a slower decay $\alpha^{-1/2}$. This is again compatible with the behaviour observed in the two-classes case \cite{aubin2020generalization}. It remains to be analysed whether for $k>2$ the optimally regularised ERM achieves the $1/\alpha$ rate as it does for the two classes. 

\paragraph{The role of regularisation ---} The lower panels of Fig. \ref{fig:gaussian_prior} further illustrate the role played by ridge regularisation. We plot the generalisation error as a function of the regularisation strength $\lambda$ at fixed sample complexity $\alpha$ for the cross-entropy (\ref{fig:ref_gauss3}) and the square loss (\ref{fig:ref_gauss4}). Different curves represent different values of sample complexity. We observe that the optimal regularisation depends only very mildly on the sample complexity $\alpha$ for this range of values of $\alpha$. 

\section*{Acknowledgements}
We acknowledge funding from the ERC
under the European Union’s Horizon 2020 Research and Innovation Programme Grant Agreement
714608-SMiLe. RV was partially financed by the Coordena\c{c}\~{a}o de Aperfeiçoamento de Pessoal de N\'{i}vel Superior - Brasil (CAPES) - Finance Code 001. RV is grateful to EPFL and IdePHICS lab for their generous hospitality during the realization of this project.

\newpage
\appendix
\section*{Appendix}\label{sec:app}
\addcontentsline{toc}{section}{\nameref{sec:app}}

\section{Prior reduction}
\label{appendix:mapping}
In this section we explain the mapping from $k$ to $k-1$ dimensions that we apply to evaluate our theoretical results from Eq. \eqref{eq:SE} as well as to implement Algorithm \ref{alg:amp}. The intuition is exactly the same of the binary perceptron: the knowledge of $k-1$ components of the one-hot label representation $\bm{y}$ is enough to determine the remaining component. Nevertheless for $k>2$, shifting the weights in order to reproduce this structure introduces additional correlations that must be taken into account. 

We recall that $\bm{W}^*$ is $d \times k$ matrix, and denote by $\bm{w}^*_l$ , $1\leq l \leq k$, its columns, each corresponding to a different class. Notice that the label $\bm{y} = \rm{e}_{\argmax_l(\{{\bm{w}^*_l}^\top \bm{x}\}_{l\in[k]})}$ given by Eq. \eqref{eq:model} of a data point $\bm{x}$ can be equivalently expressed by taking the $k^{\rm th}-$component, i.e. ${\bm{w}^*}^\top_k\bm{x}$, as a reference for comparison and setting
\begin{align}
\label{eq:prior_transf}
    \bm{\Tilde w}^*_{h}\leftarrow \bm{w}^*_{h}-\bm{w}^*_{k} \quad \text{ for all } 1 \leq h \leq k,
\end{align}
so that $\bm{\Tilde w}^*_{k} = \bm{0}$, and the problem is reduced to $k-1$ dimensions. We then replace $\bm{W}^*$ by $\bm{\Tilde W}^* \in \bR^{d \times (k-1)}$. Denoting $\bm{1}_k$ as the $k$-dimensional vector with all entries equal to $1$, we present schematically in Figure \ref{fig:ml_perc_red} the prior reduction.
\begin{figure}[ht!]
\centering
\centerline{\includegraphics[width=0.6\textwidth]{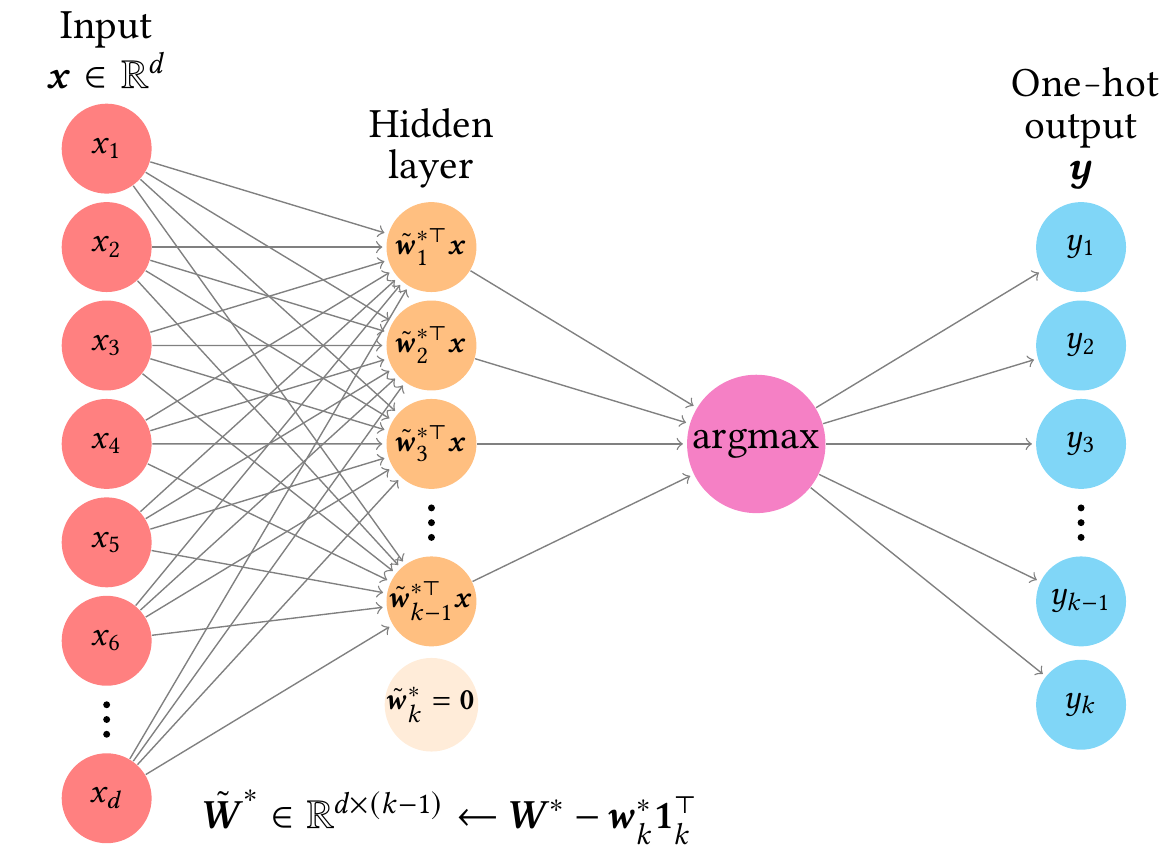}}
\caption{{\bf Prior reduction:} Schematic representation of the prior reduction mapping over the multiclass classification problem depicted in Figure \ref{fig:ml_perc_k}.}
\label{fig:ml_perc_red}
\end{figure}
Note that this mapping introduces correlations along the columns of $\bm{\Tilde W}^*$, but not along the rows, i.e. the $d$ components of each vector $\bm{\Tilde w}_l^*$ remains i.i.d. Therefore, the prior over the weights is still factorizable along the extensive dimension $d$.

\paragraph{Gaussian prior ---}

In the Gaussian prior setting,
\begin{equation}
P_w ( \bm{w}^* ) = \mathcal{N} (\bm{w}^* \rvert \bm{0} , \bm{I}_k  )  =  \frac{1}{(2 \pi )^{k/2}}   \exp\left(-\frac{1}{2} \bm{w}^{*\top} \bm{w}^*   \right)\;,
\end{equation}
where  $\bm{w}^* \in \mathbb{R}^k $ is a column of the matrix $ \bm{W}^*$, 
the transformation imposes a new covariance matrix with elements
\begin{equation}
 \bm{\Sigma}_{jl} \equiv   \Cov \left( \bm{w}_j^* -  \bm{w}_k^*    , \bm{w}_l^* -  \bm{w}_k^*    \right)  \,. 
\end{equation}
By making use of the identity
\begin{equation}
   \Cov \left( \alpha \bm{a} + \beta \bm{b} , \gamma \bm{c} +  \delta \bm{b} \right)  =
 \alpha \gamma    \Cov \left(  \bm{a} ,  \bm{c} \right) 
 + \alpha \delta    \Cov \left(  \bm{a} ,  \bm{d} \right)  
 + \beta \gamma    \Cov \left(  \bm{b} ,  \bm{c} \right)  
 +\beta \delta    \Cov \left(  \bm{b} ,  \bm{d} \right)  \,,  
\end{equation}
one can write
\begin{equation}
    \mathcal{N} (\bm{w}^* \rvert \bm{w}_k^* , \bm{\Sigma}  ) \propto \exp \left[ -\frac{1}{2}
      \left( \bm{w}^* - \bm{w}_k^*  \right)^\top  \bm{\Sigma}^{-1} \left( \bm{w}^* - \bm{w}_k^* \right) \right] 
      \leftarrow \mathcal{N} (\bm{w}^* \rvert \bm{0}_k , \bm{I}_k  )  \;,
\end{equation}
with
\begin{equation}
     \bm{\Sigma} = \bm{I}_k  - \rm{e}_k \rm{e}_k^\top +   \rm{e}_k \sum_{l \ne k }   \rm{e}_l^\top   + \left( \sum_{l \ne k }   \rm{e}_l  \right) \rm{e}_k^\top   \;,
\end{equation}
Thus, since all contributions related to the $k$-th degree of freedom become zero, the transformation \eqref{eq:prior_transf} allows us to write the mapping 
\begin{subequations}
\begin{equation}
    \bm{I}_{k-1} \leftarrow  \bm{I}_k  - \rm{e}_k \rm{e}_k^\top  \;,
\end{equation}
\begin{equation}
    \bm{1}_{k-1} \bm{1}_{k-1}^\top  \leftarrow  \rm{e}_k^\top + \rm{e}_k \sum_{l \ne k }   \rm{e}_l^\top   + \left( \sum_{l \ne k }   \rm{e}_l  \right) \rm{e}_k^\top  \;;
\end{equation}
\end{subequations}
and finally for $\tilde{\bm{w}}^* \in \mathbb{R}^{k-1}$ :
\begin{subequations}
\begin{equation}
     \mathcal{N} (\tilde{\bm{w}}^* \rvert \bm{0} , \tilde{\bm{\Sigma}} ) \propto \exp \left( -\frac{1}{2}
      \tilde{\bm{w}}^{*\top}  \tilde{\bm{\Sigma}}{}^{-1}  \tilde{\bm{w}}^* \right) \;,
\end{equation}
with covariance $\tilde{\bm{\Sigma}} \in \mathbb{R}^{(k-1) \times (k-1)}$ given by 
\begin{equation} 
\label{eq:cov}
    \tilde{\bm{\Sigma}} = \bm{I}_{k-1} + \bm{1}_{k-1} \bm{1}_{k-1}^\top    \;.
\end{equation}
\end{subequations}
Therefore each row of the reduced matrix $\bm{\Tilde W}^*$ follows a Gaussian distribution with $\bm 0$ mean and covariance matrix given by Eq.~\eqref{eq:cov}

\paragraph{Rademacher prior ---} In the Rademacher setting, $\bm{w}^* \in \mathbb{R}^k $,
\begin{equation}
    P_w (\bm{w}^*) = \frac{1}{2^k} \prod_{l=1}^k \left[ \delta(w_l^* + 1 ) + \delta(w_l^* - 1 )    \right] \;, 
\end{equation}
we can write
\begin{equation}
\begin{split}
      P_w (\bm{w}^*) =&   \frac{1}{2^k} \prod_{l=1}^k \left[ \delta(w_l^* - w_k^* + w_k^* + 1 ) + \delta(w_l^* -  w_k^* + w_k^*  - 1 )    \right] \\
  =&   \frac{1}{2^k}  \left[  \delta( w_k^* + 1 )  + \delta( w_k^* - 1 ) \right] \prod_{l=1}^{k-1} \left[ \delta\left( (w_l - w_k^* ) + w_k^* + 1 \right) + \delta( (w_l^* -  w_k^* ) + w_k^*  - 1 )    \right] \;,
\end{split}
\end{equation}
leading to the reduced prior for $\tilde{\bm{w}}^* \in \mathbb{R}^{k-1}$:
\begin{equation}
\label{eq:rad_prior_k1}
     P_{\tilde{w}} (\tilde{\bm{w}}^* \rvert  \bm{w}_k^* ) = \frac{1}{2^k}  \left[  \delta( w_k^* + 1 )  + \delta( w_k^* - 1 ) \right] \prod_{l=1}^{k-1} \left[ \delta\left( \tilde{w}_l^* + w_k^* + 1 \right) + \delta(
     \tilde{w}_l^* + w_k^*  - 1 )    \right]  \;.
\end{equation}

The dimensional reduction simplifies our analysis when it comes to the numerical evaluation both of the Gaussian integrals in Eq. \eqref{eq:SE} and of the prior and channel updates of AMP, as discussed in Appendix \ref{app:amp}. The same reformulation applies straightforwardly to the parameters $\bm{m}$ and $\bm{q}$.

\section{Replica calculation}
\label{sec:app:replica}
In this section, we carry out the (heuristic) replica computation leading to the system of equations \eqref{eq:SE} in the main text. We consider a general setting where the student has access to a prior distribution $P_w$ over the teacher weights and a model distribution $P_{\rm out}$, which can be the true ones or not. This formulation encompasses both the Bayes-optimal and non Bayes-optimal settings. As we shall see in the following, ERM can be seen as a special case of the latter. The posterior distribution of the student weights is given by
\begin{align}
\begin{split}
    P\left(\{\bm{w}_{l}\}_{l=1}^{k} | \bm{X},\bm{Y} \right) &= \frac{1}{Z_d} \prod_{l=1}^k P_w(\bm{w}_{l} ) \prod_{\mu=1}^n P_{\text{out}}(\bm{y}_\mu | \{h_{\mu l}\}_{l=1}^k )
    \end{split}
\end{align}
where we have defined $h_{\mu l} = \bm{w}_l^\top\bm{x}_\mu/\sqrt{d}$. The partition function is then
\begin{align}
   Z_d = \int_{\bR^{d\times k}} \dde \bm{w} \prod_{l=1}^k P_w(\bm{w}_{l} ) \prod_{\mu=1}^n P_{\text{out}}(\bm{y}_\mu | \{h_{\mu l}\}_{l=1}^k ).
\end{align}
By using the \emph{replica trick}, we can compute the free entropy in the high-dimensional limit as 
\begin{align}
    \Phi:=\lim_{d\rightarrow\infty}\Phi_d := \lim_{d\rightarrow\infty}\frac{1}{d} \E_{\bm{X},\bm{W}^*} \ln Z_d \approx  \lim_{d \to \infty}\lim_{p \to 0^+} \frac{1}{d}\, \frac{\partial}{\partial p} \E _{\bm{X},\bm{W}^*}Z_d^p.\label{eq:replica_trick}
\end{align}
We can then rewrite the average in Eq. \eqref{eq:replica_trick} as
\begin{align}
    \E_{\bm{X},\bm{W}^*} Z_d^p &= \E_{\bm{X},\bm{W}^*} \left[\int_{\bR^{d\times k}} \dde \bm{w} \prod_{l=1}^k P_w(\bm{w}_{l}) \prod_{\mu=1}^n P_{\text{out}}(\bm{y}_\mu | \{h_{\mu l}\}_{l=1}^k ) \right]^p \\
    &=\E_{\bm{X},\bm{W}^*} \left[ \prod_{a=1}^p  \int_{\bR^{d\times k}} \dde \bm{w}^a \prod_{l=1}^k P_w(\bm{w}_{l}^a ) \prod_{\mu =1}^n P_{\text{out}} \left( \bm{y}_\mu |\{h_{\mu l}^a\}_{l=1}^k \right) \right]\\
    &= \E_{\bm{X}} \int_{\mathbb{R}^{n\times k}} \dde \bm{Y} \prod_{a=0}^p \left[ \int_{\bR^{d\times k}} \dde \bm{w}^a \prod_{l=1}^k P^a_w(\bm{w}_{l}^a ) \prod_{\mu =1}^n P^a_{\text{out}} \left( \bm{y}_\mu |\{h_{\mu l}^a\}_{l=1}^k \right) \right],
\end{align}
where above we have renamed $\bm{w}^* =\bm{w}^0$. In order to account for both the Bayes-optimal and non Bayes-optimal cases, we keep the distinction between teacher and student distributions by adding an index $a$ to prior and model distributions. In what follows, $P_w^0=P_w^*$ and $P_{\rm out}^0=P_{\rm out}^*$ refer to the teacher, while  $P_w^{a>0}=P_w$ and $P_{\rm out}^{a>0}=P_{\rm out}$ to the student. Let us denote the covariance tensor of the $h_{\mu l}^a$ as
\begin{align}
    \E [h_{\mu l}^a h_{\nu l'}^b ] & = \delta_{\mu \nu} Q_{bl'}^{al},\\
    Q_{bl'}^{al} & = \frac{1}{d} \sum_{i=1}^d w_{il}^a w_{il'}^b,
\end{align}
with $\bm{Q}_a^b \in \bR^{k \times k}$.
We can rewrite the above as
\begin{align}
    \begin{split}
          \E_{\bm{X},\bm{W}^*} Z_d^p &= \E_{\bm{X}} \int_{\bR^{n\times k}} \dde \bm{Y} \prod_{a=0}^p \left[ \int_{\bR^{d\times k}} \dde \bm{w}^a \prod_{l=1}^k P^a_w(\bm{w}_{l}^a ) \prod_{\mu =1}^n P^a_{\text{out}} \left( \bm{y}_\mu |\{h_{\mu l}^a\}_{l=1}^k \right)\right]\\
          &=\prod_{(a,l);(b,l)}\int_{\mathbb{R}}\dde Q_{bl'}^{al}\; I_{\rm prior}(\{Q^{al}_{bl'}\})\;I_{\rm channel}(\{Q^{al}_{bl'}\}),
    \end{split}
\end{align}
where we have denoted
\begin{align}
    I_{\rm prior}(\{Q^{al}_{bl'}\})&=\prod_{a=0}^p\int_{\mathbb{R}^{d\times k}}\dde \bm{w}^a \, \left[\prod_{l=1}^kP^a_w(\bm{w}_{l}^a)\right]\prod_{(a,l);(b,l')}\delta\left(Q^{al}_{bl'}-\frac{1}{d}\sum_{i=1}^dw_{il}^aw_{il'}^b\right),\\
    I_{\rm channel}(\{Q^{al}_{bl'}\})&=\int_{\mathbb{R}^{n\times K}}\dde \bm{Y}\prod_{a=0}^p\int_{\mathbb{R}^{d\times k}}\dde h^a\left[\prod_{a=0}^p\prod_{\mu=1}^n P^a_{\rm out}(\bm{y}_\mu|h^a_\mu)\right]\\&\times\exp\left(-\frac{n}{2}\ln\det \bm{Q}-\frac{nk(p+1)}{2}\ln 2\pi-\frac{1}{2}\sum_{\mu=1}^n\sum_{a,b}\sum_{l,l'}h_{\mu l}^a (Q^{-1})_{bl'}^{al}h_{\mu l'}^b\right).
\end{align}
and we have introduced both the definitions of the overlaps $\{Q_{bl'}^{al}\}$ and the local fields $\{h^a_{\mu l}\}$. We can introduce the Fourier representation of the Dirac $\delta-$functions in the prior term $I_{\rm prior}$ and rewrite
\begin{align}
    \mathbb{E}Z^p_d=\prod_{(a,l);(b,l')}\int_{\mathbb{R}^2}\frac{\dde Q^{al}_{bl'}\,\dde \hat Q^{al}_{bl'}}{2\pi}\exp\left(d\,H(\bm{Q},\bm{\hat Q})\right),
\end{align}
where we have defined
\begin{align}
    H(\bm{Q},\bm{\hat Q}):=\frac{1}{2}\sum_{a=0}^p\sum_{l,l'}Q^{al}_{al'}\hat Q^{al}_{al'}-\frac{1}{2}\sum_{a\neq b}\sum_{l,l'}Q^{al}_{bl'}\hat Q^{al}_{bl'}+\ln I(\{\hat Q^{al}_{bl'}\})+\alpha \ln J(\{Q^{al}_{bl'}\})
\end{align}
and the auxiliary functions:
\begin{align}
   I(\{\hat Q^{al}_{bl'}\})&=\prod_{a=0}^p\int_{\mathbb{R}^k}\dde \bm{w}^a \,P^a_w(\bm{w}^a)\exp\left(-\frac{1}{2}\sum_{a=0}^p\sum_{l,l'}w_l^a \hat Q_{al'}^{al}w^a_{l'}+\frac{1}{2}\sum_{a\neq b}\sum_{l,l'}w_l^a \hat Q_{bl'}^{al}w^b_{l'}\right),\\
   J(\{Q^{al}_{bl'}\})&=\int_{\mathbb{R}^k}\dde \bm{y} \prod_{a=0}^p\int_{\mathbb{R}^k}\frac{\dde \bm{h}^a}{(2\pi)^{k(p+1)/2}}\frac{P_{\rm out}^a(\bm{y}|\bm{h}^a)}{\sqrt{\det \bm{Q}}}\exp\left(-\frac{1}{2}\sum_{a,b}\sum_{l,l'}h^a_l(Q^{-1})^{al}_{bl'}h^b_{l'}\right)
\end{align}
We observe that, upon exchanging the limits in $d$ and $p$, the high-dimensional limit of the free entropy can be computed via a saddle-point method:
\begin{align}
    \Phi=\lim_{d\rightarrow\infty}\mathbb{E}_{\bm{X},\bm{W}^*}\ln{Z}_d=\lim_{p\rightarrow 0^+}{\rm extr}_{\bm{Q},\bm{\hat Q}}\left[H(\bm{Q},\bm{\hat Q})\right].\label{eq:salle_point}
\end{align}
\subsection{Replica symmetric ansatz}
In order to progress in the calculation, we assume that the extremum in Eq. \eqref{eq:salle_point} is attained at $\{\bm{Q},\bm{\hat Q}\}$ described by a replica symmetric (RS) ansatz \cite{Nishimori2001StatisticalPO}. We distinguish between the Bayes-optimal and non Bayes-optimal cases. Note that in the Bayes-optimal case we can drop the $a-$index from the prior and model distributions. In the non Bayes-optimal case, we will denote the teacher distributions by $P^*_w,P^*_{\rm out}$ and the student ones simply by $P_w,P_{\rm out}$.
\paragraph{Bayes-optimal RS ansatz ---} In the Bayes-optimal setting we make the following ansatz:
\begin{align}
Q^{al}_{al'}=Q^*_{ll'}, && \hat Q^{al}_{al'}=\hat Q^*_{ll'}, && \forall a=0,..p, \,\forall l,l'\leq k\\
Q^{al}_{bl'}=q_{ll'},&& \hat Q^{al}_{bl'}=\hat q_{ll'},&& \forall a\neq b, \forall l,l'\leq k
\end{align}
The trace term is simplified as follows
\begin{align}
\label{eq:traceBO}
   \frac{1}{2}\sum_{a=0}^p\sum_{l,l'}Q^{al}_{al'}\hat Q^{al}_{al'}-\frac{1}{2}\sum_{a\neq b}\sum_{l,l'}Q^{al}_{bl'}\hat Q^{al}_{bl'}=\frac 12 (p+1)\sum_{l,l'}\hat Q^*_{ll'}Q^*_{ll'}-\frac 12 p(p+1)\sum_{ll'}\hat q_{ll'}q_{ll'}.
\end{align}
The prior and output terms can be simplified by performing a Hubbard-Stratonovich transformation that allows  to decouple the replica indices $a,b$. By indicating the standard Gaussian measure with $\mathcal{D}\bm{\xi}$: $\bm{\xi}\sim\mathcal{N}(\bm{0},\bm{I}_k)$, we obtain
\begin{align}
    I(\bm{\hat Q}^*,\bm{\hat q})&=\int_{\mathbb{R}^k}\mathcal{D}\bm{\xi} \left[\int_{\mathbb{R}^k}\dde \bm{w}\, P^*_w(\bm{w})\exp\left(-\frac{1}{2}\bm{w}^\top (\bm{\hat Q}^*+\bm{\hat q})\bm{w}+\bm{\xi}^\top \bm{\hat q}^{\frac 12}\bm{w}\right)\right]^{p+1},\\
    J( \bm{Q}^*, \bm{q})&=\int_{\mathbb{R}^k}\dde \bm{y}\int_{\mathbb{R}^k}\mathcal{D}\bm{\xi} \left[\int_{\mathbb{R}^k}\mathcal{D} \bm{h} \, P^*_{\rm out}\left(\bm{y}|(\bm{Q}^*-\bm{q})^{\frac{1}{2}}\bm{h}+\bm{q}^{\frac 12}\bm{\xi}\right)\right]^{p+1}.
\end{align}
Since we are interested in the $p\rightarrow 0^+$ limit, it is useful to rewrite
\begin{align}
\label{eq:logBO1}
\begin{split}
    \ln I(\bm{\hat Q}^*,\bm{\hat q})&=p\int_{\mathbb{R}^k}\mathcal{D}\bm{\xi} \int_{\mathbb{R}^k}\dde \bm{w}\, P^*_w(\bm{w})\exp\left(-\frac{1}{2}\bm{w}^\top (\bm{\hat Q}^*+\bm{\hat q})\bm{w}+\bm{\xi}^\top \bm{\hat q}^{\frac 12}\bm{w}\right)\\&\times\ln \int_{\mathbb{R}^k}\dde \bm{w}'\, P^*_w(\bm{w}')\exp\left(-\frac{1}{2}\bm{w}'^\top (\bm{\hat Q}^*+\bm{\hat q})\bm{w}'+\bm{\xi}^\top \bm{\hat q}^{\frac 12}\bm{w}'\right) + o(p),
    \end{split}
    \\
    \begin{split}
    \ln J( \bm{Q}^*, \bm{q})&=p\int_{\mathbb{R}^k}\dde \bm{y}\int_{\mathbb{R}^k}\mathcal{D}\bm{\xi} \int_{\mathbb{R}^k}\mathcal{D} \bm{h} \, P^*_{\rm out}\left(\bm{y}|(\bm{Q}^*-\bm{q})^{\frac{1}{2}}\bm{h}+\bm{q}^{\frac 12}\bm{\xi}\right)\\
    &\times \ln \int_{\mathbb{R}^k}\mathcal{D} \bm{h}' \, P^*_{\rm out}\left(\bm{y}|(\bm{Q}^*-\bm{q})^{\frac{1}{2}}\bm{h}'+\bm{q}^{\frac 12}\bm{\xi}\right) + o(p).
\end{split}
\label{eq:logBO2}
\end{align}
\paragraph{Non Bayes-optimal RS ansatz ---}  In the non Bayes-optimal setting we make the following ansatz:
\begin{align}
Q^{al}_{al'}=Q^0_{ll'}, && \hat Q^{al}_{al'}=\hat Q^0_{ll'}, && \forall a=1,..p, \,\forall l,l'\leq K\\
Q^{al}_{bl'}=q_{ll'},&& \hat Q^{al}_{bl'}=\hat q_{ll'},&& \forall a\neq b,\, a,b=1,...p,\, \forall l,l'\leq k\\
Q^{0l}_{al'}=m_{ll'},&& \hat Q^{0l}_{al'}=\hat m_{ll'},&& \forall a=1,...p,\, \forall l,l'\leq k\\
Q^{0l}_{0l'}=Q^*_{ll'}, && \hat Q^{0l}_{0l'}=\hat Q^*_{ll'}, &&\forall l,l'\leq k
\end{align}
The trace term is simplified as follows
\begin{align}
\label{eq:traceNBO}
\begin{split}
   \frac{1}{2}\sum_{a=0}^p\sum_{l,l'}Q^{al}_{al'}\hat Q^{al}_{al'}-\frac{1}{2}\sum_{a\neq b}\sum_{l,l'}Q^{al}_{bl'}\hat Q^{al}_{bl'}&=\frac 12 p\sum_{l,l'}\hat Q^0_{ll'}Q^0_{ll'}-\frac 12 p(p-1)\sum_{ll'}\hat q_{ll'}q_{ll'}\\&+\frac 12 \sum_{l,l'}\hat Q^*_{ll'}Q^*_{ll'}-p\sum_{l,l'}m_{ll'}\hat m_{ll'}.
   \end{split}
\end{align}
The prior term is
\begin{align}
\begin{split}
    I(\bm{\hat Q}^0,\bm{\hat q},\bm{\hat Q}^*,\bm{\hat m})=&\int_{\mathbb{R}^k}\mathcal{D}\bm{\xi}\int_{\mathbb{R}^k}\dde \bm{w}^* \,P^*_w(\bm{w}^*)\exp\left(-\frac{1}{2}{\bm{w}^*}^\top\bm{\hat Q}^* \bm{w}^*\right)\\&\times\left[\int_{\mathbb{R}^k}\dde \bm{w}\,P_w(\bm{w})\exp\left(-\frac{1}{2}\bm{w}^\top(\bm{\hat Q}^0+\bm{\hat q})\bm{w}+{\bm{w}^*}^\top \bm{\hat m} \bm{w}+\bm{\xi}^\top \bm{\hat q}^{\frac 12}\bm{w} \right)\right]^p.
\end{split}
\end{align}
In order to compute the output term we need to compute the inverse matrix
\begin{align}
    \bm{Q}^{-1}=\left[
    \begin{array}{ccccc}
        \bm{\tilde Q}^*  &\bm{\tilde m} & \ldots & \bm{\tilde m}  \\
       \bm{ \tilde m} & \bm{\tilde Q}^0 & \bm{\tilde q} 
         & \ldots\\
          \vdots & \bm{\tilde q} & \ddots& \bm{\tilde q}\\
          \bm{\tilde m} & \ldots & \bm{\tilde q} & \bm{\tilde Q}^0
    \end{array}
    \right]\in\mathbb{R}^{k(p+1)\times k(p+1)},
\end{align}
which has a similar block structure as $\bm{Q}$. The components of the inverse can be computed from the relation $\bm{Q}^{-1}\bm{Q}=\bm{I}_k$ and are given by
\begin{equation}
    \begin{split}
        \bm{\tilde Q}^*&=\left(\bm{Q}^*-p\,\bm{m}\left(\bm{Q}^0+(p-1)\bm{q}\right)^{-1}\bm{m}\right)^{-1},\\
        \bm{\tilde Q}^0&=(\bm{Q}^0-\bm{q})^{-1}+\left(\bm{Q}^0+(p-1)\bm{q}\right)^{-1}\left[\bm{m}\left(\bm{Q}^*-p\,\bm{m}\left(\bm{Q}^0+(p-1)\bm{q}\right)^{-1}\bm{m}\right)^{-1}\bm{m}\left(\bm{Q}^0+(p-1)\bm{q}\right)^{-1}-\bm{q}(\bm{Q}^0-\bm{q})^{-1}\right],\\
        \bm{\tilde q}&=\bm{\tilde Q}^0-(\bm{Q}^0-\bm{q})^{-1},\\
      \bm{m}&=-\left(\bm{Q}^*-p\,\bm{m} \left(\bm{Q}^0+(p-1)\bm{q}\right)^{-1}\bm{m}\right)^{-1}\bm{m}\left(\bm{Q}^0+(p-1)\bm{q}\right)^{-1}.
    \end{split}
\end{equation}
The determinant of $\bm{Q}$ is given by
\begin{equation}
   \ln \det \bm{Q}=(p-1)\ln\det(\bm{Q}^0-\bm{q})+\ln \det\left(\bm{Q}^0+(p-1)\bm{q}\right)+\ln \det\left(\bm{Q}^*-p\,\bm{m}\left(\bm{Q}^0+(p-1)\bm{q}\right)^{-1}\bm{m}\right).
\end{equation}
The above results allow us to rewrite
\begin{equation}
\begin{split}
    J\left(\bm{Q}^*,\bm{Q}^0,\bm{q},\bm{m}\right)=&\int_{\mathbb{R}^k}\dde \bm{y}\int_{\mathbb{R}^k}\mathcal{D}\bm{\xi}\exp\left(-\frac 12 \ln\det (2\pi\bm{Q})\right)\int_{\mathbb{R}^k}\mathcal{D}\bm{z}^*\,P^*_{\rm out}(\bm{y}|\bm{z}^*)\exp\left(-\frac 12 {\bm{z}^*}^\top\bm{\tilde Q}^*\bm{z}^*\right)\\
    &\;\times \left[\int_{\mathbb{R}^k}\mathcal{D}\bm{z}\,P_{\rm out}(\bm{y}|\bm{z})\exp\left(-\frac 12 {\bm{z}}^\top(\bm{\tilde Q}^0-\bm{\tilde q})\bm{z}- {\bm{z}^*}^\top\bm{\tilde m}\bm{z}-\bm{\xi}^\top\bm{\tilde q}^{1/2}\bm{z}\right)\right]^p.
\end{split}
\end{equation}
As in the Bayes-optimal case, in order to consider the $p\rightarrow 0^+$ limit, we can rewrite
\begin{align}
\label{eq:logNBO1}
\begin{split}
    \ln I(\bm{\hat Q}^0,\bm{\hat q},\bm{\hat Q}^*,\bm{\hat m})=&p\int_{\mathbb{R}^k}\mathcal{D}\bm{\xi}\int_{\mathbb{R}^k}\dde \bm{w}^* \,P^*_w(\bm{w}^*)\exp\left(-\frac{1}{2}{\bm{w}^*}^\top\bm{\hat Q}^* \bm{w}^*\right)\\&\times\ln\int_{\mathbb{R}^k}\dde \bm{w}\,P_w(\bm{w})\exp\left(-\frac{1}{2}\bm{w}^\top(\bm{\hat Q}^0+\bm{\hat q})\bm{w}+{\bm{w}^*}^\top \bm{\hat m} \bm{w}+\bm{\xi}^\top \bm{\hat q}^{\frac 12}\bm{w} \right)+o(p),
\end{split}
\\
\begin{split}
    \ln J\left(\bm{Q}^*,\bm{Q}^0,\bm{q},\bm{m}\right)=&p\int_{\mathbb{R}^k}\dde \bm{y}\int_{\mathbb{R}^k}\mathcal{D}\bm{\xi}\exp\left(-\frac 12 \ln\det (2\pi\bm{Q})\right)\int_{\mathbb{R}^k}\mathcal{D}\bm{z}^*\,P^*_{\rm out}(\bm{y}|\bm{z}^*)\exp\left(-\frac 12 {\bm{z}^*}^\top\bm{\tilde Q}^*\bm{z}^*\right)\\
    &\;\times\ln \int_{\mathbb{R}^k}\mathcal{D}\bm{z}\,P_{\rm out}(\bm{y}|\bm{z})\exp\left(-\frac 12 {\bm{z}}^\top(\bm{\tilde Q}^0-\bm{\tilde q})\bm{z}- {\bm{z}^*}^\top\bm{\tilde m}\bm{z}-\bm{\xi}^\top\bm{\tilde q}^{1/2}\bm{z}\right)+o(p).
\end{split}
\label{eq:logNBO2}
\end{align}
\subsection{Computing the free entropy}
At this point, it is straightforward to compute the free entropy from Eq. \eqref{eq:salle_point} by taking the limit $p\rightarrow 0^+$ of Eqs. \eqref{eq:traceBO}-\eqref{eq:logBO1}-\eqref{eq:logBO2} and \eqref{eq:traceNBO}-\eqref{eq:logNBO1}-\eqref{eq:logNBO2}. In the Bayes-optimal case, we obtain:
\begin{align}\label{eq:free_entropyBO}
    \Phi_{\rm BO}(\alpha)&={\rm extr}_{\bm{q},\bm{\hat q}}\left\{-\frac{1}{2}\Tr{\left[\bm{q}\bm{\hat q}\right]} + \Psi_w^*(\bm{\hat q})+\alpha \Psi^*_{\rm out}(\bm{q})\right\},\\
    \begin{split}
    \Psi_w^*(\bm{\hat q})&=\mathbb{E}_{\bm{\xi}}\left[\mathcal{Z}_{w}^*\left(\bm{\hat q}^{1/2}\bm{\xi},\bm{\hat q}\right)\ln\mathcal{Z}_{w}^*\left(\bm{\hat q}^{1/2}\bm{\xi},\bm{\hat q}\right)\right],\\
    \Psi_{\rm out}^*(\bm{q})&=\mathbb{E}_{\bm{y},\bm{\xi}}\left[\mathcal{Z}_{\rm out}^*\left(\bm{y},\bm{q}^{1/2}\bm{\xi},\bm{Q}^*-\bm{q}\right)\ln\mathcal{Z}_{\rm out}^*\left(\bm{y},\bm{q}^{1/2}\bm{\xi},\bm{Q}^*-\bm{q}\right)\right].
\end{split}
\end{align}
In the non Bayes-optimal case, we obtain:
\begin{align}
\label{eq:free_entropyNBO}
    \Phi_{\rm non-BO}(\alpha)={\rm extr}_{\bm{Q}^0,\bm{q},\bm{m},\bm{\hat Q}^0,\bm{\hat q},\bm{\hat m}}\left\{-{\rm Tr}\left[\bm{m}\bm{\hat m}\right]+\frac 12{\rm Tr}\left[\bm{Q}^0\bm{\hat Q}^0\right]+\frac 12{\rm Tr}\left[\bm{q}\bm{\hat q}\right]+\Psi_w(\bm{\hat Q}^0,\bm{\hat q},\bm{\hat m})+\alpha\Psi_{\rm out}(\bm{Q}^*,\bm{Q}^0,\bm{q},\bm{m})\right\},\end{align}
    \begin{align}
    \begin{split}
        \Psi_w(\bm{\hat Q}^0,\bm{\hat q},\bm{\hat m})&=\E_{\bm{\xi}}\left[\mathcal{Z}_{w}^*\left(\bm{\hat m}\bm{\hat q}^{-1/2}\bm{\xi},\bm{\hat m}\bm{\hat q}^{-1}\bm{\hat m}\right)\ln\mathcal{Z}_w\left(\bm{\hat q}^{1/2}\bm{\xi},\bm{\hat Q}^0+\bm{\hat q}\right)\right],\\
        \Psi_{\rm out}(\bm{Q}^*,\bm{Q}^0,\bm{q},\bm{m})&=\E_{\bm{y},\bm{\xi}}\left[\mathcal{Z}_{\rm out}^*\left(\bm{y};\bm{ m}\bm{ q}^{-1/2}\bm{\xi},-\bm{ m}\bm{ q}^{-1}\bm{ m}\right)\ln\mathcal{Z}_{\rm out}\left(\bm{y};\bm{ q}^{1/2}\bm{\xi},\bm{ Q}^0-\bm{ q}\right)\right],
    \end{split}
\end{align}
where we remind that in both cases $\bm{Q}^*$ is fixed given the teacher prior. The above equations make use of a series of auxiliary functions $\mathcal{Z}_{w}^*,\mathcal{Z}_{w},\mathcal{Z}_{\rm out}^*,\mathcal{Z}_{\rm out}$ that simply come from a more compact way of writing Eqs. \eqref{eq:logBO1}-\eqref{eq:logBO2} and \eqref{eq:logNBO1}-\eqref{eq:logNBO2}, i.e.,
\begin{subequations}
\label{eq:part_funcs}
\begin{equation}
     {\cal Z}_{w} ( \bm{\gamma}, \bm{\Lambda} ) = \int_{\mathbb{R}^k} \dde \bm{w}  P_{w} ( \bm{w} )  e^{- \frac{1}{2} \bm{w}^{\top} \bm{\Lambda} \bm{w} + \bm{\gamma}^{\top} \bm{w}   } \;,
\end{equation}
\begin{equation}
    {\cal Z}_{\text{out}} ( \bm{y}; \bm{\omega}, \bm{V}  ) 
      = \int_{\mathbb{R}^k} d \bm{z}   \frac{ e^{- \frac{1}{2} ( \bm{z} - \bm{\omega}  )^{\top} \bm{V}^{-1} ( \bm{z} - \bm{\omega}  )  }}{  \sqrt{  \text{det} (2 \pi \bm{V} )  }} P_{\rm out}(\bm{y}|\bm{z}) \;,
\end{equation}
\end{subequations}
and $\mathcal{Z}_{w}^*,\mathcal{Z}_{\rm out}^*$ are defined in the exact same way provided that the student distributions $P_w,P_{\rm out}$ are replaced by the teacher distributions $P_w^*,P^*_{\rm out}$.
\section{Update equations for the overlap parameters}
We can now compute the update equations for the overlap parameters both in the Bayes and non Bayes-optimal settings by taking the derivatives of Eq. \eqref{eq:free_entropyBO} with respect to $(\bm{q},\bm{\hat q})$ and of Eq. \eqref{eq:free_entropyNBO} with respect to $(\bm{Q}^0,\bm{q},\bm{m},\bm{\hat Q}^0,\bm{\hat q},\bm{\hat m})$, and setting them to zero. In the Bayes-optimal setting the update equations are therefore given by:
\begin{align}
\label{eq:update_eq_NBO}
    \bm{q}&=\E_{\bm{\xi}}\left[\mathcal{Z}^*_w (\bm{\hat q}^{1/2}\bm{\xi},\bm{\hat q})\,  \bm{f}^*_w(\bm{\hat q}^{1/2}\bm{\xi},\bm{\hat q})\,\bm{f}^*_w(\bm{\hat q}^{1/2}\bm{\xi},\bm{\hat q})^\top\right],\\
    \bm{\hat q}&=\alpha\E_{\bm{y},\bm{\xi}}\left[\mathcal{Z}^*_{\rm out} (\bm{y};\bm{ q}^{1/2}\bm{\xi},\bm{Q}^*-\bm{ q})\,  \bm{f}^*_{\rm out}(\bm{y};\bm{ q}^{1/2}\bm{\xi},\bm{Q}^*-\bm{ q})\,\bm{f}^*_{\rm out}(\bm{y};\bm{ q}^{1/2}\bm{\xi},\bm{Q}^*-\bm{ q})^\top\right].
\end{align}
In the non-Bayes optimal setting, we define for simplicity: $\bm{V}=\bm{Q}^0-\bm{q}$, $\bm{\hat V}=\bm{\hat Q}^0+\bm{\hat q}$, and we find
\begin{align}
    \bm{m}&=\mathbb{E}_{\bm \xi}\left[\mathcal{Z}^*_w \, \times  \bm{f}^*_w(\bm{\hat m}\bm{\hat q}^{-1/2}\bm{\xi},\bm{\hat m}^T\bm{\hat q}^{-1}\bm{\hat m})\,\bm{f}_w(\bm{\hat q}^{1/2}\bm{\xi},\bm{\hat V})^\top\right],\\
    \bm{q}&=\mathbb{E}_{\bm \xi}\left[\mathcal{Z}^*_w(\bm{\hat m}\bm{\hat q}^{-1/2}\bm{\xi},\bm{\hat m}^T\bm{\hat q}^{-1}\bm{\hat m})\,\bm{f}_w(\bm{\hat q}^{1/2}\bm{\xi},\bm{\hat V})\bm{f}_w(\bm{\hat q}^{1/2}\bm{\xi},\bm{\hat V})^\top\right],\\
    \bm{V}&=\mathbb{E}_{\bm \xi}\left[\mathcal{Z}^*_w(\bm{\hat m}\bm{\hat q}^{-1/2}\bm{\xi},\bm{\hat m}^T\bm{\hat q}^{-1}\bm{\hat m})\partial_{\bm{\gamma}}\bm{f}_w(\bm{\hat q}^{1/2}\bm{\xi},\bm{\hat V})\right],\\
   \bm{ \hat m} &=\alpha\,\mathbb{E}_{\bm{y},\bm{\xi}}\left[\mathcal{Z}_{\rm out}^* \,\bm{f}_{\rm out}^*(\bm{y},\bm{m}\bm{q}^{-1/2}\bm{\xi},\bm{Q}^*-\bm{m}^\top \bm{q}^{-1}\bm{m})\,\bm{f}_{\rm out}(\bm{y},\bm{q}^{1/2}\bm{\xi},\bm{V})^\top \right],\\
   \bm{ \hat q}&=\alpha\,\mathbb{E}_{\bm{y},\bm{\xi}}\left[\mathcal{Z}_{\rm out}^* (\bm{y},\bm{m}\bm{q}^{-1/2}\bm{\xi},\bm{Q}^*-\bm{m}^\top \bm{q}^{-1}\bm{m})\,\bm{f}_{\rm out}(\bm{y},\bm{q}^{1/2}\bm{\xi},\bm{V})\,\bm{f}_{\rm out}(\bm{y},\bm{q}^{1/2}\bm{\xi},\bm{V})^\top \right],\\
    \bm{\hat V} &=-\alpha\,\mathbb{E}_{\bm{y},\bm{\xi}}\left[\mathcal{Z}_{\rm out}^* (\bm{y},\bm{m}\bm{q}^{-1/2}\bm{\xi},\bm{Q}^*-\bm{m}^\top \bm{q}^{-1}\bm{m})\partial_w \bm{f}_{\rm out}(\bm{y},\bm{q}^{1/2}\bm{\xi},\bm{V})\right],
\end{align}
where in both settings we have made use of the following definitions.
\subsection{Definitions of the update functions}
\label{app:denoising_fcts}
For $\bm{w} \in \bR^k$, let
\begin{subequations}
\label{eq:den_func}
\begin{equation}
   Q_{w} ( \bm{w}; \bm{\gamma}, \bm{\Lambda}  ) \equiv \frac{ P_{w} ( \bm{w} )}{ {\cal Z}_{w} ( \bm{\gamma}, \bm{\Lambda} ) } e^{- \frac{1}{2} \bm{w}^{\top} \bm{\Lambda} \bm{w} + \bm{\gamma}^{\top} \bm{w}   }  \;,
\end{equation}
with
\begin{equation}
   \bm{f}_{w} ( \bm{\gamma}, \bm{\Lambda}  ) \equiv \partial_{\bm{\gamma}} \log  {\cal Z}_{w} ( \bm{\gamma}, \bm{\Lambda} )=\E_{Q_w}\left[\bm{w}\right]  \;,
\end{equation}
and for $\bm{z}\in\mathbb{R}^k$, let 
\begin{equation}
    Q_{\text{out}} ( \bm{z}; \bm{y}, \bm{\omega}, \bm{V}  ) \equiv \frac{ P_{\text{out}} ( \bm{y} | \bm{w} )}{ {\cal Z}_{\text{out}} ( \bm{y},  \bm{\omega}, \bm{V} ) } \frac{ e^{- \frac{1}{2} ( \bm{z} - \bm{\omega}  )^{\top} \bm{V}^{-1} ( \bm{z} - \bm{\omega}  )  }}{  \sqrt{  \text{det} (2 \pi \bm{V} )  }} \;,
\end{equation}
with
\begin{equation}
\label{eq:f_out}
    \bm{f}_{\text{out}} ( \bm{y}, \bm{\omega}, \bm{V}  ) \equiv \partial_{\bm{\omega}} \log 
{\cal Z}_{\text{out}} ( \bm{y}, \bm{\omega}, \bm{V}  ) = \bm{V}^{-1} \E_{Q_{\text{out}}} [ \bm{z} - \bm{\omega}  ]  \;,
\end{equation}
\end{subequations}
where the definitions of $\bm{f}^*_w,\bm{f}^*_{\rm out}$ are identical, provided that $P_w,P_{\rm out}$ are replaced by $P^*_w,P^*_{\rm out}$. The functions $ {\cal Z}_{w}$ and ${\cal Z}_{\text{out}}$ are given by Eqs.~\eqref{eq:part_funcs}.\\
The explicit expressions of the auxiliary functions depend on the choice of the teacher and student distributions. We evaluate these expressions for the special cases under consideration in the following sections.
\subsection{Bayes-optimal update functions}
In this section, we evaluate the Bayes-optimal update functions. We consider directly the expressions obtained after performing the mapping described in Appendix \ref{appendix:mapping}.
\subsubsection{Gaussian prior terms with the dimensional reduction \ref{appendix:mapping}}
In the case of Gaussian teacher prior, it is straightforward to notice that, after the application of the mapping \ref{appendix:mapping}, the prior over the weights is still Gaussian with covariance $\tilde{\bm{\Sigma}}= \bm{I}_{k-1}+\bm{1}_{k-1}\bm{1}_{k-1}^\top$:
\begin{subequations}
\begin{equation}
    \begin{split}
       {\cal Z}^*_{w} ( \bm{\gamma}, \bm{\Lambda} ) &=\int_{\mathbb{R}^{k-1}} \frac{d \bm{w}}{\sqrt{(2\pi)^{k-1}\det(\bm{\Tilde \Sigma})}} 
    \exp\left[  - \frac{1}{2} \bm{w}^{\top} \left(  \bm{\Tilde \Sigma}{}^{-1} +\bm{\Lambda}  \right) \bm{w} + \bm{\gamma}^{\top} \bm{w}   \right] \\&= \frac{1}{\sqrt{\det(\tilde{\bm{\Sigma}})
    \det (  \tilde{\bm{\Sigma}}{}^{-1} + \bm{\Lambda}  )}}\exp\left[\frac{1}{2}\bm{\gamma}^\top \left( \tilde{\bm{\Sigma}}{}^{-1}+\bm{\Lambda}  \right)^{-1}\bm{\gamma}\right] \;,
       \end{split}
\end{equation}
leading to
\begin{equation}
   \bm{f}_{w}^* ( \bm{\gamma}, \bm{\Lambda}  ) = \partial_{\bm{\gamma}} \log  {\cal Z}^*_{w} ( \bm{\gamma}, \bm{\Lambda} ) = \left( \tilde{\bm{\Sigma}}{}^{-1}+\bm{\Lambda} \right)^{-1} \bm{\gamma} \;,
\end{equation}
\begin{equation}
   \partial_{\bm \gamma}\bm{f}_{w}^* ( \bm{\gamma}, \bm{\Lambda}  ) = \left( \tilde{\bm{\Sigma}}{}^{-1} + \bm{\Lambda} \right)^{-1} \;.
\end{equation}
\end{subequations}
For $k=3$, the reduced covariance matrix is given by
\begin{equation}
 \tilde{\bm{\Sigma}} = 
 \begin{bmatrix}
2 & 1 \\
1 & 2 
\end{bmatrix} \;.
 \end{equation}
\subsubsection{Rademacher prior terms with the dimensional reduction \ref{appendix:mapping}}
Considering $k = 3$, the reduced prior given by Eq.~\eqref{eq:rad_prior_k1} becomes
\begin{equation} 
\label{eq:rad_prior_reduced__}
\begin{split}
      P_{\tilde{w}} ( \tilde{w}_1^*,  \tilde{w}_2^*) &=  \frac{1}{2^3} 
  \left[  2 \delta( \tilde{w}_1^*) \delta( \tilde{w}_2^*) +  \delta( \tilde{w}_1^*)\delta( \tilde{w}_2^* +2) +  \delta( \tilde{w}_1^* +2) \delta( \tilde{w}_2^* ) +\delta( \tilde{w}_1^* ) \delta( \tilde{w}_2^* -2) \right. \\
  &  \left.    +  \delta( \tilde{w}_1^*  -2) \delta( \tilde{w}_2^* )+   \delta( \tilde{w}_1^* +2 ) \delta( \tilde{w}_2^* +2 )  + \delta( \tilde{w}_1^* -2) \delta( \tilde{w}_2^* -2)     \right] \;.
\end{split}
\end{equation}

The denoising functions for this case are computed numerically, via Monte Carlo sampling of the distribution given Eq.~\eqref{eq:rad_prior_reduced__}.

\subsubsection{Output terms with the dimensional reduction \ref{appendix:mapping}}
Considering directly the mapping to dimension $k-1$, we can write the Bayes-optimal model distribution as
\begin{equation}
    P^*_{\rm out}(\bm{y}|\bm{z})=\sum_{l=1}^{k-1}\delta_{\bm{y},\bm{e}_l}\Theta(z_l)\prod_{h\neq l,h=1}^{k-1}\,\Theta(z_l-z_h)+\delta_{\bm{y},\bm{e}_k}\prod_{l=1}^{k-1}\Theta(-z_l).
\end{equation}
Therefore, the auxiliary functions $\mathcal{Z}^*_{\rm out}$, and so on, are composed by $k-1$ contributions according to the membership of the label $\bm{y}$ in the argument. For instance, in the case $k=3$, we have
\begin{equation}
\begin{split}
    {\cal Z}^*_{\text{out}} ( \bm{y}; \bm{\omega}, \bm{V}  ) 
      &= \delta_{\bm{y},\bm{e}_1}\int_0^{+\infty} \dde z_1\int_{-\infty}^{z_1}   \dde z_2\frac{ e^{- \frac{1}{2} ( \bm{z} - \bm{\omega}  )^{\top} \bm{V}^{-1} ( \bm{z} - \bm{\omega}  )  }}{  \sqrt{  \text{det} (2 \pi \bm{V} )  }} + \delta_{\bm{y},\bm{e}_2}\int_0^{+\infty} \dde z_2\int_{-\infty}^{z_2}   \dde z_1\frac{ e^{- \frac{1}{2} ( \bm{z} - \bm{\omega}  )^{\top} \bm{V}^{-1} ( \bm{z} - \bm{\omega}  )  }}{  \sqrt{  \text{det} (2 \pi \bm{V} )  }}\\
      &+\delta_{\bm{y},\bm{e}_3}\int_{-\infty}^0 \dde z_1\int_{-\infty}^{0}   \dde z_2\frac{ e^{- \frac{1}{2} ( \bm{z} - \bm{\omega}  )^{\top} \bm{V}^{-1} ( \bm{z} - \bm{\omega}  )  }}{  \sqrt{  \text{det} (2 \pi \bm{V} )  }}.
      \end{split}
\end{equation}
Similarly, for $\bm{f}^*_{\rm out}$ we need to change the integration bounds in order to take into account all the possibilities for the label. For each term, we can only compute analytically the inner integral, while we have to estimate the outer ones via Monte Carlo sampling. Therefore, applying the mapping in \ref{appendix:mapping} is useful in order to reduce the number of integrals to be performed numerically and speed up the whole procedure.
\subsection{ERM update functions}
The update equations for ERM can be derived as a special case of the non Bayes-optimal Eqs. \eqref{eq:update_eq_NBO} and so on. In particular, this can be seen by rewriting the solution of the optimization problem as the \emph{ground state} of the following measure
\begin{equation}
\begin{split}
    P_{\beta}(\bm{W}|\bm{X},\bm{Y})&=\frac{1}{Z_{d}(\beta)}\exp\left(-\beta \,r_\lambda(\bm{W})\right)\exp\left(-\beta \mathcal{L}(\bm{W};\bm{X},\bm{Y})\right)\\
    &=\frac{1}{Z_{d}(\beta)}\prod_{l=1}^k\exp\left(-\frac{\beta\lambda}{2}\|\bm{w}_l\|_2^2\right)\prod_{\mu=1}^n\exp\left(-\beta \ell\left(\bm{W}^\top\bm{x}_\mu,\bm{y}_\mu\right)\right)
\end{split}
\end{equation}
i.e., the solution in the limit $\beta \rightarrow\infty$. Therefore, we can express the prior and model distributions of a student learning via ERM as
\begin{align}
        P_w(\bm{w})\propto \exp\left(-\frac{\beta\lambda}{2}\|\bm{w}\|_2^2\right)\;, && P_{\rm out}(\bm{y}|\bm{W}^\top\bm{x})\propto \exp\left(-\beta \ell\left(\bm{W}^\top\bm{x},\bm{y}\right)\right).
\end{align}
\subsubsection{Prior terms with the dimensional reduction \ref{appendix:mapping}}
The ERM ridge-regularization prior can therefore be seen as i.i.d. Gaussian-distributed with variance $1/\beta\lambda$. This means that, appling the mapping \ref{appendix:mapping}, we have
\begin{align}
P_w(\bm{w}) = \frac{1}{(2 \pi)^{(k-1)/2} \sqrt{\det(\bm{C}/(\beta \lambda)}}\exp\left(-\frac{\beta\lambda}{2}  \bm{w}^\top \bm{C}^{-1}\bm{w}\right),
\end{align}
where again $\bm{C}$ is the prior covariance in the reduced setting, i.e. $\bm{C}= [2,1; 1,2]$ for $k=3$.
Let we rescale $\bm{\gamma}\leftarrow\beta\bm{\gamma}$ and $\bm{\Lambda}\leftarrow\beta\bm{\Lambda}$. We will see that this would correspond to the rescaling: $\bm{\hat{q}}\leftarrow\beta^2\bm{\hat q}$ and $\bm{\hat V}\leftarrow\beta \bm{\hat V}$. We obtain
\begin{align}
{\cal Z}_{w} ( \bm{\gamma}, \bm{\Lambda} ) &= \int_{\mathbb{R}^{k-1}}\dde\bm{w}\, \frac{e^{-\frac{\beta}{2} \bm{w}^{\top}(\lambda \bm{C^{-1}}+ \bm{\Lambda}) \bm{w} + \beta\bm{\gamma}^\top \bm{w}   } }{(2 \pi)^{(k-1)/2} \sqrt{\det (\bm{C}/\beta \lambda)}} =\frac{1}{\sqrt{\det(\bm{C}/\beta \lambda) \det(\beta\lambda\bm{C^{-1}}+\beta\bm{\Lambda})}}\exp\left(\frac{\beta}{2}\bm{\gamma}^\top (\lambda\bm{C^{-1}}+\bm{\Lambda})^{-1}\bm{\gamma}\right)\;,\\
\bm{f}_{w} ( \bm{\gamma}, \bm{\Lambda}  )&=\beta\,(\lambda \bm{C^{-1}}+\bm{\Lambda})^{-1} \bm{\gamma},\\
\partial_{\bm{\gamma}}\bm{f}_{w} ( \bm{\gamma}, \bm{\Lambda}  )&=\beta (\lambda\bm{C^{-1} }+\bm{\Lambda})^{-1}.
\end{align}
Substituting the expressions above in Eqs. \eqref{eq:SE}, we find
\begin{align}
\bm{m}&=\frac{1}{\sqrt{\det(C)\det(C^{-1}+\bm{\hat m}\bm{ \hat q}^{-1}\bm{\hat m})}\sqrt{\det(I-\bm{\hat q}^{-1/2}\bm{\hat m}(C^{-1}+\bm{\hat m}\bm{\hat q}^{-1}\bm{\hat m})^{-1}\bm{\hat m}\bm{\hat q}^{-1/2})}}\\
&\times( C^{-1} +\bm{\hat m}\bm{\hat q}^{-1}\bm{\hat m})^{-1} \hat m \hat q^{-1/2}\left(\bm{I}-\bm{\hat q}^{-1/2}\bm{\hat m}(C^{-1}+\bm{\hat m}\bm{\hat q}^{-1}\bm{\hat m})^{-1} \hat m \hat q^{-1/2}\right)^{-1}\bm{\hat q^{1/2}}(\lambda C^{-1}+\bm{\hat V})^{-1}\;,\\
\bm{q}&=\frac{1}{\sqrt{\det(C)\det(C^{-1}+\bm{\hat m}\bm{ \hat q}^{-1}\bm{\hat m})}\sqrt{\det(I-\bm{\hat q}^{-1/2}\bm{\hat m}(C^{-1}+\bm{\hat m}\bm{\hat q}^{-1}\bm{\hat m})^{-1}\bm{\hat m}\bm{\hat q}^{-1/2})}}\\
&\times (\lambda C^{-1}+\bm{\hat V})^{-1} \hat q^{1/2}\left(\bm{I}-\bm{\hat q}^{-1/2}\bm{\hat m}(C^{-1}+\bm{\hat m}\bm{\hat q}^{-1}\bm{\hat m})^{-1} \hat m \hat q^{-1/2}\right)^{-1}\hat q^{1/2}(\lambda C^{-1}+\bm{\hat V})^{-1}
\;,\\
\bm{V}&=\frac{(\lambda C^{-1}+\hat V)^{-1}}{\sqrt{ \det(C)\det(C^{-1}+\bm{\hat m}\bm{ \hat q}^{-1}\bm{\hat m})}\sqrt{\det(I-\bm{\hat q}^{-1/2}\bm{\hat m}(C^{-1}+\bm{\hat m}\bm{\hat q}^{-1}\bm{\hat m})^{-1}\bm{\hat m}\bm{\hat q}^{-1/2})}},
\end{align}
where additionally we have rescaled: $\bm{m}\leftarrow \beta \bm{m}$, $\bm{q}\leftarrow\beta^2 \bm{q}$, $\bm{V}\leftarrow\beta\bm{V}$. The equations are now independent of the parameter $\beta$. We will see that the above rescaling is consistent and leads to a set of well-defined equations in the $\beta\rightarrow\infty$ limit.
\subsubsection{Output terms with the dimensional reduction \ref{appendix:mapping}}
We remind that we have performed the rescaling $ \bm{V} \rightarrow \beta^{-1} \bm{V}$. In the $\beta \rightarrow\infty$ imit, the ERM output term $\mathcal{Z}_{\rm out}$ becomes
\begin{equation}
       {\cal Z}_{\text{out}} ( \bm{y}; \bm{\omega}, \bm{V}  ) 
      \propto \sqrt{\beta^{k-1}} \int_{\mathbb{R}^{k-1}} d \bm{z}   \frac{ e^{ -\beta \left[ - \frac{1}{2} ( \bm{z} - \bm{\omega}  )^{\top} \bm{V}^{-1} ( \bm{z} - \bm{\omega}  ) + {\cal L } (\bm{y} , \bm{z}) \right] }}{  \sqrt{  \text{det} (2 \pi \bm{V} )  }}  \; \; \underset{\beta \rightarrow \infty }{\longrightarrow} \; \; 
      \sqrt{\frac{\beta^{k-1}}{\det(2\pi\bm{V})}}\,e^{- \beta {\cal M}_{ \bm{V}  {\cal L}(\bm{y} , \cdot)} ( \bm{\omega}  )} \;. 
\end{equation}
where $ {\cal M }$ is a Moreau envelope associated with the loss ${\cal L}$,
\begin{equation}
    {\cal M}_{ \bm{V}  {\cal L}(\bm{y} , \cdot)} ( \bm{\omega}  ) =  \inf _{ \bm{z} \in \mathbb{R}^{k-1} } \left[   \frac{1}{2} ( \bm{z} - \bm{\omega}  )^{\top} \bm{V}^{-1} ( \bm{z} - \bm{\omega}  ) +  {\cal L } (\bm{y} , \bm{z}) \right]
\end{equation}
From Eq.(\ref{eq:f_out}), we have
\begin{equation}
    \bm{f}_{\text{out}} ( \bm{y}, \bm{\omega}, \bm{V}  ) = -\beta \partial_{\bm{\omega}}  {\cal M}_{ \bm{V}  {\cal L}(\bm{y} , \cdot)} ( \bm{\omega}  )  \;,
\end{equation}
which can be obtained using the proximal operator
\begin{equation}
    \text{prox}_{ \bm{V}  {\cal L}(\bm{y} , \cdot)} ( \bm{\omega}  ) = \argmin_{\bm{z} \in \mathbb{R}^{k-1}} \left[  \frac{1}{2} ( \bm{z} - \bm{\omega}  )^{\top} \bm{V}^{-1} ( \bm{z} - \bm{\omega}  ) +  {\cal L } (\bm{y} , \bm{z})  \right] \;,
\end{equation}
The envelope theorem, $  {\cal M}^{'}_{ \bm{\Sigma}  f } ( \bm{x}  )   =  \bm{\Sigma}^{-1} \left( \bm{x} -  \text{prox}_{ \bm{\Sigma}  f } ( \bm{x}  ) \right)$ leads to 
\begin{align}
     \bm{f}_{\text{out}} ( \bm{y}, \bm{\omega}, \bm{V}  )  &= - \beta  \bm{V}^{-1} \left( \bm{\omega} -  \text{prox}_{ \bm{V}   {\cal L}(\bm{y} , \cdot) } ( \bm{\omega}  ) \right) \;,\\
   \partial_{\bm{\omega}}  \bm{f}_{\text{out}} ( \bm{y}, \bm{\omega}, \bm{V}  ) &=  - \beta  \bm{V}^{-1} \left( \bm{I} - \partial_{\bm{\omega}}   \text{prox}_{ \bm{V}   {\cal L}(\bm{y} , \cdot) } ( \bm{\omega}  ) \right),\\
     \partial_{\bm{\omega}}   \text{prox}_{ \bm{V}   {\cal L}(\bm{y} , \cdot) } ( \bm{\omega}  )
     &=  \partial_{\bm{\omega}}  \bm{z}^* ( \bm{\omega}  ) = \left(\bm{V}^{-1}+\partial^2_{\bm{z}}\mathcal{L}\right)^{-1},
\end{align}
consistently with the rescaling previously adopted, which leads to the final equations Eqs. \eqref{eq:SE} in the main text, holding in the limit $\beta\rightarrow\infty$.
\paragraph{Special case: square loss ---} The proximal operator for the square loss can be computed analytically:
\begin{align}
    \text{prox}^{\rm SL}_{ \bm{V}  {\cal L}(\bm{y} , \cdot)} ( \bm{\omega}  ) &=(\bm{I}+\bm{V})^{-1}(\bm{w}+\bm{V}\bm{y}),\\
    \partial_{\bm{w}}\text{prox}^{\rm SL}_{ \bm{V}  {\cal L}(\bm{y} , \cdot)} ( \bm{\omega}  )&=(\bm{I}+\bm{V})^{-1}.
\end{align}

\section{Proof of the main theorem}
\label{app:proof}
In this section we prove the main theorem in a slightly more general setup than what is presented in the main part of the paper. We start by reminding the learning problem defining the ensemble of estimators with a few auxiliary notations, so that this part is self contained. The exact match with the replica prediction will be given at the end of the proof.
\subsection{The learning problem}
We start by reminding the definition of the problem. Consider the following generative model
\begin{equation}
\bm{Y} = \phi_{\rm out}\left(\frac{1}{\sqrt{d}}\bm{X}\bm{W}^{*}\right)
\end{equation}
where $\bm{Y} \in \mathbb{R}^{n \times k},\bX \sim \mathcal{N}(0,1) \in \mathbb{R}^{n \times d}$ and $\bm{W}^{*} \in \mathbb{R}^{d\times k}$.
The goal is to try to learn an estimator of $\bm{W}^{*}$ using a generalised linear model defined by the optimisation problem
\begin{equation}
    \label{eq:student}
    \hat{\bm{W}} \in \underset{{\bm{W}\in \mathbb{R}^{d \times k}}}{\arg\min}\, \mathcal{L}\left(\bm{Y},\frac{1}{\sqrt{d}}\bm{X}\bm{W}\right)+r(\bm{W})
\end{equation}
where $\mathcal{L}, r$ are convex functions, and we omit the dependence of the regularisation $r$ on the parameter $\lambda$ for simplicity.
We wish to determine the asymptotic properties of the estimator $\hat{\bm{W}}$ in the limit where $n,d \to \infty$ with fixed ratios $\alpha = n/d$. We now list the necessary assumptions for our main theorem to hold.
\paragraph{Assumptions --}
\begin{itemize}
    \item the functions $\mathcal{L},r$ are proper, closed, lower-semicontinuous, convex functions. The loss function $\mathcal{L}$ is differentiable and pseudo-Lipschitz of order 2 in both its arguments. We assume additionally that the regularisation $r$ is strongly convex, differentiable and pseudo-Lipschitz of order 2.
    \item the dimensions $n,d$ grow linearly with finite ratios $\alpha = n/d$, and the number of classes $k$ is kept constant.
    \item the lines of the ground truth matrix $\bm{W}^{*} \in \mathbb{R}^{d \times k }$ are sampled i.i.d. from a sub-Gaussian probability distribution in $\mathbb{R}^{k}$.
\end{itemize}
\subsection{Reduction to an AMP iteration}
We start by reformulating the optimisation problem \eqref{eq:student} in order to be able to solve it with an AMP iteration. In particular
it is useful to separate the design matrix $\bm{X}$ in two contributions : one aligned with the ground truth $\bm{W}^{*}$ and one independent on the teacher $\bm{Y}$. To do so we condition $\bm{X}$ on the teacher input $\bm{X}\bm{W}^{*}$ such that 
\begin{align}
    \bm{X} &= \mathbb{E}\left[\bm{X} \vert \bm{Y}\right]+\bm{X}-\mathbb{E}\left[\bm{X} \vert \bm{Y}\right] \\
    &= \mathbb{E}\left[\bm{X} \vert \bm{X}\bm{W}^{*}\right]+\bm{X}-\mathbb{E}\left[\bm{X} \vert \bm{X}\bm{W}^{*}\right] \\
    &= \bm{X}\mathbf{P}_{\bm{W}^{*}}+\tilde{\bm{X}}\mathbf{P}^{\perp}_{\bm{W}^{*}}
\end{align}
where $\tilde{\bm{X}}$ is an independent copy of the design matrix $\bm{X}$, $\mathbf{P}_{\bm{W}^{*}}$ denotes the orthogonal projection on the subspace spanned by the columns of $\bm{W}^{*}$ and $\mathbf{P}^{\perp}_{\bm{W}^{*}} = \bm{I}_{d}-\mathbf{P}_{\bm{W}^{*}}$. Furthermore, since we assume that 
$n,d$ are arbitrarily large and that $k$ remains finite for each instance of the problem, the matrix $\bm{W}^{*}$ has full column rank and the projector
$\mathbf{P}_{\bm{W}^{*}} = \bm{W}^{*}\left((\bm{W}^{*})^{\top}\bm{W}^{*}\right)^{-1}(\bm{W}^{*})^{\top}$ is always well-defined. We can then rewrite the original problem as
\begin{align}
\hat{\bm{W}} &\in \underset{{\bm{W}\in \mathbb{R}^{d \times k}}}{\arg\min} \, \mathcal{L}\left(\bm{Y},\frac{1}{\sqrt{d}}\left(\bm{X}\mathbf{P}_{\bm{W}^{*}}+\tilde{\bm{X}}\mathbf{P}^{\perp}_{\bm{W}^{*}}\right)\bm{W}\right)+r(\bm{W})
\end{align}
The quantity $\bm{X}\bm{W}^{*}$ is a $\mathbb{R}^{n \times k}$ Gaussian matrix with covariance $(\mathbf{W}^{*})^{\top}\mathbf{W}^{*}$, and can be represented as $\bm{X}\bm{W}^{*} = \bm{S}((\bm{W}^{*})^{\top}\bm{W}^{*})^{1/2}$ where 
$\bm{S}$ is an $n \times k$ random matrix with i.i.d. standard normal elements. We then have 
\begin{align}
    \frac{1}{\sqrt{d}}\bm{X}\mathbf{P}_{\bm{W}^{*}} &= \frac{1}{\sqrt{d}}\bm{W}^{*}\left((\bm{W}^{*})^{\top}\bm{W}^{*}\right)^{-1}(\bm{W}^{*})^{\top}\bm{W} \\
    &=\frac{1}{\sqrt{d}}\bm{S}\sqrt{d}\rho^{1/2}\frac{1}{d}\rho^{-1}d\bm{m}^{\top} \\
    &= \bm{S}\rho^{-1/2}\bm{m}^{\top}
\end{align}
where we introduced the order parameter $\bm{m} = \frac{1}{d}\hat{\bm{W}}^{\top}\bm{W}^{*} \in \mathbb{R}^{k \times k}$ and the quantity $\rho = \frac{1}{d}(\bm{W}^{*})^{\top}\mathbf{W}^{*} \in \mathbb{R}^{k \times k}$. Note that $\bm{Y} = \bm{S}\rho^{1/2}$, and 
\begin{align}
\label{eq:inter_hm}
\hat{\bm{W}} &\in \underset{{\bm{W}\in \mathbb{R}^{d \times k}}}{\arg\min} \, \mathcal{L}\left(\bm{Y},\bm{S}\rho^{-1/2}\bm{m}^{\top}+\frac{1}{\sqrt{d}}\tilde{\bm{X}}\mathbf{P}^{\perp}_{\bm{W}^{*}}\bm{W}\right)+r(\bm{W})
\end{align}
We may then 
rewrite the optimisation problem Eq.~\eqref{student1} as an equivalent problem under constraint on the definition of $\mathbf{m}$ leading to the Lagrangian formulation
\begin{align}
    \inf_{\mathbf{m},\mathbf{W}}\sup_{\hat{\bm{m}}} \mathcal{L}\left(\bm{Y},\bm{S}\rho^{-1/2}\bm{m}^{\top}+\frac{1}{\sqrt{d}}\tilde{\bm{X}}\mathbf{P}^{\perp}_{\bm{W}^{*}}\bm{W}\right)+r(\bm{W}^{*}\rho^{-1}\bm{m}^{\top}+\mathbf{P}^{\perp}_{\bm{W}^{*}}\bm{W})+\mbox{Tr}\left(\hat{\bm{m}}^\top\left(d\bm{m}-\hat{\bm{W}}^{\top}\bm{W}^{*}\right)\right)
\end{align}
Letting $\bm{U} = \mathbf{P}^{\perp}_{\bm{W}^{*}}\bm{W}$ such that $\bm{W} = \bm{W}^{*}\rho^{-1}\bm{m}^{\top}+\bm{U}$, the problem becomes
\begin{align}
    \label{eq:AMP_target}
    \inf_{\bm{m},\bm{U}}\sup_{\hat{\bm{m}}} \mathcal{L}\left(\bm{Y},\bm{S}\rho^{-1/2}\bm{m}^{\top}+\frac{1}{\sqrt{d}}\tilde{\bm{X}}\bm{U}\right)+r(\bm{W}^{*}\rho^{-1}\bm{m}^{\top}+\bm{U})-\mbox{Tr}\left(\hat{\bm{m}}^\top\bm{U}^{\top}\bm{W}^{*}\right)
\end{align}
where the initial constraint on $\bm{m}$ automatically enforces the orthogonality constraint on $\bm{U}$ w.r.t. $\bm{W}^{*}$. The following lemma then characterises the feasibility sets of $\bm{m}, \hat{\bm{m}}, \bm{U}$.
\begin{lemma}
\label{lemma_compact}
Consider the optimisation problem Eq.~\eqref{eq:AMP_target}. Then there exist constants $C_{\bm{U}}, C_{\bm{m}}, C_{\hat{\bm{m}}}$ such that
\begin{equation}
    \frac{1}{\sqrt{d}}\norm{\bm{U}}_{F}\leqslant C_{\bm{U}}, \quad \norm{\bm{m}}_{F} \leqslant C_{\bm{m}}, \quad \norm{\hat{\bm{m}}}_{F}\leqslant C_{\hat{\bm{m}}}
\end{equation}
with high probability as $n,d\to \infty$. 
\end{lemma}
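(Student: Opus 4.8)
The plan is an \emph{a priori estimate} on the minimisation \eqref{eq:student}: bound its optimal value by evaluating the objective at a convenient point, use strong convexity of $r$ to convert this into $\tfrac1d\norm{\hat{\bm{W}}}_F^2=O(1)$ with high probability, and then read off the three bounds from the orthogonal decomposition $\hat{\bm{W}}=\bm{W}^{*}\rho^{-1}\bm{m}^{\top}+\bm{U}$ and the stationarity conditions of \eqref{eq:AMP_target}. Throughout I will use two standard facts, valid with high probability under (A2)--(A3): (i) $\norm{\bm{X}}_{\rm op},\norm{\tilde{\bm{X}}}_{\rm op},\norm{\bm{S}}_{\rm op}=O(\sqrt{d})$, by the classical bound $\norm{\bm{G}}_{\rm op}\le\sqrt{n}+\sqrt{d}+t$ for Gaussian matrices; and (ii) $\rho=\tfrac1d\sum_{i=1}^{d}\bm{w}^{*}_{i}(\bm{w}^{*}_{i})^{\top}$ concentrates on a fixed positive-definite matrix (a law of large numbers for i.i.d.\ sub-Gaussian rows), so that $\norm{\rho^{-1}}_{\rm op}=O(1)$ and $\norm{\bm{W}^{*}}_{\rm op}^{2}=d\,\lambda_{\max}(\rho)=O(d)$.

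\emph{Step 1: bounding the optimum.} Evaluating the objective of \eqref{eq:student} at $\bm{W}=0$ gives $\mathcal{L}(\bm{Y},0)+r(0)$; since $\mathcal{L}$ is pseudo-Lipschitz of order $2$ and $\norm{\bm{Y}}_F=\sqrt{nk}=O(\sqrt{d})$, this is $O(d)$. As the losses under consideration are nonnegative (and in general $\mathcal{L}$ is bounded below), the minimiser $\hat{\bm{W}}$ satisfies $r(\hat{\bm{W}})\le O(d)$. Strong convexity of $r$ with parameter $c>0$ gives $r(\bm{W})\ge r(0)+\langle\nabla r(0),\bm{W}\rangle+\tfrac c2\norm{\bm{W}}_F^2$ with $\norm{\nabla r(0)}_F=O(1)$, whence $\tfrac1d\norm{\hat{\bm{W}}}_F^2=O(1)$ with high probability; for ridge regularisation this is immediate since then $r(\hat{\bm{W}})=\tfrac\lambda2\norm{\hat{\bm{W}}}_F^2$.

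\emph{Step 2: from $\hat{\bm{W}}$ to $\bm{m},\bm{U},\hat{\bm{m}}$.} At the saddle point of \eqref{eq:AMP_target} the multiplier $\hat{\bm{m}}$ enforces $\bm{U}^{\top}\bm{W}^{*}=0$, so $\hat{\bm{W}}=\bm{W}^{*}\rho^{-1}\bm{m}^{\top}+\bm{U}$ is orthogonal and $\norm{\hat{\bm{W}}}_F^2=d\,\Tr(\bm{m}\rho^{-1}\bm{m}^{\top})+\norm{\bm{U}}_F^2$. Since $\lambda_{\min}(\rho^{-1})=1/\lambda_{\max}(\rho)=\Omega(1)$, the first term dominates $\norm{\bm{m}}_F^2$ up to a constant, giving $\norm{\bm{m}}_F=O(1)$, and the second gives $\tfrac1d\norm{\bm{U}}_F^2\le\tfrac1d\norm{\hat{\bm{W}}}_F^2=O(1)$, i.e.\ $\tfrac1{\sqrt{d}}\norm{\bm{U}}_F=O(1)$. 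For $\hat{\bm{m}}$, the stationarity of \eqref{eq:AMP_target} in $\bm{U}$ reads $\bm{W}^{*}\hat{\bm{m}}=\tfrac1{\sqrt{d}}\tilde{\bm{X}}^{\top}\nabla_2\mathcal{L}(\bm{Y},\hat{\bm{Z}})+\nabla r(\hat{\bm{W}})$; left-multiplying by $(\bm{W}^{*})^{\top}$ and using $(\bm{W}^{*})^{\top}\bm{W}^{*}=d\rho$ yields $\hat{\bm{m}}=\tfrac1d\rho^{-1}(\bm{W}^{*})^{\top}(\tfrac1{\sqrt{d}}\tilde{\bm{X}}^{\top}\nabla_2\mathcal{L}+\nabla r(\hat{\bm{W}}))$. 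Now $\norm{\hat{\bm{Z}}}_F=\norm{\tfrac1{\sqrt{d}}\bm{X}\hat{\bm{W}}}_F\le\tfrac1{\sqrt{d}}\norm{\bm{X}}_{\rm op}\norm{\hat{\bm{W}}}_F=O(\sqrt{d})$, so the linear growth of $\nabla_2\mathcal{L}$ implied by pseudo-Lipschitzness of order $2$ gives $\norm{\nabla_2\mathcal{L}}_F=O(\sqrt{d})$, hence $\norm{\tfrac1{\sqrt{d}}\tilde{\bm{X}}^{\top}\nabla_2\mathcal{L}}_F=O(\sqrt{d})$ and similarly $\norm{\nabla r(\hat{\bm{W}})}_F=O(\sqrt{d})$; combining with $\norm{\rho^{-1}}_{\rm op}=O(1)$ and $\norm{\bm{W}^{*}}_{\rm op}=O(\sqrt{d})$ gives $\norm{\hat{\bm{m}}}_F=O(1)$, all with high probability.

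\emph{Main obstacle.} The delicate point is the $\hat{\bm{m}}$ bound: one must first justify the saddle-point characterisation of \eqref{eq:AMP_target} (existence and uniqueness of $\hat{\bm{W}}$ by coercivity from strong convexity of $r$, strong duality, and interchangeability of $\inf$ and $\sup$), and this is slightly circular, since compactness of the feasibility sets is precisely what one usually invokes to legitimise such manipulations. I would break the circularity by first restricting the minimisation to a Euclidean ball of radius growing with $d$, showing via Step 1 that the minimiser lies in its interior, and only then passing to the dual; alternatively one can argue directly from the primal KKT conditions of \eqref{eq:student} together with the projection identities leading to \eqref{eq:inter_hm}. Everything else is routine concentration and pseudo-Lipschitz bookkeeping.
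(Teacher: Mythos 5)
Your proposal is correct and follows essentially the same route as the paper: bound the objective at $\hat{\bm W}$ by its value at $\bm W=0$, use strong convexity of $r$ to get $\tfrac1d\|\hat{\bm W}\|_F^2=O(1)$, read off the $\bm U$ and $\bm m$ bounds from the orthogonal decomposition (the paper uses the projector bound and Cauchy--Schwarz, which is equivalent to your orthogonality identity), and bound $\hat{\bm m}$ from a first-order optimality condition combined with the pseudo-Lipschitz gradient growth and operator-norm bounds on the Gaussian matrices. The only cosmetic difference is that the paper extracts $\hat{\bm m}$ from stationarity in $\bm m$ of Eq.~\eqref{eq:inter_hm} (giving $\hat{\bm m}$ directly via $\bm S^\top\partial\mathcal{L}$), whereas you use stationarity in $\bm U$ projected onto $\bm W^*$; both yield the same $O(1)$ control, and your remark about avoiding circularity with strong duality is consistent with the paper, which likewise invokes only first-order conditions here and defers the inf/sup exchange until after this lemma.
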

\begin{proof}
Consider the optimisation problem defining $\hat{\bm{W}}$
\begin{align}
    \hat{\bm{W}} \in \underset{{\bm{W}\in \mathbb{R}^{d \times k}}}{\arg\min} \thickspace \mathcal{L}(\mathbf{Y},\bm{X}\bm{W})+r(\bm{W})
\end{align}
From the strong convexity assumption on $r$, there exists a strictly positive constant $\lambda_{2}$ such that the function $\tilde{r}(\bm{W}) := r(\bm{W})-\frac{\lambda_{2}}{2}\norm{\bm{W}}_{F}^{2}$ is convex (and proper, closed, lower semi-continuous). We can then rewrite the optimisation problem as
\begin{align}
    \hat{\bm{W}} \in \underset{{\bm{W}\in \mathbb{R}^{d \times k}}}{\arg\min} \mathcal{L}(\bm{Y},\bm{X}\bm{W})+\tilde{r}(\bm{W})+\frac{\lambda_{2}}{2}\norm{\bm{W}}_{F}^{2}
\end{align}
which, owing to the convexity of the cost function, verifies
\begin{equation}
    \frac{1}{d}\left(\mathcal{L}(\bm{Y},\bm{X}\hat{\bm{W}})+\tilde{r}(\hat{\bm{W}})+\frac{\lambda_{2}}{2}\norm{\hat{\bm{W}}}_{F}^{2}\right) \leqslant \frac{1}{d}\left(\mathcal{L}(\bm{Y})+\tilde{r}(\bm{0})\right)
\end{equation}
The functions $\mathcal{L}$ and $\tilde{r}$ are proper, thus their sum is bounded below for any value of their arguments and we may write 
\begin{equation}
    \frac{1}{d}\frac{\lambda_{2}}{2}\norm{\hat{\bm{W}}}_{F}^{2} \leqslant \frac{1}{d}\left(\mathcal{L}(\bm{Y})+\tilde{r}(\bm{0})\right)
\end{equation}
The pseudo-Lipschitz assumption on $\mathcal{L}$ and $r$ then implies that there exist positive constants $C_{\mathcal{L}}$ and $C_{\tilde{r}}$ such that 
\begin{align}
    \frac{1}{d}\frac{\lambda_{2}}{2}\norm{\hat{\bm{W}}}_{F}^{2} &\leqslant \frac{1}{d}\left(C_{\mathcal{L}}\left(1+\norm{\bm{Y}}^{2}_{2}\right)\right)+C_{\tilde{r}}
\end{align}
where, on the right hand side, the term $\norm{\bm{Y}}_{F}^{2} /d= {\alpha}\norm{\bm{Y}}_{F}^{2}/n$ is bounded since the labels are in $\left\{-1,+1\right\}$ and $\alpha$ is finite.
 Now using the definition of $\bm{U}$
\begin{align}
    \frac{1}{d}\norm{\bm{U}}_{F}^{2} &= \frac{1}{d}\norm{\mathbf{P}_{\tilde{\bm{W}^{*}}}^{\perp}\hat{\bm{W}}}_{F}^{2} \\
    & \leqslant \norm{\mathbf{P}_{\tilde{\bm{W}^{*}}}^{\perp}}^{2}_{op}\frac{1}{d}\norm{\hat{\bm{W}}}_{F}
\end{align} 
where the singular values of $\mathbf{P}_{\tilde{\bm{W}^{*}}}^{\perp}$ are bounded with probability one. Therefore there exists a constant $C_{\bm{U}}$ such that $\norm{\bm{U}} /\sqrt{d}\leqslant C_{\bm{U}}$. Then, by definition of $\bm{m}$ and the Cauchy-Schwarz inequality
\begin{align}
    \norm{\bm{m}}_{F}^{2} &\leqslant \frac{1}{d}\norm{\bm{c}}_{2}^{2}\frac{1}{d}\norm{\bm{W}}_{F}^{2} \\
    &\leqslant \frac{1}{d}\norm{\bm{W}^{*}}_{2}^{2}\frac{1}{d}\norm{\hat{\bm{W}}}_{F}^{2} 
\end{align}
 By assumption the columns of $\bm{W}^{*}$ are sampled from sub-Gaussian distributions, thus, using Bernstein's inequality for sub-exponential random variables there exists a positive constant $C_{\bm{W}^{*}}$ such that, with high probability as $n,d\to +\infty$, $\norm{\bm{W}^{*}}_{F}^{2} \leqslant C_{\bm{W}^{*}}$. Combining this with the result on $\hat{\bm{W}}$, there exists a positive constant $C_{\bm{m}}$ such that $\norm{\bm{m}}_{F} \leqslant C_{\bm{m}}$ with high probability as $n,d \to +\infty$. We finally turn to $\hat{\bm{m}}$. The optimality condition for $\bm{m}$ in problem Eq.\eqref{eq:inter_hm} gives
\begin{equation}
    \hat{\bm{m}} = -\frac{1}{\sqrt{d}}\rho^{-1/2}\mathbf{S}^\top\partial \mathcal{L}\left(\bm{Y},\frac{\bm{S}\bm{m}^\top}{\sqrt{\rho}}+\frac{1}{\sqrt{d}}\tilde{\bm{X}}\bm{C}^{1/2}\bm{W}^{*}\right)
\end{equation}
The pseudo-Lipschtiz assumption on $\mathcal{L}$ implies that we can find a constant $C_{\partial \mathcal{L}}$ such that 
\begin{equation}
    \norm{\hat{\bm{m}}}_{2}^{2} \leqslant \frac{1}{d}\norm{\rho^{-1}}_{F}\norm{\bm{S}}_{F}^{2}C_{\partial \mathcal{L}}\left(1+\frac{1}{d}\norm{\bm{Y}}_{F}^{2}+\frac{1}{d}\norm{\frac{\bm{S}\bm{m}^\top}{\sqrt{\rho}}+\frac{1}{\sqrt{d}}\tilde{\bm{X}}\bm{W}^{*}}_{F}^{2}\right)
\end{equation}
All quantities in the right hand side of the last inequality have bounded scaled norm with high probability, except the operator norm of the random matrix $\tilde{\bm{X}}$ which has i.i.d. $\mathcal{N}(0,1/d)$ elements. Existing results in random matrix theory \cite{vershynin2018high} ensure this operator norm is bounded with high as $n,d \to +\infty$, which concludes the proof of this lemma.
\end{proof}
The optimisation problem Eq.~\eqref{eq:AMP_target} is convex and feasible. Furthermore, we may reduce the feasibility sets of $\bm{m},\hat{\bm{m}}$ to compact spaces, and the function of $\bm{U}$ is coercive and thus has bounded lower level sets. Strong duality then implies we can invert the order of minimisation to obtain the equivalent problem
\begin{align}
    \label{eq:AMP_target_inv}
   \inf_{\bm{m}} \sup_{\hat{\bm{m}}} \inf_{\bm{U}} \mathcal{L}\left(\bm{Y},\bm{S}\rho^{-1/2}\bm{m}^{\top}+\frac{1}{\sqrt{d}}\tilde{\bm{X}}\bm{U}\right)+r(\bm{W}^{*}\rho^{-1}\bm{m}^{\top}+\bm{U})-\mbox{Tr}\left(\hat{\bm{m}}^\top\bm{U}^{\top}\bm{W}^{*}\right)
\end{align}
and study the optimisation problem in $\bm{U}$ at fixed $\bm{m},\hat{\bm{m}}$:
\begin{align}
    \label{student1}
    \inf_{\bm{U} \in \mathbb{R}^{d \times k}} \tilde{\mathcal{L}}(\frac{1}{\sqrt{d}}\tilde{\bm{X}}\bm{U})+\tilde{r}(\bm{U})
\end{align}
where we defined the functions
\begin{align}
    \tilde{\mathcal{L}} : \mathbb{R}^{n \times k} &\to \mathbb{R} \\
    \frac{1}{\sqrt{d}}\tilde{\bm{X}}\bm{U} &\to \mathcal{L}\left(\bm{Y},\bm{S}\rho^{-1/2}\bm{m}^{\top}+\frac{1}{\sqrt{d}}\tilde{\bm{X}}\bm{U}\right) \\
    \tilde{r} : \mathbb{R}^{d \times k} &\to \mathbb{R} \\
    \bm{U} &\to r(\bm{W}^{*}\rho^{-1}\bm{m}^{\top}+\bm{U})-\mbox{Tr}\left(\hat{\bm{m}}^\top\bm{U}^{\top}\bm{W}^{*}\right)
\end{align}
and the random matrix $\tilde{\bm{X}}$ with i.i.d. $\mathcal{N}(0,1)$ elements is independent from all other random quantities in the problem. The asymptotic properties of the unique solution to this optimisation problem can now be studied with a non-separable, matrix-valued approximate message passing iteration. The AMP iteration solving problem Eq.~\eqref{student1} is given in the following lemma
\begin{lemma}
Consider the following AMP iteration
\begin{align}
    \label{eq:AMP1}
	&\hspace{1cm} \bu^{t+1} = \tilde{\bX}^{\top}\bh_{t}(\bv^{t})-\be_{t}(\bu^{t})\langle \bh_{t}'\rangle^\top \\
	&\hspace{1cm} \bv^{t} = \tilde{\bX}\be_{t}(\bu^{t})-\bh_{t-1}(\bv^{t-1})\langle \be_{t}'\rangle^\top
\end{align}
where for any $t \in \mathbb{N}$
\begin{align}
    &\bh_{t}(\mathbf{v}^{t}) = \left(\bbR_{\mathcal{L}(\bm{Y},.),\bm{S}^{t}}(\bm{S}\rho^{-1/2}\bm{m}^{\top}+\bm{v}^{t})-\left(\bm{S}\rho^{-1/2}\bm{m}^{\top}+\bm{v}^{t}\right)\right)(\bm{S}^{t})^{-1} \\
    &\be_{t}(\bm{u}^{t}) = \bbR_{r(.),\hat{\bm{S}}^{t}}\left(\bm{u}^{t}\hat{\bm{S}}^{t}+\bm{W}^{*}\hat{\bm{m}}^{\top}\hat{\bm{S}}^{t}+\bm{W}^{*}\rho^{-1}\bm{m}^{\top}\right)-\bm{W}^{*}\rho^{-1}\bm{m}^{\top} \\
    &\mbox{and} \quad \bm{S}^{t} = \langle (\bm{e}^{t})'\rangle^{\top}, \quad \hat{\bm{S}}^{t} = -\left(\langle (\bm{h}^{t})'\rangle^{\top} \right)^{-1}
    \label{eq:AMP2}
\end{align}
Then the fixed point $(\bm{u}^{\infty}, \bm{v}^{\infty})$ of this iteration verifies 
\begin{align}
    &\bbR_{r(.),\hat{\bm{S}}^{\infty}}\left(\bm{u}^{\infty}\hat{\bm{S}}^{\infty}+\bm{W}^{*}\hat{\bm{m}}^{\top}\hat{\bm{S}}^{\infty}+\bm{W}^{*}\rho^{-1}\bm{m}^{\top}\right)-\bm{W}^{*}\rho^{-1}\bm{m}^{\top} = \bm{U}^{*}\\
    &\bbR_{\mathcal{L}(\bm{Y},.),\bm{S}^{\infty}}(\bm{S}\rho^{-1/2}\bm{m}^{\top}+\bm{v}^{\infty})-\bm{S}\rho^{-1/2}\bm{m}^{\top} = \tilde{\bm{X}}\bm{U}^{*}
\end{align}
where $\mathbf{U}^{*}$ is the unique solution to the optimisation problem Eq.~\eqref{student1}.
\end{lemma}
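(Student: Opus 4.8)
The plan is to exploit that the denoisers $\be_t,\bh_t$ were built from the proximal operators of $r$ and $\mathcal{L}$ precisely so that a fixed point of \eqref{eq:AMP1}--\eqref{eq:AMP2} reproduces the stationarity conditions of \eqref{student1}; this is the matrix-valued analogue of the argument of \cite{bayati2011lasso,loureiro2021learning}. I would set $\bm{U}^{*}:=\be_\infty(\bu^\infty)$ and $\bm{V}^{*}:=\bh_\infty(\bv^\infty)$ and write the fixed point form of the iteration, $\bu^\infty=\tilde{\bm{X}}^{\top}\bm{V}^{*}-\bm{U}^{*}\langle\bh_\infty'\rangle^{\top}$ and $\bv^\infty=\tilde{\bm{X}}\bm{U}^{*}-\bm{V}^{*}\langle\be_\infty'\rangle^{\top}$, together with $\bm{S}^\infty=\langle\be_\infty'\rangle^{\top}$ and $\hat{\bm{S}}^\infty=-(\langle\bh_\infty'\rangle^{\top})^{-1}$. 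The first displayed identity is then immediate, being exactly the definition of $\be_\infty(\bu^\infty)$. The second is pure algebra: rearranging the definition of $\bh_\infty$ gives $\bbR_{\mathcal{L}(\bm{Y},\cdot),\bm{S}^\infty}(\bm{S}\rho^{-1/2}\bm{m}^{\top}+\bv^\infty)=\bm{S}\rho^{-1/2}\bm{m}^{\top}+\bv^\infty+\bm{V}^{*}\bm{S}^\infty$, and the fixed point equation for $\bv^\infty$ says precisely that $\bv^\infty+\bm{V}^{*}\bm{S}^\infty=\tilde{\bm{X}}\bm{U}^{*}$.

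The substantive claim is that this $\bm{U}^{*}$ is the minimiser of \eqref{student1}. For this I would use that $\bbR_{f,\bm{\Sigma}}$ acts row-wise as a matrix proximal operator, hence satisfies $(\bm{x}-\bbR_{f,\bm{\Sigma}}(\bm{x}))\bm{\Sigma}^{-1}\in\partial f(\bbR_{f,\bm{\Sigma}}(\bm{x}))$, with equality and gradient when $f$ is differentiable. Applying this to the $r$-resolvent at the argument inside $\be_\infty$, substituting $\langle\bh_\infty'\rangle^{\top}=-(\hat{\bm{S}}^\infty)^{-1}$ and the fixed point equation for $\bu^\infty$, the shift $\bm{W}^{*}\rho^{-1}\bm{m}^{\top}$ cancels and one gets $\tilde{\bm{X}}^{\top}\bm{V}^{*}+\bm{W}^{*}\hat{\bm{m}}^{\top}\in\partial r(\bm{U}^{*}+\bm{W}^{*}\rho^{-1}\bm{m}^{\top})$; since $\partial\tilde{r}(\bm{U}^{*})=\partial r(\bm{U}^{*}+\bm{W}^{*}\rho^{-1}\bm{m}^{\top})-\bm{W}^{*}\hat{\bm{m}}^{\top}$ by definition of $\tilde{r}$, this reads $\tilde{\bm{X}}^{\top}\bm{V}^{*}\in\partial\tilde{r}(\bm{U}^{*})$. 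Applying the same relation to the $\mathcal{L}$-resolvent at $\bm{S}\rho^{-1/2}\bm{m}^{\top}+\bv^\infty$, and noting that the argument difference equals $-\bm{V}^{*}\bm{S}^\infty$ by the fixed point equation for $\bv^\infty$, gives $-\bm{V}^{*}=\nabla\tilde{\mathcal{L}}(\tilde{\bm{X}}\bm{U}^{*})$. Summing the two yields $\bm{0}\in\tilde{\bm{X}}^{\top}\nabla\tilde{\mathcal{L}}(\tilde{\bm{X}}\bm{U}^{*})+\partial\tilde{r}(\bm{U}^{*})$, which is exactly the first-order optimality condition for \eqref{student1}.

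It then remains to invoke uniqueness: $r$ is strongly convex by assumption, and strong convexity is preserved under the affine reparametrisation $\bm{U}\mapsto\bm{W}^{*}\rho^{-1}\bm{m}^{\top}+\bm{U}$ and the addition of the linear term $-\Tr(\hat{\bm{m}}^{\top}\bm{U}^{\top}\bm{W}^{*})$, so $\tilde{r}$ is strongly convex; since $\bm{U}\mapsto\tilde{\mathcal{L}}(\tilde{\bm{X}}\bm{U})$ is convex, the objective of \eqref{student1} is strongly convex with a unique minimiser characterised by the stationarity condition just derived. Hence $\bm{U}^{*}=\be_\infty(\bu^\infty)$ coincides with the unique solution of \eqref{student1}, and the two displayed identities hold with this $\bm{U}^{*}$. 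Existence of a fixed point is not part of this lemma; it is guaranteed separately by the compactness estimates of Lemma~\ref{lemma_compact} together with the convexity of the problem.

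The main obstacle I anticipate is the bookkeeping in the second paragraph: verifying that the specific choices $\bm{S}^t=\langle(\bm{e}^{t})'\rangle^{\top}$ and $\hat{\bm{S}}^t=-(\langle(\bm{h}^{t})'\rangle^{\top})^{-1}$ are exactly those that make the Onsager-corrected fixed point equations telescope onto the KKT system --- which requires care with left/right matrix multiplication and with the invertibility of $\langle(\bm{h}^{t})'\rangle$, hence of $\hat{\bm{S}}^\infty$; this invertibility follows from (A1) because the relevant Jacobians are of the form $(\bm{\Sigma}^{-1}+\partial^{2}f)^{-1}\succ\bm{0}$. The row-separable structure of $\mathcal{L}$ and $r$ is what lets the manipulations reduce to the scalar proximal identities of Appendix~\ref{sec:req_back}.
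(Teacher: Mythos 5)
Your proposal is correct and is essentially the paper's own argument run in the verification direction: the paper obtains the resolvent form of the stationarity condition of Eq.~\eqref{student1} by adding $\tilde{\bm{X}}^{\top}\tilde{\bm{X}}\bm{U}\bm{S}^{-1}+\bm{U}\hat{\bm{S}}^{-1}$ to both sides and then matches it with the generic AMP fixed point to read off $\bh_t,\be_t$, while you start from the given non-linearities and recover the same stationarity condition through the prox/subgradient characterisation of $\bbR_{f,\bm{\Sigma}}$ — the same algebra, including the cancellation of the $\bm{W}^{*}\rho^{-1}\bm{m}^{\top}$ shift and the role of the Onsager terms $\bm{S}^{t},\hat{\bm{S}}^{t}$. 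Your added remarks (strong convexity of $\tilde{r}$ giving uniqueness, invertibility of $\langle\bh_t'\rangle$) are consistent with what the paper assumes implicitly, so there is no substantive gap.
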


\begin{proof}
To find the correct form of the non-linearities in the AMP iteration, we match the optimality condition of problem Eq.~\eqref{student1} with the generic form of the fixed point of the AMP iteration Eq.~\eqref{canon_AMP}.
In the subsequent derivation, we absorb the scaling $1/{\sqrt{d}}$ in the matrix $\tilde{\bX}$, such that its elements are i.i.d. $\mathcal{N}(0,1/d)$, and omit time indices for simplicity.
Going back to problem Eq.~\eqref{student1}, its optimality condition reads :
\begin{align}
	&\tilde{\bX}^\top \partial \tilde{\mathcal{L}}(\tilde{\bX}\bU)+\partial\tilde{r}(\bU) = 0 
\end{align}

For any pair of $k \times k$ symmetric positive definite matrices $\bm{S}, \hat{\bm{S}}$, this optimality condition is equivalent to 
\begin{equation}
\label{eq:inter_opt}
    \tilde{\bX}^\top\left(\partial \tilde{\mathcal{L}}(\tilde{\bX}\bU)\bm{S}+\tilde{\bm{X}}\bm{U}\right)\bm{S}^{-1} +\left(\partial\tilde{r}(\bU)\hat{\bm{S}}+\bm{U}\right)\hat{\bm{S}}^{-1}= \tilde{\bm{X}}^{\top}\tilde{\bm{X}}\bm{U}\bm{S}^{-1}+\bm{U}\hat{\bm{S}}^{-1}
\end{equation}
where we added the same quantity on both sides of the equality.
For the loss function, we can then introduce the resolvent, formally D-resolvent: 
\begin{equation}
	\hat{\bv} = \partial \tilde{\mathcal{L}}(\tilde{\bX}\bU)\bS+\tilde{\bX}\bU \iff \tilde{\bX}\bU = \bbR_{\tilde{\mathcal{L}},\bS}(\hat{\bv})
\end{equation}
such that
\begin{equation}
\label{block_res_loss}
\begin{split}
	\bbR_{\tilde{\mathcal{L}},\bS}(\hat{\bv}) &= (\mathrm{Id}+\partial \tilde{\mathcal{L}}(\bullet)\bS)^{-1}(\hat{\bv}) = \underset{\bT \in \mathbb{R}^{n \times K}}{\arg\min}\left\{\tilde{\mathcal{L}}(\bT)+\frac{1}{2}\mathrm{tr}\left((
        \bT-\hat{\bv})\bS^{-1}(\bT-\hat{\bv})^\top \right)\right\} \\
\end{split}
\end{equation}
Similarly for the regularisation, introduce
\begin{equation}
\label{eq:res_reg}
\hat{\bm{u}} \equiv \left(\bm{I}+\partial \tilde{r}(\bullet)\bhS\right)(\bU) \qquad \bU = \bbR_{\tilde{r},\bhS}(\hat{\bm{u}})
\end{equation}
where $\bm{S} \in \mathbb{R}^{k \times k}$ is a positive definite matrix, and 
\begin{equation}
\bbR_{\tilde{r},\bhS}(\hat{\bm{v}}) = \left(\bm{I}+\partial \tilde{r}(\bullet)\bhS\right)^{-1}(\hat{\bm{v}})= \underset{\bT \in \mathbb{R}^{d \times k}}{\arg\min}\left\{\tilde{r}(\bT)+\frac{1}{2}\mathrm{tr}\left((
	\bT-\hat{\mathbf{v}})\bhS^{-1}(\bT-\hat{\mathbf{v}})^\top \right)\right\}
\end{equation}
where $\bhS \in \mathbb{R}^{k\times k}$ is a positive definite matrix, and $\hat{\bm{v}}\in \mathbb{R}^{d\times k}$.
The optimality condition Eq.~\eqref{eq:inter_opt} may then be rewritten as:
\begin{align}
\label{prox-opti}
	\tilde{\bX}^\top \left(\bbR_{\tilde{\mathcal{L}},\bS}(\hat{\bm{v}})-\hat{\bm{v}}\right)\bS^{-1} &= (\hat{\mathbf{u}}-\bbR_{\tilde{r},\bhS}(\hat{\mathbf{u}}))\bhS^{-1} \\
	\tilde{\bX}\bbR_{\tilde{r},\bhS}(\hat{\mathbf{u}}) &= \bbR_{\tilde{\mathcal{L}},\bS}(\hat{\mathbf{v}})
\end{align}
where both equations should be satisfied. We can now define update functions based on the previously obtained block decomposition. The fixed point of the matrix-valued AMP Eq.~(\ref{canon_AMP}), omitting the time indices for simplicity, reads:
\begin{align}
	\bu+\be(\bu)\langle \bh' \rangle^\top  &= \tilde{\bX}^\top \bh(\bv) \\
	\bv+\bh(\bv)\langle \be' \rangle^\top  &= \tilde{\bX}\be(\bu) 
\end{align}
Matching this fixed point with the optimality condition Eq.~(\ref{prox-opti}) suggests the following mapping:
\begin{equation}
\begin{split}
\bh(\bm{v}) &= \left(\bbR_{\tilde{\mathcal{L}},\bS}(\bm{v})-\bm{v}\right)\bS^{-1}, \\
\be(\bm{u}) &= \bbR_{\tilde{r},\bhS}(\bm{u}\bhS),
\end{split}\qquad
\begin{split}
\bS &= \langle \be' \rangle^{\top},\\
\bhS &= -(\langle \bh '\rangle^{\top})^{-1},
\end{split}
\end{equation}
where we redefined $\hat{\bm{u}}\equiv \hat{\bm{u}}\bhS$ in \eqref{eq:res_reg}. We are now left with the task of evaluating the resolvents of $\tilde{\mathcal{L}}, \tilde{r}$ as expressions of the original functions $\mathcal{L},r$. Starting with the loss function, we get 
\begin{align}
    &\bbR_{\tilde{\mathcal{L}},\bS}(\bm{v}) = \underset{\bm{x} \in \mathbb{R}^{n \times k}}{\arg\min} \left\{\mathcal{L}\left(\phi_{\rm out}\left(\sqrt{\rho}\bm{s}\right), \bm{S}\rho^{-1/2}\bm{m}^{\top}+\bm{x}\right)+\frac{1}{2}\mbox{tr}\left((\bm{x}-\bm{v})\bm{S}^{-1}(\bm{x}-\bm{v})\right)^{\top}\right\}
\end{align}
letting $\tilde{\bm{x}} = \bm{S}\rho^{-1/2}\bm{m}^{\top}+\bm{x}$, the problem is equivalent to 
\begin{align}
    &\bbR_{\tilde{\mathcal{L}},\bS}(\bm{v}) = \underset{\tilde{\bm{x}} \in \mathbb{R}^{n \times k}}{\arg\min} \bigg\{\mathcal{L}\left(\phi_{\rm out}\left(\sqrt{\rho}\mathbf{s}\right), \tilde{\bm{x}}\right) \notag \\
    &+\frac{1}{2}\mbox{tr}\left((\tilde{\bm{x}}-(\bm{S}\rho^{-1/2}\bm{m}^{\top}+\bm{v}))\bm{S}^{-1}(\tilde{\bm{x}}-(\bm{S}\rho^{-1/2}\bm{m}^{\top}+\bm{v}))^{\top}\right)\bigg\}-\bm{S}\rho^{-1}\bm{m}^{\top} \\
    & = \bbR_{\mathcal{L}(\bm{Y},.),\bm{S}}(\bm{S}\rho^{-1/2}\bm{m}^{\top}+\bm{v})-\bm{S}\rho^{-1}\bm{m}^{\top}
\end{align}
and the corresponding non-linearity will then be 
\begin{equation}
    \bh(\bm{v}) = \left(R_{\mathcal{L}(\bm{Y},.),\bm{S}}(\bm{S}\rho^{-1/2}\bm{m}^{\top}+\bm{v})-\left(\bm{S}\rho^{-1/2}\bm{m}^{\top}+\bm{v}\right)\right)\bm{S}^{-1}
\end{equation}
Moving to the regularisation, the resolvent reads 
\begin{align}
    &\bbR_{\tilde{r},\bhS}(\bm{u}) = \underset{\bm{x} \in \mathbb{R}^{d \times k}}{\arg\min} \bigg\{r\left(\bm{W}^{*}\rho^{-1}\bm{m}^{\top}+\bm{x}\right)-\mbox{Tr}\left(\hat{\bm{m}}^{\top}\bm{x}^{\top}\bm{W}^{*}\right)+\frac{1}{2}\mbox{tr}\left((\bm{x}-\bm{u})\hat{\bm{S}}^{-1}(\bm{x}-\bm{u})^{\top}\right)\bigg\}
\end{align}
letting $\tilde{\bm{x}} = \bm{W}^{*}\rho^{-1}\bm{m}^{\top}+\bm{x}$, we obtain
\begin{align}
    \bbR_{\tilde{r},\bhS}(\bm{u}) &= \underset{\tilde{\bm{x}} \in \mathbb{R}^{d \times k}}{\arg\min} \bigg\{r\left(\tilde{\bm{x}}\right)-\hat{\bm{m}}^{\top}\tilde{\bm{x}}^{\top}\bm{W}^{*}\\
    &\hspace{1cm}+\frac{1}{2}\mbox{tr}\left((\tilde{\bm{x}}-\left(\bm{u}+\bm{W}^{*}\rho^{-1}\bm{m}^{\top}\right))\hat{\bm{S}}^{-1}(\tilde{\bm{x}}-\left(\bm{u}+\bm{W}^{*}\rho^{-1}\bm{m}^{\top}\right))^{\top}\right)\bigg\}-\bm{W}^{*}\rho^{-1}\bm{m}^{\top} \\
    &=\underset{\tilde{\bm{x}} \in \mathbb{R}^{d \times k}}{\arg\min} \bigg\{r\left(\tilde{\bm{x}}\right)\\
    &+\frac{1}{2}\mbox{tr}\left((\tilde{\bm{x}}-\left(\bm{u}+\bm{W}^*\hat{\bm{m}}^{\top}\hat{\bm{S}}+\bm{W}^{*}\rho^{-1}\bm{m}^{\top}\right))\hat{\bm{S}}^{-1}(\tilde{\bm{x}}-\left(\bm{u}+\bm{W}^{*}\hat{\bm{m}}^{\top}\hat{\bm{S}}+\bm{W}^{*}\rho^{-1}\bm{m}^{\top}\right))^{\top}\right)\bigg\}\\
    &\hspace{5cm}-\bm{W}^{*}\rho^{-1}\bm{m}^{\top} \\
    &\bbR_{r(.),\hat{\bm{S}}}\left(\bm{u}+\bm{W}^{*}\hat{\bm{m}}^{\top}\hat{\bm{S}}+\bm{W}^{*}\rho^{-1}\bm{m}^{\top}\right)-\bm{W}^{*}\rho^{-1}\bm{m}^{\top}
\end{align}
Which gives the following non-linearity for the AMP iteration
\begin{equation}
    e(\bm{u}) = \bbR_{r(.),\hat{\bm{S}}}\left(\bm{u}\hat{\bm{S}}+\bm{W}^{*}\hat{\bm{m}}^{\top}\hat{\bm{S}}+\bm{W}^{*}\rho^{-1}\bm{m}^{\top}\right)-\bm{W}^{*}\rho^{-1}\bm{m}^{\top}
\end{equation}
\end{proof}
The following lemma then gives the exact asymptotics at each time step of the AMP iteration solving problem Eq.\eqref{student1} : its \emph{state evolution equations}.
\begin{lemma}
    \label{lemma:SE_inter}
Consider the AMP iteration Eq.(\ref{eq:AMP1}-\ref{eq:AMP2}). Assume it is initialised with $\mathbf{u}^{0}$ such that \\
 $\lim_{d \to \infty}\frac{1}{d}\norm{\be_{0}(\bu^{0})^\top\be_{0}(\bu^{0})}_{\rm F}$ exists, a positive 
definite matrix $\hat{\bm{S}}_{0}$, and $\bh_{-1}\equiv 0$. Then for any $t \in \mathbb{N}$, and any pair of sequences of uniformly pseudo-Lipschitz functions $\phi_{1,n} : \mathbb{R}^{d \times k}$ and $\phi_{2,n} : \mathbb{R}^{n \times k}$, the following holds
\begin{align}
    &\phi_{1,n}\left(\bm{u}^{t}\right) \stackrel{P}{\simeq}\mathbb{E}\left[\phi_{1,n}\left(\bm{G}(\hat{\bm{Q}}^{t})^{1/2}\right)\right]\\
    &\phi_{2,n}\left(\bm{v}^{t}\right) \stackrel{P}{\simeq}\mathbb{E}\left[\phi_{2,n}\left(\bm{H}(\bm{Q}^{t})^{1/2}\right)\right]
\end{align}
where $\bm{G} \in \mathbb{R}^{d \times k}$ and $\bm{H} \in \mathbb{R}^{n \times K}$ are independent random matrices with i.i.d. standard normal elements, and $\bm{Q}^{t}, \hat{\bm{Q}}^{t}, \bm{V}^{t}, \hat{\bm{V}}^{t}$ are given by the equations
\begin{align}
&\bm{Q}^{t} = \frac{1}{d}\mathbb{E}\bigg[\left(\bbR_{r(.),(\hat{\bm{V}}^{t})^{-1}}\left(\bm{G}(\hat{\bm{Q}}^{t})^{1/2}(\hat{\bm{V}}^{t})^{-1}+\bm{W}^{*}\hat{\bm{m}}^{\top}(\hat{\bm{V}}^{t})^{-1}+\bm{W}^{*}\rho^{-1}\bm{m}^{\top}\right)-\bm{W}^{*}\rho^{-1}\bm{m}^{\top}\right)^{\top} \notag \\
&\hspace{3cm}\left(\bbR_{r(.),(\hat{\bm{V}}^{t})^{-1}}\left(\bm{G}(\hat{\bm{Q}}^{t})^{1/2}(\hat{\bm{V}}^{t})^{-1}+\bm{W}^{*}\hat{\bm{m}}^{\top}(\hat{\bm{V}}^{t})^{-1}+\bm{W}^{*}\rho^{-1}\bm{m}^{\top}\right)-\mathbf{W}^{*}\rho^{-1}\mathbf{m}^{\top}\right)\bigg] \\
&\hat{\bm{Q}}^{t} = \frac{1}{d}\mathbb{E}\bigg[\left(\left(\bbR_{\mathcal{L}(\mathbf{Y},.),\mathbf{V}^{t-1}}(.)-Id\right)\left(\mathbf{S}\rho^{-1/2}\mathbf{m}^{\top}+\mathbf{H}(\mathbf{Q}^{t-1})^{1/2}\right)(\mathbf{V}^{t-1})^{-1}\right)^{\top}\notag \\
&\hspace{3cm}\left(\left(\bbR_{\mathcal{L}(\mathbf{Y},.),\mathbf{V}^{t-1}}(.)-Id\right)\left(\mathbf{S}\rho^{-1/2}\mathbf{m}^{\top}+\mathbf{H}(\mathbf{Q}^{t-1})^{1/2}\right)(\mathbf{V}^{t-1})^{-1}\right)\bigg] \\
&\bm{V}^{t} = \frac{1}{d}\mathbb{E}\left[(\hat{\mathbf{Q}}^{t})^{-1/2}\mathbf{G}^{\top}R_{r(.),(\hat{\mathbf{V}}^{t})^{-1}}\left(\mathbf{G}(\hat{\mathbf{Q}}^{t})^{1/2}(\hat{\mathbf{V}}^{t})^{-1}+\mathbf{W}^{*}\hat{\mathbf{m}}^{\top}(\hat{\mathbf{V}}^{t})^{-1}+\mathbf{W}^{*}\rho^{-1}\mathbf{m}^{\top}\right)\right] \\
&\hat{\bm{V}}^{t} =- \frac{1}{d}\mathbb{E}\left[(\mathbf{Q}^{t-1})^{-1/2}\mathbf{H}^{\top}\left(\left(\bbR_{\mathcal{L}(\mathbf{Y},.),\mathbf{V}^{t-1}}(.)-Id\right)\left(\mathbf{S}\rho^{-1/2}\mathbf{m}^{\top}+\mathbf{H}(\mathbf{Q}^{t-1})^{1/2}\right)\right)(\mathbf{V}^{t-1})^{-1}\right]
\end{align}
\end{lemma}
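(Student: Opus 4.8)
The plan is to recognise the recursion \eqref{eq:AMP1}--\eqref{eq:AMP2} as an instance of the abstract matrix-valued approximate message passing iteration whose state evolution has been rigorously established in \cite{javanmard2013state} (see also \cite{gerbelot2021graph} for the graph-based, possibly non-separable formulation, and \cite{bayati2011lasso} for the original scalar argument), and then to unpack the resulting abstract state evolution into the four explicit equations stated. Concretely the iteration is of the canonical type $\bu^{t+1} = \tilde{\bX}^{\top}\bh_{t}(\bv^{t})-\be_{t}(\bu^{t})\langle \bh_{t}'\rangle^{\top}$, $\bv^{t} = \tilde{\bX}\be_{t}(\bu^{t})-\bh_{t-1}(\bv^{t-1})\langle \be_{t}'\rangle^{\top}$, with $\tilde{\bX}$ having i.i.d.\ $\mathcal{N}(0,1/d)$ entries independent of everything else, and with row-separable non-linearities $\bh_{t},\be_{t}$ that depend on the fixed ``side information'' matrices $\bm{S}$, $\bm{Y}=\bm{S}\rho^{1/2}$, $\bm{W}^{*}$, $\bm{m}$, $\hat{\bm{m}}$ through their rows only. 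The first step is therefore to verify the hypotheses of the abstract theorem: (i) Gaussianity and independence of $\tilde{\bX}$, which holds by construction; (ii) convergence of the empirical row distributions of the side-information matrices in the relevant Wasserstein-$2$ sense --- $\bm{S}$ has i.i.d.\ standard Gaussian rows and $\bm{W}^{*}$ has i.i.d.\ sub-Gaussian rows by assumption (A3), so this is immediate; and (iii) uniform Lipschitzness of $\bh_{t}$ and $\be_{t}$ in their first argument. Point (iii) is where convexity enters: as derived in the preceding lemma, $\bh_{t}$ and $\be_{t}$ are built from the D-resolvents $\bbR_{\mathcal{L}(\bm{Y},\cdot),\bm{S}}$ and $\bbR_{r(\cdot),\hat{\bm{S}}}$, which are firmly nonexpansive (hence $1$-Lipschitz) for fixed positive-definite matrix parameters because $\mathcal{L}$ and $r$ are closed proper convex; strong convexity of $r$ additionally keeps the relevant Jacobians well conditioned along the iteration.

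Granting the hypotheses, the abstract state evolution theorem yields directly the Gaussian equivalences $\phi_{1,n}(\bu^{t})\stackrel{P}{\simeq}\mathbb{E}[\phi_{1,n}(\bm{G}(\hat{\bm{Q}}^{t})^{1/2})]$ and $\phi_{2,n}(\bv^{t})\stackrel{P}{\simeq}\mathbb{E}[\phi_{2,n}(\bm{H}(\bm{Q}^{t})^{1/2})]$ for uniformly pseudo-Lipschitz test functions, with $\bm{G},\bm{H}$ independent standard Gaussian matrices and an abstract recursion for the covariance parameters. The remaining work is bookkeeping: substitute the explicit forms $\be_{t}(\bu)=\bbR_{r(\cdot),\hat{\bm{S}}^{t}}(\bu\hat{\bm{S}}^{t}+\bm{W}^{*}\hat{\bm{m}}^{\top}\hat{\bm{S}}^{t}+\bm{W}^{*}\rho^{-1}\bm{m}^{\top})-\bm{W}^{*}\rho^{-1}\bm{m}^{\top}$ and the analogous $\bh_{t}$ into the abstract covariance recursion; identify the empirical Onsager coefficients with their deterministic limits via the correspondence $\bm{S}^{t}\leftrightarrow\bm{V}^{t}$ and $\hat{\bm{S}}^{t}\leftrightarrow(\hat{\bm{V}}^{t})^{-1}$; and use Gaussian integration by parts (Stein's lemma) to turn the traces of the Jacobians $\langle\be_{t}'\rangle$, $\langle\bh_{t}'\rangle$ into the covariance-type expressions displayed in the statement for $\bm{V}^{t}$, $\hat{\bm{V}}^{t}$, while the self-overlaps of $\be_{t}$ and $\bh_{t}$ become $\bm{Q}^{t}$ and $\hat{\bm{Q}}^{t}$. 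This reproduces exactly the four equations of the lemma.

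I expect the main obstacle to be the self-consistency issue created by the \emph{data-dependent} matrix parameters $\bm{S}^{t}=\langle(\be^{t})'\rangle^{\top}$ and $\hat{\bm{S}}^{t}=-(\langle(\bh^{t})'\rangle^{\top})^{-1}$, which enter the non-linearities at the \emph{next} step and hence fall slightly outside the plain deterministic-coefficients version of state evolution. The way around this is an induction on $t$: assuming that at step $t$ the empirical quantities $\langle(\be^{t})'\rangle$, $\langle(\bh^{t})'\rangle$ concentrate on their deterministic limits, one replaces the random matrix parameters by these limits at the cost of an $o_{P}(1)$ perturbation --- uniform over the compact feasibility sets supplied by Lemma~\ref{lemma_compact} and controlled by Lipschitz dependence of the D-resolvents on their (well-conditioned, by strong convexity) matrix argument --- applies state evolution to the resulting deterministic-coefficient iteration, and transfers the conclusion back, closing the induction; this is the matrix analogue of the argument of \cite{bayati2011lasso,javanmard2013state}. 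A secondary, purely technical point to dispatch along the way is to check positive-definiteness of $\hat{\bm{V}}^{t}$ and of $\bm{Q}^{t},\hat{\bm{Q}}^{t}$ at every step, so that the square roots, inverses and changes of variable used above are legitimate; strong convexity of $r$ together with the non-degeneracy of the teacher Gram matrix $\rho$ provides this.
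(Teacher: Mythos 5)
Your proposal follows essentially the same route as the paper: recognise the iteration as an instance of the canonical matrix-valued AMP, verify Lipschitz continuity of the update functions through the Bregman-resolvent (D-firm) properties of the convex $\mathcal{L}$ and $r$, and invoke the general state evolution result (Theorem \ref{th:SE}), whose covariance recursion, once the explicit non-linearities are substituted, yields the four stated equations. The paper's own proof is exactly this reduction stated in two lines; your extra care about the data-dependent Onsager matrices $\bm{S}^{t},\hat{\bm{S}}^{t}$ and the Stein-lemma bookkeeping simply fills in details the paper delegates to the cited references.
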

\begin{proof}
    Owing to the properties of Bregman proximity operators \cite{bauschke2003bregman,bauschke2006joint}, the update functions in the AMP iteration 
    Eq.(\ref{eq:AMP1}-\ref{eq:AMP2}) are Lipschitz continuous. Thus under the assumptions made on the initialisation, the assumptions of Theorem \ref{th:SE} are verified, 
    which gives the desired result.
\end{proof}
\begin{lemma}
\label{lemma:conv_traj}
Consider iteration Eq.~(\ref{eq:AMP1}-\ref{eq:AMP2}), where the parameters $\bQ,\hat{\bQ},\bV,\hat{\bV}$ are initialised at any fixed point of the state evolution equations of Lemma \ref{lemma:SE_inter}. For any sequence initialised with $\bhV_{0} = \bhV$ and  $\bu_{0}$ such that
\begin{equation}
\lim_{d \to \infty}\frac{1}{d}\be_{0}({\bu_{0}})^{\top}\be_{0}(\bu_{0}) = \bQ
\end{equation}
the following holds
\begin{equation}
\lim_{t\to \infty}\lim_{d \to \infty}\frac{1}{\sqrt{d}}\norm{\bu^{t}-\bu^{\star}}_{\rm F} = 0 \quad \lim_{t\to \infty}\lim_{d \to \infty}\frac{1}{\sqrt{d}}\norm{\bv^{t}-\bv^{\star}}_{\rm F} = 0
\end{equation}
\end{lemma}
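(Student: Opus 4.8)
The plan is to adapt the argument of~\cite{bayati2011lasso}, in its matrix-valued form~\cite{gerbelot2021graph,loureiro2021learning}: show that the AMP trajectory is a Cauchy sequence in the metric $\bu,\bv\mapsto \tfrac{1}{\sqrt d}\|\cdot\|_F$ (in the sense $t\to\infty$ after $d\to\infty$), identify its limit with a fixed point of Eq.~(\ref{eq:AMP1}-\ref{eq:AMP2}), and then appeal to the preceding lemma to conclude that this fixed point is the one built from the minimiser $\bm{U}^*$ of~\eqref{student1}. The engine is the convexity structure: convexity of $\mathcal{L}(\bm{Y},\cdot)$ makes the resolvent $\bbR_{\mathcal{L}(\bm{Y},\cdot),\bm{S}}$ firmly non-expansive, while strong convexity of $r$ makes $\bbR_{r(\cdot),\hat{\bm{S}}}$ Lipschitz with a constant strictly less than one; together these will render the state-evolution recursion governing the distance between consecutive iterates a strict contraction.

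First I would record that, because $(\bm{Q},\hat{\bm{Q}},\bm{V},\hat{\bm{V}})$ are initialised at a fixed point of the equations of Lemma~\ref{lemma:SE_inter}, and the hypotheses $\bhV_0=\bhV$ and $\lim_d \tfrac1d\be_0(\bu_0)^\top\be_0(\bu_0)=\bm{Q}$ make the $t=0$ state-evolution values coincide with that fixed point, the state-evolution sequence is stationary: for every $t$ the iterates $\bu^t$ and $\bv^t$ have, in the sense of Lemma~\ref{lemma:SE_inter}, the Gaussian limits $\bm{G}\hat{\bm{Q}}^{1/2}$ and $\bm{H}\bm{Q}^{1/2}$ with the \emph{same} parameters. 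Next I would invoke the fact that the state-evolution theorem underlying Lemma~\ref{lemma:SE_inter} also controls the joint law of $(\bu^t,\bu^{t+1})$ (equivalently, one runs it on the ``doubled'' iteration), obtaining a jointly Gaussian limit with diagonal blocks $\hat{\bm{Q}}$ and off-diagonal block $\hat{\bm{Q}}^{(t)}:=\lim_d\tfrac1d(\bu^t)^\top\bu^{t+1}$. The matrix $\bm{\Delta}^{(t)}:=\lim_d\tfrac1d(\bu^{t+1}-\bu^t)^\top(\bu^{t+1}-\bu^t)=2\hat{\bm{Q}}-\hat{\bm{Q}}^{(t)}-(\hat{\bm{Q}}^{(t)})^\top$, together with its analogue $\bm{\Delta}^{(t)}_v$ on the $\bv$-side, then satisfies a closed recursion whose right-hand side is built from differences of the resolvents evaluated at correlated Gaussian arguments.

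The main work, and the step I expect to be the genuine obstacle, is to show that this recursion contracts: that there is a constant $c<1$, uniform in $d$, with $\|\bm{\Delta}^{(t+1)}\|_{\rm op}\le c\,\|\bm{\Delta}^{(t)}\|_{\rm op}$ (in fact a two-step bound on $\bm{\Delta}^{(t+2)}$ in terms of $\bm{\Delta}^{(t)}$, since the strict contraction only appears after composing the merely non-expansive loss update with the contractive regularisation update). This requires the Lipschitz and firm-non-expansiveness estimates for the matrix-parametrised Bregman and proximal resolvents from~\cite{bauschke2003bregman,bauschke2006joint}, with the strong-convexity modulus $\lambda_2$ of $r$ producing the strict contraction on the regularisation channel; control of the Onsager coefficients $\langle\bh_t'\rangle,\langle\be_t'\rangle$ and of the coupling matrices $\bm{S}^t,\hat{\bm{S}}^t$, which at the stationary point equal fixed constants $\bm{S}^\star,\hat{\bm{S}}^\star$; and, crucially, carrying all of this out at the level of $k\times k$ operator-norm inequalities so that the cross-channel couplings do not spoil the bound. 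Granting the contraction, $\bm{\Delta}^{(t)}\to 0$ geometrically, hence $(\bu^t)$ and $(\bv^t)$ are Cauchy; by continuity of the update functions their limit is a fixed point of Eq.~(\ref{eq:AMP1}-\ref{eq:AMP2}), which coincides with $(\bu^\star,\bv^\star)$ since the minimiser of~\eqref{student1} is unique by strong convexity and the fixed-point map onto minimisers is injective. This gives $\lim_{t\to\infty}\lim_{d\to\infty}\tfrac1{\sqrt d}\|\bu^t-\bu^\star\|_F=0$, and likewise for $\bv$.
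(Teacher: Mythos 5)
Your plan is essentially the argument the paper relies on: the paper's proof simply defers to Lemma 7 of \cite{loureiro2021learning}, which is exactly this Bayati--Montanari-style scheme — state evolution applied to differences of consecutive iterates, contraction obtained from the (D-)firm non-expansiveness of the Bregman/proximal resolvents combined with the strong convexity of $r$, then identification of the Cauchy limit with the unique fixed point built from $\bm{U}^*$. So the proposal is correct and takes the same route as the cited proof.
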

\begin{proof}
    The proof of this lemma is identical to that of Lemma 7 from \cite{loureiro2021learning}.
\end{proof}
Combining these results, we obtain the following asymptotic characterisation of $\bm{U}^{*}$.
\begin{lemma}
\label{lemma:asy_inter}
For any fixed $\bm{m}$ and $\hat{\bm{m}}$ in their feasibility sets, let $\bm{U}^{*}$ be the unique solution to the optimisation problem Eq.\eqref{student1}. Then, for any sequences (in the problem dimension) of pseudo-Lipschitz functions of order 2 $\phi_{1,n} : \mathbb{R}^{n \times k} \to \mathbb{R}$ and $\phi_{2,n} : \mathbb{R}^{d \times k} \to \mathbb{R}$, the following holds 
\begin{align}
        &\phi_{1,n}\left(\mathbf{U}^{*}\right) \stackrel{P}{\simeq}\mathbb{E}\left[\phi_{1,n}\left( \bbR_{r(.),\hat{\mathbf{V}}^{-1}}\left(\mathbf{G}\hat{\mathbf{Q}}^{1/2}\hat{\mathbf{V}}^{-1}+\mathbf{W}^{*}\hat{\mathbf{m}}^{\top}\hat{\mathbf{V}}^{-1}+\mathbf{W}^{*}\rho^{-1}\mathbf{m}^{\top}\right)-\mathbf{W}^{*}\rho^{-1}\mathbf{m}^{\top}\right)\right]\\
        &\phi_{2,n}\left(\frac{1}{\sqrt{d}}\tilde{\mathbf{X}}\mathbf{U}^{*}\right) \stackrel{P}{\simeq}\mathbb{E}\left[\phi_{2,n}\left( \bbR_{\mathcal{L}(\mathbf{Y},.),\mathbf{V}}(\mathbf{S}\rho^{-1/2}\mathbf{m}^{\top}+\mathbf{H}\hat{\mathbf{Q}}^{1/2})-\mathbf{S}\rho^{-1}\mathbf{m}^{\top}\right)\right]
\end{align}
where $\mathbf{G} \in \mathbb{R}^{d \times k}$ and $\mathbf{H} \in \mathbb{R}^{n \times k}$ are independent random matrices with i.i.d. standard normal elements, and $\mathbf{Q}, \hat{\mathbf{Q}}, \mathbf{V}, \hat{\mathbf{V}}$ are given by the fixed point of the following set of self consistent equations
\begin{align}
    &\mathbf{Q} = \frac{1}{d}\mathbb{E}\bigg[\left(\bbR_{r(.),\hat{\mathbf{V}}^{-1}}\left(\mathbf{G}\hat{\mathbf{Q}}^{1/2}\hat{\mathbf{V}}^{-1}+\mathbf{W}^{*}\hat{\mathbf{m}}^{\top}\hat{\mathbf{V}}^{-1}+\mathbf{W}^{*}\rho^{-1}\mathbf{m}^{\top}\right)-\mathbf{W}^{*}\rho^{-1}\mathbf{m}^{\top}\right)^{\top} \notag \\
    &\hspace{3cm}\left(\bbR_{r(.),\hat{\mathbf{V}}^{-1}}\left(\mathbf{G}\hat{\mathbf{Q}}^{1/2}\hat{\mathbf{V}}^{-1}+\mathbf{W}^{*}\hat{\mathbf{m}}^{\top}\hat{\mathbf{V}}^{-1}+\mathbf{W}^{*}\rho^{-1}\mathbf{m}^{\top}\right)-\mathbf{W}^{*}\rho^{-1}\mathbf{m}^{\top}\right)
    \bigg] \\
    &\hat{\mathbf{Q}} = \frac{1}{d}\mathbb{E}\bigg[\left(\left(\bbR_{\mathcal{L}(\mathbf{Y},.),\mathbf{V}}(.)-Id\right)\left(\mathbf{S}\rho^{-1/2}\mathbf{m}^{\top}+\mathbf{H}\mathbf{Q}^{1/2}\right)\mathbf{V}^{-1}\right)^{\top} \notag \\
    &\hspace{3cm}\left(\left(\bbR_{\mathcal{L}(\mathbf{Y},.),\mathbf{V}}(.)-Id\right)\left(\mathbf{S}\rho^{-1/2}\mathbf{m}^{\top}+\mathbf{H}\mathbf{Q}^{1/2}\right)\mathbf{V}^{-1}\right)\bigg] \\
    &\mathbf{V} = \frac{1}{d}\mathbb{E}\left[\hat{\mathbf{Q}}^{-1/2}\mathbf{G}^{\top}\bbR_{r(.),\hat{\mathbf{V}}^{-1}}\left(\mathbf{G}\hat{\mathbf{Q}}^{1/2}\hat{\mathbf{V}}^{-1}+\mathbf{W}^{*}\hat{\mathbf{m}}^{\top}\hat{\mathbf{V}}^{-1}+\mathbf{W}^{*}\rho^{-1}\mathbf{m}^{\top}\right)\right] \\
    &\hat{\mathbf{V}} =- \frac{1}{d}\mathbb{E}\left[\mathbf{Q}^{-1/2}\mathbf{H}^{\top}\left(\left(\bbR_{\mathcal{L}(\mathbf{Y},.),\mathbf{V}}(.)-Id\right)\left(\mathbf{S}\rho^{-1/2}\mathbf{m}^{\top}+\mathbf{H}\mathbf{Q}^{1/2}\right)\mathbf{V}^{-1}\right)\right]
\end{align}
\end{lemma}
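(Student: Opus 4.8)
The plan is to assemble the three preceding lemmas. Lemma~\ref{lemma:SE_inter} gives, for every finite iteration time $t$, the asymptotic law of the AMP iterates $\bm{u}^{t},\bm{v}^{t}$ through the scalar state evolution equations; Lemma~\ref{lemma:conv_traj} shows that, once the state evolution is initialised at a fixed point, the iteration converges to the fixed point $(\bm{u}^{\star},\bm{v}^{\star})$; and the earlier lemma identifying the AMP fixed point tells us that $\bm{e}(\bm{u}^{\star})=\bm{U}^{*}$ and that the object we want to test against $\phi_{2,n}$, namely $\frac{1}{\sqrt d}\tilde{\bm{X}}\bm{U}^{*}$, is recovered from $\bm{h}(\bm{v}^{\star})$. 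Substituting the fixed-point non-linearities into the state evolution characterisation and then letting $t\to\infty$ yields the claimed identities.

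First I would fix a solution $(\bm{Q},\hat{\bm{Q}},\bm{V},\hat{\bm{V}})$ of the self-consistent system in the statement, which is exactly a fixed point of the state evolution of Lemma~\ref{lemma:SE_inter}. Its existence follows from a compactness and continuity argument: one step of the state evolution recursion is a continuous self-map of a compact convex set of parameters — compactness coming from Lemma~\ref{lemma_compact} together with the operator-norm bounds on $\tilde{\bm{X}}$ and $\bm{S}$, and continuity from the Lipschitz continuity of the update functions $\bm{e},\bm{h}$, a property of Bregman proximity operators already used in Lemma~\ref{lemma:SE_inter} — so a Brouwer argument applies; alternatively, the fixed point is read off directly from the unique minimiser of the convex problem~\eqref{student1}. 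Uniqueness, which would follow from the strong convexity of $r$, is not needed here.

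Then I would run the AMP iteration Eq.~\eqref{eq:AMP1}--\eqref{eq:AMP2} with state-evolution parameters initialised at this fixed point and with $\bm{u}_{0}$ chosen so that $\frac{1}{d}\bm{e}_{0}(\bm{u}_{0})^{\top}\bm{e}_{0}(\bm{u}_{0})\to\bm{Q}$, as permitted by Lemma~\ref{lemma:conv_traj}. For fixed $t$, composing the test function $\phi_{1,n}$ with the Lipschitz non-linearity $\bm{e}$ gives a function that is again (uniformly) pseudo-Lipschitz of order $2$ — the additive term $\bm{W}^{*}\rho^{-1}\bm{m}^{\top}$ is harmless since $\frac{1}{d}\|\bm{W}^{*}\rho^{-1}\bm{m}^{\top}\|_{F}^{2}$ is bounded with high probability — so Lemma~\ref{lemma:SE_inter}, evaluated at the now $t$-independent fixed point, yields $\phi_{1,n}(\bm{e}(\bm{u}^{t}))\stackrel{P}{\simeq}\mathbb{E}\!\left[\phi_{1,n}(\cdot)\right]$ with the expectation argument obtained by applying $\bm{e}$ to $\bm{G}\hat{\bm{Q}}^{1/2}$, which is precisely the expression displayed in the statement; the same manipulation with $\phi_{2,n}$ and $\bm{h}$ reproduces the claim for $\frac{1}{\sqrt d}\tilde{\bm{X}}\bm{U}^{*}$. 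Letting $t\to\infty$, Lemma~\ref{lemma:conv_traj} gives $\frac{1}{\sqrt d}\|\bm{u}^{t}-\bm{u}^{\star}\|_{F}\to 0$ and $\frac{1}{\sqrt d}\|\bm{v}^{t}-\bm{v}^{\star}\|_{F}\to 0$, so that the order-$2$ pseudo-Lipschitz bound together with the a priori norm estimates of Lemma~\ref{lemma_compact} transfers the left-hand sides to $\phi_{1,n}(\bm{U}^{*})$ and $\phi_{2,n}(\frac{1}{\sqrt d}\tilde{\bm{X}}\bm{U}^{*})$, while the right-hand sides are already independent of $t$ — this is the assertion. The main obstacle is exactly this exchange of the limits $t\to\infty$ and $d\to\infty$: one must check that the trajectory convergence of Lemma~\ref{lemma:conv_traj} holds in scaled Frobenius norm and with uniform control of second moments, which is precisely what is needed to push it through an order-$2$ pseudo-Lipschitz test function, and this is where the strong convexity of the regulariser and the compactness estimates of Lemma~\ref{lemma_compact} play the decisive role.
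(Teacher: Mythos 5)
Your proposal is correct and follows essentially the same route as the paper: the paper's proof is a one-line remark that the result follows by combining the preceding lemmas along the lines of Theorem 1.5 of Bayati--Montanari, which is exactly what you do — run the AMP of Lemma 2 with state evolution initialised at the fixed point, apply Lemma \ref{lemma:SE_inter} at each finite $t$ after composing the test functions with the Lipschitz non-linearities, and use Lemma \ref{lemma:conv_traj} together with the order-2 pseudo-Lipschitz bounds and the norm estimates of Lemma \ref{lemma_compact} to justify the interchange of the limits $t\to\infty$ and $d\to\infty$. Your discussion of existence of the state-evolution fixed point and of the limit exchange is in fact more detailed than what the paper records.
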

\begin{proof}
    Combining the results of the previous lemmas, this proof is close to that of Theorem 1.5 in \cite{bayati2011lasso}.
\end{proof}
Returning to the optimisation problem on $\mathbf{m}, \hat{\mathbf{m}}$ in Eq.~\eqref{eq:AMP_target_inv}, the solution $\bm{U}^{*}$, at any dimension, verifies the zero gradient conditions on $\bm{m}, \hat{\bm{m}}$:
\begin{align}
    \label{eq:subgrad_Lip}
    &\partial \hat{\bm{m}} = 0 \iff (\bm{U^{*}})^{\top}\bm{W}^{*} = 0 \\
    &\partial \bm{m} = 0 \iff \bm{m}\rho^{-1/2}\bm{S}^{\top}\partial\mathcal{L}\left(\phi_{\rm out}\left(\sqrt{\rho}\bm{s}\right), \bm{S}\rho^{-1/2}\bm{m}^{\top}+\frac{1}{\sqrt{d}}\tilde{\bm{X}}\bm{U}\right) \notag \\
    &\hspace{2.5cm}+\rho^{-1}(\bm{W}^{*})^{\top}\partial r(\left(\bm{W}^{*}\rho^{-1}\bm{m}^{\top}+\bm{U}\right)) = 0
\end{align}
Using Lemma \ref{lemma:asy_inter} with the assumption that the gradients of $\mathcal{L}, r$ are pseudo-Lipschitz, we obtain for $\mathbf{m}$
\begin{align}
     &\frac{1}{d}\mathbb{E}\left[\left( \bbR_{r(.),\hat{\bm{V}}^{-1}}\left(\bm{G}\hat{\bm{Q}}^{1/2}\hat{\bm{V}}^{-1}+\bm{W}^{*}\hat{\bm{m}}^{\top}\hat{\bm{V}}^{-1}+\bm{W}^{*}\rho^{-1}\bm{m}^{\top}\right)-\bm{W}^{*}\rho^{-1}\bm{m}^{\top}\right)^{\top}\bm{W}^{*} \right]= 0 \\
     &\iff \bm{m} = \frac{1}{d}\mathbb{E}\left[(\bm{W}^{*})^{\top}\bbR_{r(.),\hat{\bm{V}}^{-1}}\left(\bm{G}\hat{\bm{Q}}^{1/2}\hat{\bm{V}}^{-1}+\bm{W}^{*}\hat{\bm{m}}^{\top}\hat{\bm{V}}^{-1}+\bm{W}^{*}\rho^{-1}\bm{m}^{\top}\right)\right]
\end{align}
 and for $\hat{\mathbf{m}}$
 \begin{align}
     &\frac{1}{d}\mathbb{E}\bigg[\bm{m}\rho^{-1/2}\bm{S}^{\top}\partial\mathcal{L}\left(\phi_{\rm out}\left(\sqrt{\rho}\mathbf{s}\right), \bbR_{\mathcal{L}(\bm{Y},.),\bm{V}}(\bm{S}\rho^{-1/2}\bm{m}^{\top}+\bm{H}\hat{\bm{Q}}^{1/2})\right)\\
    &\hspace{1cm}+\rho^{-1}(\bm{W}^{*})^{\top}\partial r\left(\left(\bbR_{r(.),\hat{\bm{V}}^{-1}}\left(\bm{G}\hat{\bm{Q}}^{1/2}\hat{\bm{V}}^{-1}+\bm{W}^{*}\hat{\bm{m}}^{\top}\hat{\bm{V}}^{-1}+\bm{W}^{*}\rho^{-1}\bm{m}^{\top}\right)\right)\right)\bigg] = 0
 \end{align}
Using the definition of D-resolvents, this is equivalent to 
 \begin{align}
     &\frac{1}{d}\mathbb{E}\bigg[\mathbf{m}\rho^{-1/2}\bm{S}^{\top}\left(Id-\bbR_{\mathcal{L}(\bm{Y},.),\bm{V}}\left(.\right)\right)\left(\bm{S}\rho^{-1/2}\bm{m}^{\top}+\bm{H}\hat{\bm{Q}}^{1/2}\right)\bm{V}^{-1}\\
    &+\rho^{-1}(\bm{W}^{*})^{\top}\bigg(Id-\bbR_{r(.),\hat{\bm{V}}^{-1}}\left(.\right)\bigg)\left(\bm{G}\hat{\bm{Q}}^{1/2}\hat{\bm{V}}^{-1}+\bm{W}^{*}\hat{\bm{m}}^{\top}\hat{\bm{V}}^{-1}+\bm{W}^{*}\rho^{-1}\bm{m}^{\top}\right)\hat{\bm{V}}\bigg] = 0
 \end{align}
 which simplifies to 
 \begin{align}
     \hat{\bm{m}}^{\top} = -\frac{1}{d}\mathbb{E}\bigg[\bm{m}\rho^{-1/2}\bm{S}^{\top}\left(Id-\bbR_{\mathcal{L}(\bm{Y},.),\bm{V}}\left(.\right)\right)\left(\bm{S}\rho^{-1/2}\bm{m}^{\top}+\bm{H}\hat{\bm{Q}}^{1/2}\right)\bm{V}^{-1}\bigg]
 \end{align}
 which brings us to the following set of six self consistent equations
 \begin{align}
    \label{eq:SE_final}
    &\mathbf{Q} = \frac{1}{d}\mathbb{E}\bigg[\left(\bbR_{r(.),\hat{\bm{V}}^{-1}}\left(\bm{G}\hat{\bm{Q}}^{1/2}\hat{\bm{V}}^{-1}+\bm{W}^{*}\hat{\bm{m}}^{\top}\hat{\bm{V}}^{-1}+\bm{W}^{*}\rho^{-1}\bm{m}^{\top}\right)-\bm{W}^{*}\rho^{-1}\bm{m}^{\top}\right)^{\top} \notag \\
    &\hspace{3cm}\left(\bbR_{r(.),\hat{\bm{V}}^{-1}}\left(\bm{G}\hat{\bm{Q}}^{1/2}\hat{\bm{V}}^{-1}+\bm{W}^{*}\hat{\bm{m}}^{\top}\hat{\bm{V}}^{-1}+\bm{W}^{*}\rho^{-1}\bm{m}^{\top}\right)-\bm{W}^{*}\rho^{-1}\bm{m}^{\top}\right)
    \bigg] \\
    &\hat{\bm{Q}} = \frac{1}{d}\mathbb{E}\bigg[\left(\left(\bbR_{\mathcal{L}(\bm{Y},.),\bm{V}}(.)-Id\right)\left(\bm{S}\rho^{-1/2}\bm{m}^{\top}+\bm{H}\mathbf{Q}^{1/2}\right)\bm{V}^{-1}\right)^{\top} \notag \\
    &\hspace{3cm}\left(\left(\bbR_{\mathcal{L}(\bm{Y},.),\bm{V}}(.)-Id\right)\left(\bm{S}\rho^{-1/2}\bm{m}^{\top}+\bm{H}\bm{Q}^{1/2}\right)\bm{V}^{-1}\right)\bigg] \\
    &\bm{V} = \frac{1}{d}\mathbb{E}\left[\hat{\bm{Q}}^{-1/2}\bm{G}^{\top}\bbR_{r(.),\hat{\bm{V}}^{-1}}\left(\bm{G}\hat{\bm{Q}}^{1/2}\hat{\bm{V}}^{-1}+\bm{W}^{*}\hat{\bm{m}}^{\top}\hat{\bm{V}}^{-1}+\bm{W}^{*}\rho^{-1}\bm{m}^{\top}\right)\right] \\
    &\hat{\bm{V}} =- \frac{1}{d}\mathbb{E}\left[\bm{Q}^{-1/2}\bm{H}^{\top}\left(\left(\bbR_{\mathcal{L}(\bm{Y},.),\bm{V}}(.)-Id\right)\left(\bm{S}\rho^{-1/2}\bm{m}^{\top}+\bm{H}\bm{Q}^{1/2}\right)\bm{V}^{-1}\right)\right] \\
    &\bm{m} = \frac{1}{d}\mathbb{E}\left[(\bm{W}^{*})^{\top}\bbR_{r(.),\hat{\bm{V}}^{-1}}\left(\bm{G}\hat{\bm{Q}}^{1/2}\hat{\bm{V}}^{-1}+\bm{W}^{*}\hat{\bm{m}}^{\top}\hat{\bm{V}}^{-1}+\bm{W}^{*}\rho^{-1}\bm{m}^{\top}\right)\right] \\
    &\hat{\bm{m}}^{\top} = -\frac{1}{d}\mathbb{E}\bigg[\bm{m}\rho^{-1/2}\bm{S}^{\top}\left(\bm{I}-\bbR_{\mathcal{L}(\bm{Y},.),\bm{V}}\left(.\right)\right)\left(\bm{S}\rho^{-1/2}\bm{m}^{\top}+\bm{H}\hat{\bm{Q}}^{1/2}\right)\bm{V}^{-1}\bigg]
\end{align}
These equations then characterise the asymptotic properties of the quantities $\hat{\bm{U}}$ and $\tilde{\bm{X}}\hat{\bm{U}}/{\sqrt{d}}$. The properties of $\hat{\bm{W}}$ and $\bm{X}\hat{\bm{W}}/{\sqrt{d}}$ are then obtained by using the definition of $\bm{U}$ in terms of orthogonal decompositions. Note that 
To match these equations with the replica ones, we first need to assume the loss and cost functions are separable. The proximal operators are then separable as well across lines of the input matrices. All arguments then have i.i.d. lines (Gaussian matrices with $k \times k$ covariances, or lines of the teacher matrix, which are i.i.d., multiplied with $k \times k$ matrices), and the $1/d$ averages simplify, leaving the aspect ratio in the quantities defined over arguments in $\mathbb{R}^{n \times k}$. The rest of the matching then boils down to identifying the proximal operators with the replica notations, done in Section \ref{appendix:mapping}, and standard Gaussian integration, as done for instance in \cite{aubin2020generalization}, Appendix III.3.
\subsection{Toolbox}
\label{sec:req_back}
In this section, we reproduce part of the appendix of \cite{loureiro2021learning} for completeness, in order to give an overview of the main concepts and tools on approximate message passing algorithms which will be required for the proof.\vspace{0.1cm}
\paragraph{Notations ---}
For a given function $\bphi\colon \mathbb{R}^{d \times k}\to \mathbb{R}^{n \times k}$, we write :
\begin{equation}
    \bphi(\bX) = \begin{bmatrix}
    \bphi^{1}(\bX) \\
    \vdots \\
    \bphi^{d}(\bX)\end{bmatrix} \in \mathbb{R}^{d \times k}
\end{equation}
where each $\bphi^{i} \colon \mathbb{R}^{d \times k} \to \mathbb{R}^{k}$. We then write the $k \times k$ Jacobian 
\begin{equation}
\label{eq:jacob}
    \frac{\partial \bphi^{i}}{\partial \bX_{j}}(\bX) = \begin{bmatrix}\frac{\partial \phi^{i}_1(\bX)}{\partial X_{j1}} & \cdots & \frac{\partial \phi^{i}_1(\bX)}{\partial X_{jk}} \\
    \vdots&\ddots&\vdots \\
    \frac{\partial \phi^{i}_k(\bX)}{\partial X_{j1}} & \cdots & \frac{\partial \phi^{i}_k(\bX)}{\partial X_{jk}}
    \end{bmatrix} \in \mathbb{R}^{k \times k}
\end{equation}
For a given matrix $\bQ\in \mathbb{R}^{k \times k}$, we write $\bZ \in \mathbb{R}^{n \times k} \sim \mathcal{N}(\boldsymbol 0,\bQ\otimes \bI_{n})$ to denote that the lines of $\bZ$ are sampled i.i.d.~from $\mathcal{N}(\boldsymbol 0,\bQ)$. Note that this is equivalent to saying that $\bZ = \tilde{\bZ}\bQ^{1/2}$ where $\tilde{\bZ} \in \mathbb{R}^{n \times k}$ is an i.i.d.~standard normal random matrix. The notation $\stackrel{\rm P}\simeq$ denotes convergence in probability.
We start with some definitions that commonly appear in the approximate message-passing literature, see e.g. \cite{bayati2011dynamics,javanmard2013state}.
The main regularity class of functions we will use is that of pseudo-Lipschitz functions, which roughly amounts to functions with polynomially bounded first derivatives. We include the required scaling w.r.t.~the dimensions in the definition for convenience.
\begin{definition}[Pseudo-Lipschitz function]
For $K,k \in \mathbb{N}^{*}$ and any $n,d \in \mathbb{N}^{*}$, a function $\bphi \colon \mathbb{R}^{d \times k} \to \mathbb{R}^{n \times k}$ is called a \emph{pseudo-Lipschitz of order $K$} if there exists a constant $L(K,k)$ such that for any $\bX,\bY \in \mathbb{R}^{d \times k}$, 
\begin{equation}
    \frac{\norm{\bphi(\bX)-\bphi(\bY)}_{\rm F}}{\sqrt{n}} \leqslant L \left(1+\left(\frac{\norm{\bX}_{\rm F}}{\sqrt{d}}\right)^{K-1}+\left(\frac{\norm{\bY}_{\rm F}}{\sqrt{d}}\right)^{K-1}\right)\frac{\norm{\bX-\bY}_{\rm F}}{\sqrt{d}}
\end{equation}
where $\norm{\bullet }_{\rm F}$ denotes the Frobenius norm.
Since $k$ will be kept finite, it can be absorbed in any of the constants.
\end{definition}
For example, the function $f:\mathbb{R}^{d \times k}\to \mathbb{R}, \bX \mapsto \norm{\bX}_{F}^{2}/d$ is pseudo-Lipshitz of order 2.

\paragraph{Moreau envelopes and Bregman proximal operators ---} In our proof, we will also frequently use the notions of Moreau envelopes and proximal operators, see e.g. \cite{parikh2014proximal,bauschke2011convex}. These elements of convex analysis are often encountered in recent works on high-dimensional asymptotics of convex problems, and more detailed analysis of their properties can be found for example in \cite{thrampoulidis2018precise,loureiro2021learning}. For the sake of brevity, we will only sketch the main properties of such mathematical objects, referring to the cited literature for further details. In this proof, we will mainly use proximal operators acting on sets of real matrices endowed with their canonical scalar product. Furthermore, proximals will be defined with matrix valued parameters in the following way: for a given convex function $f \colon\mathbb{R}^{d \times k}\to\mathbb R$, a given matrix $\bX \in \mathbb{R}^{d \times k}$ and a given symmetric positive definite matrix $\bV \in \mathbb{R}^{k \times k}$ with bounded spectral norm, we will consider operators of the type
\begin{equation}
    \underset{\bT \in \mathbb{R}^{d \times k}}{\arg\min} \left\{f(\bT)+\frac{1}{2}\mathrm{tr}\left((\bT-\bX)\bV^{-1}(\bT-\bX)^{\top}\right)\right\}
\end{equation}
This operator can either be written as a standard proximal operator by factoring the matrix $\bV^{-1}$ in the arguments of the trace:
\begin{equation}
    \Prox_{f(\bullet\bV^{1/2})}(\bX\bV^{-1/2})\bV^{1/2} \in \mathbb{R}^{d \times k}
\end{equation}
or as a Bregman proximal operator \cite{bauschke2003bregman} defined with the Bregman distance induced by the strictly convex, coercive function (for positive definite $\bV$)
\begin{equation}
\label{breg_dist}
    \bX \mapsto \frac{1}{2}\mathrm{tr}(\bX\bV^{-1}\bX^\top)
\end{equation}
which justifies the use of the Bregman resolvent 
\begin{equation}
\label{bregman_res}
   \underset{\bT \in \mathbb{R}^{d \times k}}{\arg\min} \left\{f(\bT)+\frac{1}{2}\mathrm{tr}\left((\bT-\bX)\bV^{-1}(\bT-\bX)^{\top}\right)\right\}=\left(\bm{I}+\partial f(\bullet)\bV\right)^{-1}(\bX)
\end{equation}
Many of the usual or similar properties to that of standard proximal operators (i.e.~firm non-expansiveness, link with Moreau/Bregman envelopes,\dots) hold for Bregman proximal operators defined with the function \eqref{breg_dist}, see e.g. \cite{bauschke2003bregman,bauschke2006joint}. In particular, we will be using the equivalent notion to firmly nonexpansive operators for Bregman proximity operators, called $\emph{D-firm}$ operators. Consider the Bregman proximal defined with a differentiable, strictly convex, coercive function $g : \mathcal{X} \to \mathbb{R}$, where $\mathcal{X}$ is a given input Hilbert space. Let $T$ be the associated Bregman proximal of a given convex function $f : \mathcal{X} \to \mathbb{R}$, i.e., for any $\mathbf{x} \in \mathcal{X}$
\begin{equation}
    T(\mathbf{x}) = \underset{\mathbf{y} \in \mathcal{X}}{\arg\min} \left\{f(\mathbf{x})+D_{g}(\mathbf{x},\mathbf{y})\right\}
\end{equation}
Then $T$ is \emph{D-firm}, meaning it verifies
\begin{equation}
\label{eq:D-firm}
    \langle T\bx-T\by, \nabla g(T\bx)-\nabla g(T\by) \rangle \leqslant  \langle T\bx-T\by, \nabla g(\bx)-\nabla g(\by) \rangle
\end{equation}
for any $\mathbf{x},\mathbf{y}$ in $\mathcal{X}$.

\paragraph{Approximate message-passing ---}
Approximate message-passing algorithms are a statistical physics inspired family of iterations which can be used to solve high dimensional inference problems \cite{zdeborova2016statistical}. One of the central objects in such algorithms are the so called \emph{state evolution equations}, a low-dimensional recursion equations which allow to exactly compute the high dimensional distribution of the iterates of the sequence.
In this proof we will use a specific form of matrix-valued approximate message-passing iteration with non-separable non-linearities. In its full generality, the validity of the state evolution equations in this case is an extension of the works of \cite{javanmard2013state} included in \cite{gerbelot2021graph}. Consider a sequence Gaussian matrices $\bA(n) \in \mathbb{R}^{n\times d}$ with i.i.d.~Gaussian entries, $A_{ij}(n)\sim\mathcal{N}(0,1/d)$. For each $n,d \in \mathbb{N}$, consider two sequences of pseudo-Lipschitz functions
\begin{equation}
\{\bh_{t} : \mathbb{R}^{n \times k}\to \mathbb{R}^{n \times k}\}_{t \in \mathbb{N}}\qquad \{\be_{t} : \mathbb{R}^{d \times k}\to \mathbb{R}^{d \times k}\}_{t \in \mathbb{N}}
\end{equation}
initialised on $\bu^{0} \in \mathbb{R}^{d\times k}$ in such a way that the limit
\begin{equation}
    \lim_{d \to \infty}\frac{1}{d}\norm{\be_{0}(\bu^{0})^\top\be_{0}(\bu^{0})}_{\rm F}
\end{equation}
exists, and recursively define:
\begin{align}
\label{canon_AMP}
	&\hspace{1cm} \bu^{t+1} = \bA^{\top}\bh_{t}(\bv^{t})-\be_{t}(\bu^{t})\langle \bh_{t}'\rangle^\top \\
	&\hspace{1cm} \bv^{t} = \bA\be_{t}(\bu^{t})-\bh_{t-1}(\bv^{t-1})\langle \be_{t}'\rangle^\top
\end{align}
where the dimension of the iterates are $\bu^{t} \in \mathbb{R}^{d \times k}$ and $\bv^{t} \in \mathbb{R}^{n \times k}$. The terms in brackets are defined as:
\begin{equation}
\label{eq:mat_ons}
 \langle \bh_{t}' \rangle = \frac{1}{d}\sum_{i=1}^{n} \frac{\partial \bh_{t}^{i}}{\partial \bv_{i}}(\bv^{t})\in \mathbb{R}^{k \times k} \quad \langle \be_{t}' \rangle = \frac{1}{d}\sum_{i=1}^{d} \frac{\partial \be_{t}^{i}}{\partial \bu_{i}}(\bu^{t}) \in \mathbb{R}^{k \times k}
\end{equation}
We define now the \emph{state evolution recursion} on two sequences of matrices $\{\bQ_{r,s}\}_{s,r\geqslant0}$ and $\{\hat{\bQ}_{r,s}\}_{s,r\geqslant1}$ initialised with $\bQ_{0,0} = \lim_{d \to \infty}\frac{1}{d}\be_{0}(\bu^{0})^\top \be_{0}(\bu^{0})$:
\begin{align}
   &\bQ_{t+1,s} = \bQ_{s,t+1} = \lim_{d \to \infty} \frac{1}{d}\mathbb{E}\left[\be_{s}(\hat{\bZ}^{s})^{\top}\be_{t+1}(\hat{\bZ}^{t+1})\right] \in \mathbb{R}^{k \times k} \\
    &\hat{\bQ}_{t+1,s+1} = \hat{\bQ}_{s+1,t+1} = \lim_{d \to \infty} \frac{1}{d}\mathbb{E}\left[\bh_{s}(\bZ^{s})^{\top}\bh_{t}(\bZ^{t})\right] \in \mathbb{R}^{k \times k}
\end{align}
where $(\bZ^{0},\dots,\bZ^{t-1}) \sim \mathcal{N}(\boldsymbol 0,\{\bQ_{r,s}\}_{0\leqslant r,s \leqslant t-1} \otimes \bI_{n}),(\hat{\bZ}^{1},\dots,\hat{\bZ}^{t}) \sim \mathcal{N}(\boldsymbol 0,\{\bhQ_{r,s}\}_{1\leqslant r,s \leqslant t} \otimes \bI_{d})$. Then the following holds
\begin{theorem}
\label{th:SE}
In the setting of the previous paragraph, for any sequence of pseudo-Lipschitz functions $\phi_{n}:(\mathbb{R}^{n \times K}\times \mathbb{R}^{d \times k})^{t} \to \mathbb{R}$, for $n,d \to +\infty$:
\begin{equation}
\phi_{n}(\bu^{0},\bv^{0},\bu^{1},\bv^{1},\dots,\bv^{t-1},\bu^{t}) \stackrel{\rm P}\simeq \mathbb{E}\left[\phi_{n}\left(\bu^{0},\bZ^{0},\hat{\bZ}^{1},\bZ^{1},\dots,\bZ^{t-1},\hat{\bZ}^{t}\right)\right]
\end{equation}
where $(\bZ^{0},\dots,\bZ^{t-1}) \sim \mathcal{N}(\boldsymbol 0,\{\bQ_{r,s}\}_{0\leqslant r,s \leqslant t-1} \otimes \bI_{n}),(\hat{\bZ}^{1},\dots,\hat{\bZ}^{t}) \sim \mathcal{N}(\boldsymbol 0,\{\bhQ_{r,s}\}_{1\leqslant r,s \leqslant t} \otimes \bI_{n})$.
\end{theorem}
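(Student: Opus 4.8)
The plan is to recognise that the iteration \eqref{canon_AMP} is exactly an instance of the matrix-valued, non-separable approximate message-passing recursion whose state evolution is established in \cite{gerbelot2021graph}, extending \cite{javanmard2013state} and the conditioning technique of \cite{bayati2011dynamics}. Accordingly, I would structure the proof as an induction on the iteration time $t$ using Bolthausen's conditioning method: at step $t$ one conditions the Gaussian matrix $\bA$ on the $\sigma$-algebra generated by all iterates $\bu^{0},\bv^{0},\dots,\bu^{t}$ produced so far, which amounts to conditioning on the finite family of linear constraints $\bA\,\be_{s}(\bu^{s})=\bv^{s}+\bh_{s-1}(\bv^{s-1})\langle\be_{s}'\rangle^{\top}$ and $\bA^{\top}\bh_{s}(\bv^{s})=\bu^{s+1}+\be_{s}(\bu^{s})\langle\bh_{s}'\rangle^{\top}$ for $s\leqslant t$.

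The key algebraic step is that a standard Gaussian matrix conditioned on such linear equations decomposes as a deterministic part, expressed through the empirical Gram matrices of the past iterates $\tfrac1d\be_{s}(\bu^{s})^{\top}\be_{r}(\bu^{r})$ and $\tfrac1d\bh_{s}(\bv^{s})^{\top}\bh_{r}(\bv^{r})$, plus an independent Gaussian matrix supported on the orthogonal complement of the span of the past iterates. One then checks that, when this decomposition is substituted into the next update, the deterministic part reproduces precisely the Onsager correction terms $\be_{t}(\bu^{t})\langle\bh_{t}'\rangle^{\top}$ and $\bh_{t-1}(\bv^{t-1})\langle\be_{t}'\rangle^{\top}$, so that the memory contributions cancel and the leading behaviour of $\bu^{t+1},\bv^{t}$ is governed by the fresh Gaussian part with covariance determined by the previous-step Gram matrices. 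Invoking the (uniform) pseudo-Lipschitz property of $\bh_{t},\be_{t}$ together with Gaussian concentration of Lipschitz functions, one shows that these empirical Gram matrices concentrate on the deterministic quantities $\bQ_{r,s},\bhQ_{r,s}$ given by the state-evolution recursion, and closes the induction; the conclusion for a general pseudo-Lipschitz test function $\phi_{n}$ follows by a standard approximation argument.

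To apply this scheme to our iteration, two verifications are needed. First, the non-linearities $\bh_{t}$ and $\be_{t}$ built from the Bregman resolvents of $\mathcal{L}(\bm Y,\cdot)$ and $r(\cdot)$ must be uniformly pseudo-Lipschitz (indeed Lipschitz) with the scaling required by Theorem~\ref{th:SE}; this is exactly the content of the remark preceding Lemma~\ref{lemma:SE_inter}, and follows from the convexity, differentiability and strong-convexity assumptions on $\mathcal{L},r$ together with the firm non-expansiveness ($D$-firmness) of Bregman proximity operators. Second, the matrix-valued Onsager coefficients $\langle\bh_{t}'\rangle,\langle\be_{t}'\rangle\in\mathbb{R}^{k\times k}$ defined in \eqref{eq:mat_ons} must be controlled; since $k$ is held fixed, all moment and concentration bounds go through with $k$-dependent constants, and non-commutativity only enters as additional bookkeeping in the order of matrix products.

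I expect the main obstacle to be the non-separability of the non-linearities. In the separable case the crucial concentration of the empirical Gram matrices is a law of large numbers for sums of (conditionally) i.i.d. terms, whereas for non-separable but pseudo-Lipschitz $\bh_{t},\be_{t}$ one must instead use concentration of measure for Lipschitz functions of the high-dimensional fresh Gaussian vector, and then propagate these concentration estimates through the induction while simultaneously tracking the error incurred by replacing the conditioned matrix by its deterministic-plus-fresh-Gaussian surrogate. Controlling this error uniformly over the finitely many iteration steps, with the matrix-valued Onsager terms, is the technical heart; the detailed argument is carried out in \cite{gerbelot2021graph}, so here it suffices to check that our setup meets its hypotheses.
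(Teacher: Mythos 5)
Your proposal matches the paper's treatment of this statement: Theorem~\ref{th:SE} is not proved in the manuscript but is imported as a known result, its validity for matrix-valued, non-separable AMP being attributed to \cite{javanmard2013state} as extended in \cite{gerbelot2021graph}, exactly as you do when you defer the technical heart to \cite{gerbelot2021graph} and only verify that the iteration (pseudo-Lipschitz non-linearities, fixed $k$, admissible initialisation) fits that framework. Your sketch of the Bolthausen conditioning induction, the cancellation of the memory terms by the matrix Onsager corrections, and Gaussian concentration for the non-separable Gram matrices is the standard argument underlying those references, so your account is consistent with, and no weaker than, what the paper itself provides.
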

\section{AMP implementation: channel and prior updates for \texorpdfstring{$k=3$}{k=3}}
\label{app:amp}

In this appendix we present the expression of the integrals numerically computed for the implementation of Algorithm \ref{alg:amp} in the present case. For a general and detailed derivation of the algorithm see \cite{zdeborova2016statistical, aubin2020}. 

Apart from the update functions $\bm{f}_w ( \gamm ,  \Lamb   )$ and $ \bm{f}_{\text{out}} (\y, \om , \V  )$ in Appendix \ref{app:denoising_fcts}, the approximate message passing algorithm also requires the variance updates. Using the mapping from Appendix \ref{appendix:mapping}, the updates are computed though $(\nc - 1) \times (\nc - 1)$ matrices constructed from the derivatives of the denoising functions:
\begin{subequations}
\begin{equation}
\partial_{\gamm}  \bm{f}_w ( \gamma , \Lamb  ) = \E_{Q_0} [ \w \w^\top   ] -
\bm{f}_w ( \gamm , \Lamb   ) \bm{f}_{w}^{\top} ( \gamm , \Lamb  )    \;,
\end{equation}
\begin{equation}
\label{eq:fout_dev}
\partial_{\om}  \bm{f}_{\text{out}} ( \bm{y}, \bm{\omega}, \bm{V}  )  = \bm{V}^{-1 } \E_{ Q_{\text{out} }} \left[ ( \bm{z} - \bm{\omega}   ) ( \bm{z} - \bm{\omega}   )^{\top}  \right] - \bm{V}^{-1} - \bm{f}_{\text{out}} ( \bm{\gamma}, \bm{\Lambda}  ) \bm{f}_{\text{out}}^{\top} ( \bm{\gamma}, \bm{\Lambda}  ) \;.
\end{equation}
\end{subequations}

\subsection{Channel updates}

Under the mapping from Appendix \ref{appendix:mapping}, for $k = 3$ we have two sets of integrals related to the channel: one when $\y = [0,0]^\top$ and other when $\y = [1,0]^\top$ or $\y = [0,1]^\top$. A simple flip on the variables distinguishes these two latter cases. 

We introduce the notation,
\begin{subequations}
\begin{equation}
 \V^{-1} 
 \equiv \begin{bmatrix}
    \Vi_{11} & \Vi_{12} \\
     \Vi_{21} & \Vi_{22} 
\end{bmatrix} \;,
\end{equation}
as well as the quantities
\begin{subequations}
\begin{equation}
    \Vib \equiv \frac{\Vi_{12} + \Vi_{21}}{2} \;,
\end{equation}
\begin{equation}
   \nu \equiv \frac{\Vi_{11} \Vi_{22} }{ \Vib^2} \;.
\end{equation}
\end{subequations}
Additionally, the standard Gaussian measure is denoted as 
\begin{equation}
    {\cal D} z \equiv \frac{d z }{\sqrt{2 \pi}} e^{ - \frac{1}{2} z^2 }\;,
\end{equation}
and the cumulative distribution function of the standard Gaussian distribution as
\begin{equation}
   \Phi(x) \equiv \int_{-\infty}^{x} {\cal D} z \;.
\end{equation}
\end{subequations}

\paragraph{Case $\y = [0,0]^\top$ ---}

The channel function $ \bm{f}_{\text{out}} (\y, \om , \V  ) $ is obtained through the numerical computation of the following quantities:
\begin{subequations}
\begin{equation}
       \Zout^{(00)} =  \sqrt{ \frac{\pi^2}{ \Vi_{22} } }  \frac{ \omega_1}{ \alpha_{12}} {\cal J}_{0}^{(00)} \;,
\end{equation}
\begin{equation}
    \frac{\partial}{\partial \omega_1 }  \Zout^{(00)} = -     \sqrt{ 2 \pi} \frac{\Vib}{\sqrt{\Vi_{11} \Vi_{22}^2}}  e^{-\frac{1}{2} \alpha_{21}^2}     \Phi (-  \gamma_{12}   )  +  \sqrt{ \frac{\pi^2}{ \Vi_{22} } }  {\cal J}_{1}^{(00)}  \;,
\end{equation}
\begin{equation}
     \frac{\partial}{\partial \omega_2 }  \Zout^{(00)} = -   \sqrt{\frac{2 \pi }{\Vi_{11}}}
     e^{-\frac{1}{2} \alpha_{21}^2}     \Phi (-  \gamma_{12}   ) \;,
\end{equation}
and its derivative $ \partial_{\bm \omega} \bm{f}_{\text{out}} (\y, \om , \V  ) $ by computing 
\begin{equation}
\begin{split}
       \frac{\partial^2}{\partial \omega_{1}^2 }  \Zout^{(00)} =&   \sqrt{ 2 \pi } \frac{\Vib^3}{\Vi_{11} \Vi_{22}^2 }  e^{-\frac{1}{2} \alpha_{21}^2} \left[ \frac{ e^{-\frac{1}{2} \gamma_{12}^2 }}{\sqrt{2\pi}} ( 2 \nu - 1 ) - \frac{\Vib \omega_2}{\sqrt{\Vi_{11}}} ( \nu - 1 )  \Phi (-  \gamma_{12}   ) \right]    \\
       & - \frac{\alpha_{12}^2}{\omega_{1}^2} \Zout^{(00)}  + \sqrt{\frac{\pi^2}{\Vi_{22}}}  \frac{\alpha_{12}}{\omega_1}  {\cal J}_{2}^{(00)} \;,
\end{split}
\end{equation}
\begin{equation}
    \frac{\partial^2}{\partial \omega_{2}^2 }  \Zout^{(00)} =   \sqrt{2 \pi} \frac{\Vib}{\Vi_{11}}  e^{-\frac{1}{2} \alpha_{21}^2} \left[   \frac{e^{-\frac{1}{2} \gamma_{12}^2 }}{\sqrt{2\pi}}    + \frac{\sqrt{\Vi_{11}}}{\omega_2 \Vib} \alpha_{21}^2 \Phi (-  \gamma_{12}   ) \right]     \;,
\end{equation}
\begin{equation}
    \frac{\partial^2}{\partial \omega_1 \omega_2  }  \Zout^{(00)} =   e^{-\frac{1}{2} \left( \alpha_{21}^2 + \gamma_{12}^2 \right)}   \;,
\end{equation}
\end{subequations}
with
\begin{equation}
\label{eq:amp_int_00}
    {\cal J}_{l}^{(00)} \equiv   \int_{-\infty}^{-\alpha_{12}} {\cal D} z \; z^l     \erfc \left( \frac{ - \frac{z \Vib \omega_1  }{ \alpha_{12}} + \Vi_{22} \omega_2 }{\sqrt{2 \Vi_{22}    } }  \right) \;,
\end{equation}
for $l=0,1,2$ and
\begin{subequations}
\begin{equation}
\label{eq:alpha12}
    \alpha_{12}  \equiv \omega_1 \sqrt{\Vi_{11} - \frac{\Vib^2}{\Vi_{22}}   }  \;.
\end{equation}
\begin{equation}
    \gamma_{12} \equiv \frac{\Vi_{11} \omega_1 + \Vib \omega_2}{\sqrt{\Vi_{11}}}  \;.
\end{equation}
\end{subequations}

\paragraph{Case $\y = [1,0]^\top$ ---}

The channel function $ \bm{f}_{\text{out}} (\y, \om , \V  ) $ is obtained through the numerical computation of the following quantities:
\begin{subequations}
\begin{equation}
  \Zout^{(10)} =  \sqrt{\frac{\pi^2}{\Vi_{22}}} \frac{\omega_1}{\alpha_{12}} {\cal J}_{0}^{(10)}\;,
\end{equation}
\begin{equation}
  \frac{\partial}{\partial \omega_1 }\Zout^{(10)} = -  \sqrt{ 2 \pi } \frac{\Vib}{\sqrt{ {\cal S}_\Vi \Vi_{22}^2 }}  
 e^{- \frac{1}{2} (   \Omega^2 - \beta^2 ) } \tilde{\Phi} (-\beta)  +  \sqrt{\frac{\pi^2}{\Vi_{22}}}  {\cal J}_{1}^{(10)}   \;,
\end{equation}
\begin{equation}
        \frac{\partial}{\partial \omega_2 } \Zout^{(10)} = -  \sqrt{\frac{2 \pi }{{\cal S}_\Vi} }  e^{- \frac{1}{2} (   \Omega^2 - \beta^2 ) } \tilde{\Phi} (-\beta) \;,
\end{equation}
and its derivative $ \partial_{\bm \omega} \bm{f}_{\text{out}} (\y, \om , \V  ) $ by computing 
\begin{equation}
\begin{split}
        \frac{\partial^2}{\partial \omega_{1}^2 } \Zout^{(10)} =&  -  \left( \frac{\alpha_{12}}{\omega_1} \right)^2  \Zout^{(10)} +  \sqrt{\frac{\pi^2}{\Vi_{22}}}    {\cal J}_{2}^{(10)}    \\
        & +  \sqrt{ 2 \pi}  \frac{\Vib^3}{{\cal S}_\Vi \Vi_{22}^{2} } \sigma_{12}
         e^{- \frac{1}{2} (   \Omega^2 - \beta^2 ) } \left[
        \frac{e^{- \frac{1}{2}\beta^2  }}{\sqrt{2\pi}}  + \left( \beta -  \sqrt{{\cal S}_\Vi}   
        \left( \left( 1 + \frac{\Vi_{22}}{ \Vib \sigma_{12} } \right) \omega_1       - \frac{\Vi_{22}}{ \Vib \sigma_{12} } \omega_2 \right)\right) \tilde{\Phi}(-\beta) \right] \;,
\end{split}
\end{equation}
\begin{equation}
         \frac{\partial^2}{\partial \omega_{2}^2 } \Zout^{(10)} = -   \sqrt{2 \pi} 
        \frac{  \Vib + \Vi_{22} }{{\cal S}_{\Vi}} 
         e^{- \frac{1}{2} (   \Omega^2 - \beta^2 ) }  \left [
         \frac{e^{- \frac{1}{2}\beta^2  }}{\sqrt{2\pi}} 
         +  \left( \beta -  \sqrt{{\cal S}_\Vi}    \left( \frac{\Vib \omega_1 + \Vi_{22} \omega_2}{\Vib + \Vi_{22}} \right)  \right)   \tilde{\Phi} (-\beta)   \right] \;,
\end{equation}
\begin{equation}
 \frac{\partial^2}{\partial \omega_1 \omega_2 } \Zout^{(10)} =
 -  \sqrt{2 \pi}  \frac{\Vib + \Vi_{11}}{ {\cal S}_\Vi }  e^{- \frac{1}{2} (   \Omega^2 - \beta^2 ) } \left[  \frac{e^{- \frac{1}{2}\beta^2  }}{\sqrt{2\pi}}  - \frac{\Vi_{11} \Vi_{22} - \Vib^2}{{\cal S}_\Vi ( \Vib + \Vi_{11} ) } \sqrt{{\cal S}_\Vi}   (\omega_1 -\omega_2) \tilde{\Phi} (-\beta) \right] \;,
\end{equation}
\end{subequations}
where
\begin{equation}
\label{eq:amp_int_10}
   {\cal J}_{l}^{(10)} \equiv   \int_{- \alpha_{12}}^{\infty} {\cal D}z  \; z^l  
 \erfc\left( \frac{ - \frac{z  \omega_1 ( \Vib + \Vi_{22}  )  }{ \alpha_{12}} + \Vi_{22} (\omega_2 - \omega_1 ) }{\sqrt{2 \Vi_{22}    } }  \right) \;,
\end{equation}
for $l=0,1,2$ with $\alpha_{12}$ given by Eq.~\eqref{eq:alpha12} and
\begin{subequations}
\begin{equation}
\tilde{\Phi}(x) \equiv 1 -  \Phi(x) \;,
\end{equation}
\begin{equation}
    \beta \equiv \frac{ \omega_1 (\Vi_{11}+\Vib) + \omega_2 (\Vi_{22}+\Vib)}{\sqrt{\Vi_{11}+\Vi_{22}+2\Vib}} \;, 
\end{equation}
\begin{equation}
 \Omega^2 \equiv \Vi_{11} \omega_1 + \Vi_{22} \omega_2 + 2 \Vib \omega_1 \omega_2   \;,
\end{equation}
\begin{equation}
   {\cal S}_\Vi   \equiv \Vi_{11} +  \Vi_{22} + 2 \Vib   \;,
\end{equation}
\begin{equation}
    \sigma_{12} \equiv \frac{\Vib^2 - 2 \Vib_{11}  \Vib_{22} -  \Vib_{22} \Vib }{\Vib^2} \;.
\end{equation}
\end{subequations}

If the label vector is $\y = [0,1]^\top$, one just needs to perform the following trivial changes in the equations above for $\y = [1,0]^\top$:
\begin{subequations}
\begin{equation}
    \Vi_{11} \to \Vi_{22}   \;,    
\end{equation}
\begin{equation}
    \Vi_{22} \to \Vi_{11}     \;,    
\end{equation}
\begin{equation}
    \omega_1  \to \omega_2  \;,    
\end{equation}
\begin{equation}
     \omega_2  \to \omega_1  \;,    
\end{equation}
\end{subequations}

Observe that the mapping from Appendix \ref{appendix:mapping} has allowed us to reduce the number of integrals to be numerically computed at each AMP iteration to three, given by Eq.~\eqref{eq:amp_int_00} or Eq.~\eqref{eq:amp_int_10}, depending on the one-hot output representation $\y$. These integrals were solved through the \texttt{integrate.quad} module from \texttt{SciPy} \cite{scipy}. To speed up the integration, we have also used \texttt{Numba} \cite{numba} decorators.

\subsection{Prior updates}
\paragraph{Gaussian prior ---}

Under the mapping of Appendix \ref{appendix:mapping}, the prior partition function is written as 
\begin{equation}
    {\cal Z}_{w} ( \bm{\gamma}, \bm{\Lambda} ) = \int_{\mathbb{R}^{k-1}} \frac{d \bm{w}}{\sqrt{(2\pi)^{k-1}\det(\bm{\Tilde \Sigma})}} 
    \exp\left[  - \frac{1}{2} \bm{w}^{\top} \left(  \bm{\Tilde \Sigma}{}^{-1} +\bm{\Lambda}  \right) \bm{w} + \bm{\gamma}^{\top} \bm{w}   \right]   \;,
\end{equation}
and can be analytically computed, 
\begin{subequations}
\begin{equation}
    {\cal Z}_{w} ( \bm{\gamma}, \bm{\Lambda} ) = \frac{1}{\sqrt{\det(\tilde{\bm{\Sigma}})
    \det (  \tilde{\bm{\Sigma}}{}^{-1} + \bm{\Lambda}  )}}\exp\left[\frac{1}{2}\bm{\gamma}^\top \left( \tilde{\bm{\Sigma}}{}^{-1}+\bm{\Lambda}  \right)^{-1}\bm{\gamma}\right] \;,
\end{equation}
as well as the denoising functions:
\begin{equation}
   \bm{f}_{w} ( \bm{\gamma}, \bm{\Lambda}  ) = \partial_{\bm{\gamma}} \log  {\cal Z}_{w} ( \bm{\gamma}, \bm{\Lambda} ) = \left( \tilde{\bm{\Sigma}}{}^{-1}+\bm{\Lambda} \right)^{-1} \bm{\gamma} \;,
\end{equation}
\begin{equation}
   \partial_{\bm \gamma}\bm{f}_{w} ( \bm{\gamma}, \bm{\Lambda}  ) = \left( \tilde{\bm{\Sigma}}{}^{-1} + \bm{\Lambda} \right)^{-1} \;.
\end{equation}
\end{subequations}
For $k=3$, the reduced covariance matrix is given by
 \begin{equation}
 \tilde{\bm{\Sigma}} = 
 \begin{bmatrix}
2 & 1 \\
1 & 2 
\end{bmatrix} \;.
 \end{equation}

\paragraph{Rademacher prior ---}
Considering $k = 3$, the reduced prior given by Eq.~\eqref{eq:rad_prior_k1} becomes
\begin{equation} 
\label{eq:rad_prior_reduced_}
\begin{split}
      P_{\tilde{w}} ( \tilde{w}_1,  \tilde{w}_2) &=  \frac{1}{2^3} 
  \left[  2 \delta( \tilde{w}_1 ) \delta( \tilde{w}_2 ) +  \delta( \tilde{w}_1 ) \delta( \tilde{w}_2  +2) +  \delta( \tilde{w}_1  +2) \delta( \tilde{w}_2  ) +\delta( \tilde{w}_1  ) \delta( \tilde{w}_2  -2) \right. \\
  &  \left.    +  \delta( \tilde{w}_1   -2) \delta( \tilde{w}_2  )+   \delta( \tilde{w}_1  +2 ) \delta( \tilde{w}_2  +2 )  + \delta( \tilde{w}_1  -2) \delta( \tilde{w}_2  -2)     \right] \;.
\end{split}
\end{equation}
The denoising functions for this case are computed numerically, via Monte Carlo sampling of the distribution given Eq.~\eqref{eq:rad_prior_reduced_}.

For more details, see the \texttt{amp} folder on \href{https://github.com/rodsveiga/mc_perceptron}{GitHub}.

\section{Details on the numerical simulations}
\label{app:numerics}
In this section we provide some more details on the numerical simulations implemented to test our theory for the learning curves of ERM (Figure~\ref{fig:gaussian_prior}). The solution of the convex optimization problem defined in Eq. \eqref{eq:optimization_ERM} can be computed by a standard gradient descent algorithm. We ran simulations using the squared loss and the cross-entropy loss. The simulations for the cross-entropy loss have been implemented using the \texttt{LogisticRegression} module of the \texttt{scikit-learn} package \cite{scikit}. The solution for the square loss is analytical. 
The results from numerical simulations that we show in the figures are averaged over $250$ instances of the problem at dimension $d=1000$.
\bibliographystyle{IEEEtran}
\bibliography{bibtex}
\end{document}